\title{Preference Learning with Response Time: Robust Losses and Guarantees}
\author{%
  Ayush Sawarni\footnotemark[1] \\
  Stanford University\\
  \texttt{ayushsaw@stanford.edu} 
  \And
  Sahasrajit Sarmasarkar\footnotemark[1] \\
  Stanford University\\
  \texttt{sahasras@stanford.edu}
  \And
  Vasilis Syrgkanis \\
  Stanford University \\
  \texttt{vsyrgk@stanford.edu}
}
\date{}  % if you don’t want a date
\begin{document}

\maketitle

% make sure the footnote uses symbols (so the “1” becomes “*”)
\begingroup
  \renewcommand\thefootnote{\fnsymbol{footnote}}
  \footnotetext[1]{Equal contribution.}
\endgroup

\begin{abstract}
This paper investigates the integration of response time data into human preference learning frameworks for more effective reward model elicitation. While binary preference data has become fundamental in fine-tuning foundation models, generative AI systems, and other large-scale models, the valuable temporal information inherent in user decision-making remains largely unexploited. We propose novel methodologies to incorporate response time information alongside binary choice data, leveraging the Evidence Accumulation Drift Diffusion (EZ) model, under which response time is informative of the preference strength. We develop Neyman-orthogonal loss functions that achieve oracle convergence rates for reward model learning, matching the theoretical optimal rates that would be attained if the expected response times for each query were known a priori. Our theoretical analysis demonstrates that for linear reward functions, conventional preference learning suffers from error rates that scale exponentially with reward magnitude. In contrast, our response time-augmented approach reduces this to polynomial scaling, representing a significant improvement in sample efficiency. We extend these guarantees to non-parametric reward function spaces, establishing convergence properties for more complex, realistic reward models. Our extensive set of experiments validate our theoretical findings in the context of preference learning over images.
\end{abstract}

%\sss{Maybe elaborate on the experiments as well maybe after we get some plots.}

% \textred{
% \begin{itemize}
%     \item related work comparison - schwartsman and logistic bandits
% \item appendix - additional experiment 
% \item Add limitations and conclusion
% \item github update with experiments - 
% \item change title
% \end{itemize}
% }

\section{Introduction}

Human preference feedback has emerged as a crucial resource for training and aligning machine learning models with human values and intentions. Human preference learning systems—prevalent in domains from recommender systems to robotics and natural language processing—typically solicit binary comparisons between two options and use the chosen option to infer a user’s underlying utility function \cite{Bradley1952Rank}. Such binary feedback is popular because it is simple, intuitive, and imposes a low cognitive load on users \cite{Bai2022RLHF}. This paradigm underpins a wide range of applications, including tuning recommendation engines \cite{xue2023prefrec}, teaching robots personalized objectives \cite{hejna2023few, wang2022skill}, fine‑tuning large language models via reinforcement learning from human feedback (RLHF) \cite{Bai2022RLHF, rafailov2024directpreferenceoptimizationlanguage, ziegler2019fine, ouyang2022training}, and vision model \cite{wallace2024diffusion, wu2023human}. However, a single sample of binary choice conveys very limited information—it tells us which option is preferred but not how strongly it is preferred \cite{konovalov2019revealed}. As a result, learning reward functions or preference models from pairwise choices can be sample‐inefficient, often requiring many queries to accurately capture nuanced human preferences. This issue is exacerbated in scenarios where one outcome consistently dominates, yielding nearly deterministic choices and providing minimal insight into the degree of preference \cite{konovalov2019revealed, Clithero2018JEB, Clithero2018JoEP}. In practice, reward models are often learned either \emph{implicitly}—by integrating the reward estimation directly into the policy optimization, as in Direct Preference Optimization (DPO) \cite{rafailov2024directpreferenceoptimizationlanguage}, which minimizes a reward‐learning objective by plugging in a closed‐form expression of the reward in terms of the policy—or \emph{explicitly}—by first fitting a separate reward function to preference data and then using that function to fine‐tune the policy \cite{kaufmann2024surveyreinforcementlearninghuman,Bai2022RLHF,tajwar2024preferencefinetuningllmsleverage}. Researchers have explored augmenting binary feedback with more expressive inputs like numerical ratings or confidence scores \cite{Bai2022RLHF, wilde2022learning}, but such explicit feedback increases user effort and interface complexity \cite{koppol2021interaction, jiang2024survey}.

One promising implicit signal is the time a human takes to make each choice. Response times are essentially free to collect and do not disrupt the user’s experience \cite{Clithero2018JoEP, AlosFerrer2021Time}. Moreover, a rich literature in psychology and neuroscience suggests a strong link between decision response times and the strength of underlying preferences \cite{konovalov2019revealed, cc172cdd-52b6-31fe-a3df-5fabe3790023}. In particular, faster decisions tend to indicate clearer or stronger preferences, whereas slower decisions often suggest the person found the options nearly equally preferable \cite{konovalov2019revealed, AlosFerrer2021Time}. This inverse relationship between decision time and preference strength has been documented in cognitive experiments and is quantitatively modeled by drift‑diffusion models (DDMs) of decision making \cite{Wagenmakers2007EZ,stimulusstrengthdf}. DDMs interpret binary choices as the result of an evidence accumulation process: when one option has a much higher subjective value, evidence accumulates quickly toward that option, leading to a fast and confident choice; conversely, if the options are nearly tied, the accumulation is slow, resulting in a longer deliberation time \cite{Wagenmakers2007EZ,berlinghieri2023measuring,stimulusstrengthdf}. However, since DDMs do not admit a tractable differentiable solution for inference, researchers have proposed differentiable approximations to the response time likelihood to make them suitable for Gaussian process regression \cite{responsetimegaussianprocess,semiparametrichighdimensional}.

%Our works differ from these lines of work in the sense that these 

Recent work in preference‐based linear bandits has shown that integrating response times with choices—using a lightweight EZ‐diffusion model from cognitive psychology—leads to substantially more data‐efficient learning compared to choice‐only approaches \cite{Li2024Enhancing}. These studies confirm that response times can serve as a powerful additional feedback signal, providing valuable information that boosts sample efficiency in inferring human preferences.
While the prior works have broken important new ground, their scope has been somewhat limited. \cite{Li2024Enhancing} focuses on an active preference‐based linear bandit in which the algorithm controls the query distribution and assumes a linear reward function. This setup does not cover more general human‐in‐the‐loop settings—such as training large generative models—where preference data are often collected passively and the true reward function can be highly complex. In many state‐of‐the‐art applications (e.g., RLHF for LLMs or alignment of diffusion models), feedback is gathered on model outputs drawn from a broad distribution, rather than by actively choosing each query, and the reward model is usually nonlinear, implemented as a deep neural network.

Our technical innovation centers on a Neyman‑orthogonal loss function that integrates binary choice outcomes with response time observations. Neyman orthogonality ensures that small errors in estimating the response‐time model do not bias the reward learning, yielding fast convergence rates. We prove that our method achieves the same asymptotic rate as an oracle that knows the true expected response time for every query, and we demonstrate in experiments on image‐preference benchmarks that it significantly outperforms preference‐only baselines. 

% \textbf{Comparison with }

\textbf{Our Contributions:}
\begin{itemize}
  \item We propose a \emph{Neyman-orthogonal loss} that jointly leverages response-time signals (via the EZ diffusion model) and binary preference data to estimate reward functions. This construction (i) enables integration of cognitive DDM insights into standard machine-learning-from-human-feedback frameworks, and (ii) yields significant empirical and theoretical improvements over classical MLE-based log-loss estimator that uses only preference data.
  \item For \emph{linear} reward models, we derive asymptotic variance bounds showing that the estimation error of the log-loss estimator scales \emph{exponentially} with the norm of the true parameter, whereas our orthogonal estimator’s variance grows only \emph{linearly}. Moreover, unlike the active-query method of \cite{Li2024Enhancing}, which requires carefully designed query selection, our estimator achieves better variance scaling under passive, i.i.d.\ queries drawn from an unknown distribution.
  \item We extend our analysis to \emph{nonparametric} reward classes (e.g.\ RKHS and neural networks), proving finite-sample convergence bounds in which errors in the response-time estimation enter only as \emph{second-order} terms. 
  \item We validate our framework on synthetic and semi-synthetic benchmarks. Experiments show that our orthogonal loss consistently outperforms both the log-loss and non-orthogonal alternatives in estimating linear reward functions, three-layer neural network models, and an image-based preference learning task.
\end{itemize}

\paragraph{Other Related Work} In preference-based learning, the \emph{dueling-bandit} is a common paradigm, where an agent selects a pair of actions each round and observes a binary preference signal \cite{karmedduelingbandit,beatmeanbandit,humanprefdueling,duelingbanditsurvey}. Treating each action pair as a composite arm yields a closely related \emph{logistic-bandit} formulation, which has been studied extensively for cumulative regret guarantees \cite{faury2020improvedoptimisticalgorithmslogistic,pmlr-v130-abeille21a,faury2022jointlyefficientoptimalalgorithms,NEURIPS2023_5ef04392,10.5555/3327345.3327373,sawarni2024generalizedlinearbanditslimited}. 

Our setting departs from these formulations along two axes: (i) \emph{sampling}—we analyze a supervised setting with i.i.d.\ queries rather than adaptive selection; and (ii) \emph{feedback}—we leverage response times in addition to noisy pairwise comparisons. While \cite{Li2024Enhancing} also incorporates response-time feedback, their focus is best-arm identification (BAI), which is typically easier than the supervised reward-estimation problem we study. Moreover, substituting our orthogonal loss into their sequential-elimination algorithm improves the probability of best arm identification (see \cref{sec:additional_best_arm_identification}).

Recent work by \cite{responsetimegaussianprocess} uses a GP prior on rewards and develops a variational approximate Bayesian inference approach using moment matching to refine the posterior to incorporate response time information. In particular, the paper suffers from the standard issues with GP regression and does not scale to high-dimensional (curse of dimensionality in GP regression \cite{KrauthBonillaCutajarFilippone2017,surveygaussiangp,inabilitygaussianprocessregression}) and is not suitable for large-scale machine learning models. Our setting is motivated by large-scale “learning from human feedback” pipelines—e.g., fine‑tuning large language or vision models—where inputs live in very high dimensions and downstream tasks require a pointwise reward estimate. We completely bypass the use of variational approximations and moment-matching techniques as our method avoids modeling the full likelihood altogether.

\section{Notations and Preliminaries}
\paragraph{Preference‐Learning Setup}
We adopt the standard preference‐learning framework. On each trial, a user is presented with two alternatives, \(X^{1}\) and \(X^{2}\), drawn i.i.d.\ from an unknown distribution \(\mathcal{D}\). The user then selects one option, which we encode as a binary preference $Y \;\in\;\{+1,\,-1\}$, and we record the response time \(T\). We further assume that both the preference \(Y\) and the response time \(T\) are governed by an underlying reward function \(r\). The learner’s objective is to learn \(r\) from the observed data
\(
\bigl\{(X_{i}^{1},\,X_{i}^{2},\,Y_{i},\,T_{i})\bigr\}_{i=1}^{n}.
\)

Numerous prior works \cite{pmlr-v238-gheshlaghi-azar24a,deeprlhumanprefs,rafailov2024directpreferenceoptimizationlanguage} employ the \emph{Bradley–Terry} model \cite{Bradley1952Rank} to link the binary preference \(Y\) to the latent reward function \(r\). In this formulation, the probability of selecting alternative \(X^i\) is proportional to
$\exp\!\bigl(r(X^i)\bigr)$. To capture both choice accuracy and response‐time variability, we adopt the EZ diffusion model, which extends this log‐odds specification by modeling response‐time distributions alongside choice probabilities. By design, the EZ diffusion model reduces to the Bradley–Terry model when only choice probabilities are considered. As is common in current learning from human feedback literature \cite{traininglanguagemodelinstructions,zhu2024principledreinforcementlearninghuman,kaufmann2024surveyreinforcementlearninghuman}, we assume homogeneity in the samples, i.e., we assume that the reward function is uniform across samples and the feature vectors encapsulate any user heterogeneity. 

\paragraph{EZ diffusion model}{\label{sec:EZ_diffusion_model}}

Given a query \(X^{1},X^{2}\), the EZ diffusion model \cite{wagenmakers_ez-diffusion_2007,berlinghieri2023measuring} treats decision making as a drift‐diffusion process with drift \(\mu = r(X^{1}) - r(X^{2})\) and noise \(B(\tau)\sim\mathcal{N}(0,\tau)\). After an initial encoding delay $t_{\mathrm{nd}}$, evidence accumulates as $E(\tau)=\mu\,\tau + B(\tau)$ until it first reaches one of two symmetric absorbing barriers at \(+a\) or \(-a\). The response time $T$ and the preference $Y$ is given by 
\[
T=\min\{\tau>0: E(\tau)\in\{\pm a\}\} \quad \quad 
Y = 
\begin{cases}
1, & \text{if } E(
T)= a, \\
-1, & \text{if } E(T) = -a.
\end{cases} 
\]

In most applications, one observes the total response time $t_{\mathrm{nd}} + T$, where $t_{\mathrm{nd}}$, captures the non-decision time required to perceive and encode the query.
In vision‐based preference tasks, \(t_{\mathrm{nd}}\) is often treated as a constant~\cite{Clithero2018JEB,yang2023dynamic}, whereas in language or more complex cognitive tasks it may depend on the properties of the text~\cite{bompas2024non,verdonck2016factoring}.  The reward difference \(r(X^1) - r(X^2)\) reflects the strength of preference for a given query, while the barrier \(a\) governs the conservativeness inherent to both the task and individual characteristics~\cite{verdonck2016factoring}. For simplicity in notation, we use $X$ to denote the pair $(X^1,X^2)$ Further, we overload $r$ and say $r(X) \coloneqq r(X^1) - r(X^2)$ to denote the difference of the rewards from choices $X^1$ and $X^2$. 
The EZ‐diffusion model implies the following key expressions as computed in \cite{stimulusstrengthdf}:
\begin{equation}{\label{eq:expected_time_pref_probs}}
\begin{aligned}
&P(Y = 1| X) = \frac{1}{1 + \exp(-2a \; r(X))} \quad  &\mathbb{E}[T | X]  = 
\begin{cases}
\frac{a \tanh(a\, r(X))}{r(X)}, & \text{if } r(X) \neq 0, \\
a^2, & \text{otherwise }
\end{cases}
\end{aligned}
\end{equation}

In applications to machine learning, both $t_{\mathrm{nd}}$ and $a$  may be informed by extensive psychology and economics literature~\cite{van2009use, Clithero2018JEB, wagenmakers_ez-diffusion_2007, bompas2024non,fudenberg2020testing, xiang2024combining}.  For clarity, we treat these parameters as known, fixing $a=1$, in the main text and defer their estimation and uncertainty analysis to \cref*{sec:a_estimation_tnondec_discussions}. In that appendix, we show that when $a$ is unknown one may equivalently learn the scaled reward function $r(X)/a$ with identical guarantees, and that our proposed loss is only second‐order sensitive to misspecification of $t_{\mathrm{nd}}$.

Note that if we ignore response times and consider only the preferences \(Y\), the EZ-diffusion model reduces exactly to the \emph{Bradley-Terry} model. Further, combining the choice and timing expressions in \eqref{eq:expected_time_pref_probs} (and setting $a =1$) gives a convenient identity
\begin{equation}
  r(X)
= \frac{\mathbb{E}[Y \mid X]}{\mathbb{E}[T \mid X]}\,. \label{eq:rty_relation}  
\end{equation}

\paragraph{Additional Notations: }Now that the model is defined, we introduce notation to distinguish true functions from their estimators: we write $r_{o}(X)$ for the true reward‐difference and $\hat r(X)$ for its estimate, and similarly $t_{o}(X)\coloneqq\mathbb{E}[T\mid X]$ with estimate $\hat t(X)$. We further define norms $\lVert f(\cdot)\rVert_{L_2(\gD)}$ and $\lVert f(\cdot)\lVert_{L_1(\gD)}$ as $\sqrt{\mathbb{E}_{X \sim \gD}[f(X)^2]}$ and $\mathbb{E}_{X \sim \gD}[|f(X)|]$ respectively. We also define the random variable $Z$ as the tuple $Z \coloneqq (X, Y, T)$. 

\paragraph{Preference-Only Learning} 
Estimating the reward function \(r(\cdot)\) from binary preferences \(Y\in\{-1,+1\}\) reduces to logistic regression, since 
\(P(Y=1\mid X)=(1+\exp(-2\,r(X)))^{-1}.\) One can compute the maximum-likelihood estimate, or equivalently minimize the logistic loss:
\begin{equation}\label{eq:logloss}
   \logloss(r)
   = \mathbb{E}\bigl[\log\bigl(1 + \exp(-2\,Y\,r(X))\bigr)\bigr].
\end{equation}

\section{Incorporating Response Time}
Using the identity in \eqref{eq:rty_relation}, a natural starting point is the “naive” squared‐error loss
\begin{equation}
   \nonol(r; t) = \mathbb{E}\left[\left(Y - r(X)t(X)\right)^2\right], \label{eq:non-ortholoss}
\end{equation}
where \(t(X)\) estimates \(
t_{0}(X)\;\coloneqq\;\mathbb{E}[T\mid X]
\). However, this formulation suffers from a few serious drawbacks. 
Estimation of $r_o$ is highly sensitive to any error in estimating $t_o$. Moreover the function is often hard to estimate even when the reward function $r_o$ is linear. Second, even if \(r_{0}\) is linear, the mapping  $t_o(X) = \frac{\tanh(r_o(X))}{r_o(X)}$ is highly nonlinear , and the response time \(T\) does not admit a simple closed‐form density in terms of \(r(X)\). Moreover, $T$ is also sensitive to the $t_\mathrm{nd}$ assumed in the model. For these reasons, the loss in \eqref{eq:non-ortholoss} is generally impractical. 

The function \(t(\cdot)\) acts as a \emph{nuisance} parameter in the naive loss \eqref{eq:non-ortholoss} and any error in \(t(\cdot)\) directly contaminates the estimate of \(r_o(\cdot)\). To address this, we design a modified loss that is \emph{Neyman‐orthogonal} to \(t\), eliminating its first‐order effect on the gradient with respect to \(r\). In the next section, we review the Neyman‐orthogonality conditions and present our orthogonal loss.

\subsection{Neyman-orthogonality and Orthogonal Statistical learning. }

We consider a population loss \(
L(\theta, g) \;=\; \mathbb{E}\bigl[\ell(\theta, g; Z)\bigr],\) where \(\theta\) is the target parameter and \(g\) is a nuisance parameter.  The loss is said to be \emph{Neyman‐orthogonal} at the true pair \((\theta_{0},g_{0})\) if its mixed directional derivative \footnote{The directional derivative of $F\colon\mathcal{F}\to\mathbb{R}$ at $f$ in direction $h$ is defined by \(D_fF(f)[h]=\tfrac{d}{dt}F(f+th)\big|_{t=0}\).  For a bivariate functional \(L(\theta,g)\), we write \(D_{\theta}L\) and \(D_{g}L\) to indicate differentiation w.r.t.\ each argument.}  vanishes:
\begin{equation}{\label{eq:Neyman_orthogonal}}
    D_{g}\,D_{\theta}\,L(\theta_{0},g_{0})[h,k] \;=\; 0
\quad\text{for all directions  }h\text{ and }k.
\end{equation}

This condition ensures that errors \(g-g_{o}\) have zero first‐order impact on the estimator of \(\theta\), so that any error from $g$ influences estimation of  \(\theta_o\) only at a higher order (e.g., quadratic).

\subsection{Orthogonal Loss for Preference learning}

To prevent errors in estimating the decision‐time function \(t\) from biasing the reward estimate \(r\), we define the orthogonalized loss
\begin{equation}{\label{eq:orthogonalized_loss1}}
    \olone(r;\mfr,t ) = \mathbb{E} \left[\left(Y- (T- t(X))\mfr(X) - r(X) t(X)\right)^2 \right], 
\end{equation}

where $\mfr$ is a preliminary estimator of the true reward function $r_o(\cdot)$. In \cref{lemma:Neyman_orthogonality} we prove that $\olone$ satisfies Neyman‐orthogonality with respect to the nuisance pair $g=(\mfr,t)$. Crucially, $\mfr$ need only be a rough initial estimate (e.g.\ via logistic loss), since first‐order errors in $\mfr$ are automatically corrected in $\olone$.  As shown in \cref{sec:non_asymptotic_rates_ol_intro}, this yields a final estimator for $r$ that is robust to substantial nuisance‐estimation error. 

\begin{lemma}{\label{lemma:Neyman_orthogonality}}
    The population loss $\olone$ is Neyman-orthogonal with respect to nuisance $g \coloneqq \l( \mfr , t \r)$ i.e. $D_g D_{r} \olone (r_o; g_o)[r - r_o, g- g_o] = 0 \quad \forall r \in \gR \quad \forall g \in \gG$. 
\end{lemma}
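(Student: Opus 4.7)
The plan is to compute both mixed directional derivatives and show each vanishes at the truth by exploiting the conditional-mean identities implied by the EZ model. I would write the pointwise residual as $R(r,\mfr,t;Z) = Y - (T - t(X))\mfr(X) - r(X)t(X)$, so that $\olone(r;\mfr,t) = \mathbb{E}[R^2]$. Two key facts hold at the truth $g_o = (r_o, t_o)$: by definition $\mathbb{E}[T \mid X] = t_o(X)$, and by the identity \eqref{eq:rty_relation}, $\mathbb{E}[Y \mid X] = r_o(X)\,t_o(X)$. Combining these, the residual at the truth $R_o = Y - T\,r_o(X)$ satisfies $\mathbb{E}[R_o \mid X] = 0$; this conditional mean-zero property is the engine of the whole argument.

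Next I would compute the directional derivative in $r$ along an arbitrary direction $h$,
\begin{equation*}
    D_r\,\olone(r;\mfr,t)[h] \;=\; -2\,\mathbb{E}\bigl[R\,h(X)\,t(X)\bigr],
\end{equation*}
and then differentiate this expression separately in the two nuisance coordinates. Using $D_\mfr R[k_1] = -(T - t(X))k_1(X)$, the first mixed derivative is
\begin{equation*}
    D_{\mfr}\,D_r\,\olone[h,k_1] \;=\; 2\,\mathbb{E}\bigl[(T - t(X))\,k_1(X)\,h(X)\,t(X)\bigr].
\end{equation*}
Using $D_t R[k_2] = k_2(X)(\mfr(X) - r(X))$ together with the product rule for the factor $t(X)$ standing outside $R$,
\begin{equation*}
    D_t\,D_r\,\olone[h,k_2] \;=\; -2\,\mathbb{E}\bigl[k_2(X)(\mfr(X) - r(X))\,h(X)\,t(X)\bigr] \;-\; 2\,\mathbb{E}\bigl[R\,h(X)\,k_2(X)\bigr].
\end{equation*}

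Finally, I would evaluate at $(r,\mfr,t) = (r_o, r_o, t_o)$. The first mixed derivative vanishes by the tower property since $\mathbb{E}[T - t_o(X) \mid X] = 0$. In the second, the first summand vanishes because $\mfr_o - r_o \equiv 0$, and the second summand vanishes because $\mathbb{E}[R_o \mid X] = 0$. Linearity in the nuisance direction then gives $D_g D_r \olone(r_o; g_o)[h, (k_1, k_2)] = 0$ for arbitrary $h, k_1, k_2$, which is the desired orthogonality. There is no real technical obstacle; the main conceptual point is recognising \emph{why} the augmentation $(T - t(X))\mfr(X)$ is included in the loss: it pairs the nuisance error $T - t_o(X)$ with $\mfr$, forcing the $\mfr$-cross-derivative to be mean-zero, while the $t$-cross-derivative is split into a piece proportional to $(\mfr - r)$ (vanishing at truth) and a piece proportional to $R_o$ (vanishing in conditional expectation). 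Once this design is unpacked, the proof reduces to two applications of the tower property.
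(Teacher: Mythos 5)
Your proposal is correct and follows essentially the same route as the paper: both compute the mixed directional derivative of $\olone$ pointwise and then invoke the tower property via the identities $\mathbb{E}[T - t_o(X) \mid X] = 0$ and $\mathbb{E}[Y - T\,r_o(X) \mid X] = 0$ (equivalently $\mathbb{E}[Y\mid X] = r_o(X)t_o(X)$). The only cosmetic difference is that you split the nuisance derivative into the $\mfr$ and $t$ coordinates separately and observe that the $(\mfr(X)-r(X))$ piece is identically zero at the truth, whereas the paper groups the two surviving terms by their conditional-mean-zero factors and writes the result in one line; the underlying calculation and conclusion are identical.
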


\begin{proof}
Let $\ell(\cdot)$ be the pointwise evaluation  of $\olone$ at a data point
$Z=(X,Y,T)$.  A direct calculation gives
\begin{align*}
D_gD_{r}\,\ell\bigl(r,g_o;X,Y,T\bigr)\!
      \left[r-r_o,\;g-g_o\right]
&= 2\bigl(-Y+T\,r_o(X)\bigr)\bigl(t(X)-t_o(X)\bigr)\bigl(r(X)-r_o(X)\bigr) \\
&\quad+2\bigl(T\,t_o(X)-t_o(X)^2\bigr)
        \bigl(\mfr(X)-r_o(X)\bigr)\bigl(r(X)-r_o(X)\bigr).
\end{align*}
Because $r_o(X)=\dfrac{\mathbb{E}[Y\mid X]}{t_o(X)}$, we have $\mathbb{E}\!\left[-Y+T\,r_o(X)\mid X\right]=0$. Similarly, by definition of $t_o$, $\mathbb{E}\!\left[T\,t_o(X)-t_o(X)^2\mid X\right]=0$. Taking expectations of the directional derivative, therefore yields
\[
\mathbb{E}\!\left[
D_gD_{r}\,\ell\bigl(r,g_o;X,Y,T\bigr)\!
      \left[r-r_o,\;g-g_o\right]
\right]=0,
\]
which establishes the claimed Neyman‑orthogonality.
\end{proof}

We present a Meta-Algorithm to estimate the reward model using nuisance functions $\mfr(\cdot)$ and $t(\cdot)$.

\begin{metaalgorithm}[H]
  \caption{Estimate Reward Model via Orthogonal Loss }
  \label{meta:reward_model}
 \textbf{Input:} $\gS= \{(X^{(1)}_{i},X^{(2)}_{i},Y_i,T_i)\}_{i=1}^n$
 \textbf{Goal:} Estimate reward model $\hat r(\cdot)$
  \begin{algorithmic}[1]
    \STATE Compute nuisance functions $\hat \mfr$ and $\hat t$ as an initial estimate of reward model and response time.
        \STATE Now use these functions $(\hat \mfr, \hat t)$ as nuisance to minimize the orthogonalized loss function $\olone$.
    \end{algorithmic}
\end{metaalgorithm}

Different implementations of the Meta-Algorithm vary in how the nuisance functions $\mfr$ and $t$ are estimated, following \cite{foster2023orthogonal,chernozhukov2018double, dao2021knowledge}. In \emph{data-splitting}, the data is split into two halves: nuisances are fitted on one half, and the orthogonal loss is minimized on the other. \emph{Cross‐fitting} generalizes this to $K$ folds, training nuisances out‐of‐fold and evaluating on each held‐out fold before aggregating. \emph{Data-reuse} fits both nuisance and target models on the full dataset. In the subsequent sections, we specify which variant is used for each theoretical guarantee and empirical experiment.

Furthermore, since the EZ diffusion model implies identities in  \eqref{eq:expected_time_pref_probs}, we may plug in $t(\cdot)=\frac{\tanh(\mfr(\cdot))}{\mfr(\cdot)}$ directly into $\mathcal{L}^{\mathrm{ortho}}$, and the loss remains Neyman‐orthogonal. This plug-in strategy offers further flexibility: one can first train the reward model $r$ (e.g.\ by minimizing the logistic loss on preference data), then uses the fitted $\mfr$ as the nuisance in $\mathcal{L}^{\mathrm{ortho}}$ to exploit response-time information $T$ for faster convergence. While our work focuses on reward estimation, this framework also supports DPO-style objectives~\cite{rafailov2024directpreferenceoptimizationlanguage}, as discussed in Appendix~\ref*{sec:a_estimation_tnondec_discussions}. Treating the estimation of $t$ as a black box, our theoretical guarantees hold with only mild second-order corrections; see Appendix~\ref*{sec:non_asymtotic_rates} for details.

In the next two sections, we present theoretical guarantees for ~\Cref{meta:reward_model}. In \cref{sec:linear_results}, we focus on linear reward models and state the results that show our orthogonal estimator achieves an exponential improvement in estimation error—as a function of the true reward magnitude—strictly outperforming the asymptotic rates of the preference‐only estimator. In \cref{sec:non_asymptotic_rates_ol_intro}, we derive finite‐sample bounds for general non‐linear reward classes, including non‐parametric estimators. These bounds essentially recover the oracle rates—that is, the rates one would attain if the true average response‐time function $t_o(\cdot)$ were known and the naive loss $\nonol$ were used.

\section{Asymptotic Rates for Linear Reward Function} \label{sec:linear_results}

We now restrict to the linear class $\gR = \bigl\{x \mapsto \langle x,\theta\rangle : \theta\in\mathbb{R}^d\bigr\},$ so that $r_{\theta}(X)=\langle\theta,X\rangle$ and the true reward, 
$r_{o}(X)=\langle\theta_{o},X\rangle$. Our goal is to estimate $\theta_{o}$.

\paragraph{Preference‐only estimator.}
Let $\hat\theta_{\mathtt{log}}$ minimize the empirical version of the logistic loss in \eqref{eq:logloss}.  Under the condition that 
$
\E\bigl[\sigma(-2\langle\theta_{0},X\rangle)\,\sigma(2\langle\theta_{0},X\rangle)\,XX^{\top}\bigr]
$ 
is invertible, standard argument such as the ones in  \cite{fahrmeir1985consistency} (see ~\cref*{sec:asymptotic_guarantees_linear} for derivation)  yield 
\begin{equation}
\sqrt{n}\,\bigl(\hat\theta_{\mathtt{log}}-\theta_{0}\bigr)
\;\xrightarrow{d}\;
\mathcal{N}\!\Bigl(
0,\,
\Bigl[4\,\E\bigl[\sigma(-2\langle\theta_{0},X\rangle)\,\sigma(2\langle\theta_{0},X\rangle)\,XX^{\top}\bigr]\Bigr]^{-1}
\Bigr).
\end{equation}

\paragraph{Orthogonal estimator.}  
Let \(\hat\theta_{\mathtt{ortho}}\) denote the resulting estimator from \Cref{meta:reward_model} (either with cross-fitting or data-splitting).

\begin{theorem}{\label{theorem:asymptotic_preferences_time1}}
Let \(\hat\theta_{\mathtt{ortho}}\) minimize the orthogonal loss $\olone$. If   \(\mathbb{E}[\,t_{0}(X)^{2}XX^{\top}\,]\) is invertible, then

$$\sqrt{n} \, (\hat \theta_{\mathtt{ortho}} - \theta_o) \overset{d}{\rightarrow} \mathcal{N}\left(0, \Sigma\right)$$
    where  
    $\Sigma = \mathbb{E}\left[\left(\frac{\tanh(\langle\theta_o, X \rangle)}{\langle\theta_o, X \rangle}\right)^2XX^{\top}\right]^{-2} \mathbb{E}\left[\left(\frac{\tanh(\langle\theta_o, X \rangle)}{\langle\theta_o, X \rangle}\right)^3XX^{\top}\right].$
\end{theorem}

 \cref{theorem:asymptotic_preferences_time1} is proven in \cref*{sec:asymptotic_guarantees_linear} using techniques developed in \cite{chernozhukov2018double} for asymptotic statistics for debiased estimators. Furthermore, the asymptotic variance of $\hat\theta_{\mathtt{ortho}}$ is point‑wise smaller than that of $\hat\theta_{\mathtt{log}}$, and this continues to hold for any barrier height $a$. This follows from the fact $4\,\sigma(2x)\sigma(-2x)\le\bigl(\tfrac{\tanh(x)}{x}\bigr)^{2}$ for all $x\ge0$ (see  \cref*{sec:asymptotic_guarantees_linear}). Because $\tfrac{\tanh(x)}{x}$ decays polynomially with $|x|$ while $\sigma(x)\sigma(-x)$ decays exponentially, the variance of the orthogonal estimator is exponentially smaller than the log-loss estimator.

Our asymptotic guarantee differs fundamentally from \cite[Theorem 3.1]{Li2024Enhancing}. Li et al. establish point-wise convergence for a fixed query $x$ as the number of observations of that specific query approaches infinity, whereas our result is framed in terms of the overall sample size $n$ accumulated by the learner. Basing the limit on $n$ rather than on per-query counts better reflects how data are gathered in practical preference-learning scenarios.

Moreover, while this guarantee also covers the setting described in \cite{Li2024Enhancing}, the guarantees are significantly tighter. In particular the variance of their estimator decays with $\left(\min\limits_{x \in \gX_{\text{sample}}} \frac{\tanh(\langle \theta_o,x\rangle)}{\langle \theta_o,x\rangle}\right)^{-1}$ where $\gX_{\text{sample}}$ denotes the set of distinct pairs of queries $x$. In contrast, in our case, this term is inside expectation and thus results in stronger guarantees. 

One may naturally ask how our rates depend on the accuracy of the nuisance estimates $\hat \mfr(\cdot)$ and $\hat t(\cdot)$.  
We prove that the asymptotic guarantees hold, provided
\begin{equation}
    \bigl\lVert \hat t - t_o \bigr\rVert_{L_2(\gD)} = o\bigl(n^{-\beta_1}\bigr),
\qquad 
\beta_1 \,\ge\, \tfrac14,
\qquad\qquad
\bigl\lVert \hat \mfr - r_o \bigr\rVert_{L_2(\gD)} = o\bigl(n^{-\beta_2}\bigr),
\qquad 
\beta_2 \,\ge\, \tfrac12 - \beta_1.
\end{equation}
\noindent

For linear rewards, one can achieve the required “slow” convergence rates for estimating \(t\) by applying kernel ridge regression with a Sobolev kernel associated with the RKHS \(W^{s,2}\) \cite{adams.fournier:03:sobolev,Wainwright_2019}.
Alternatively, one may first fit $\hat \mfr$ via logistic‐loss minimization and then set $\hat t(X)=\frac{\tanh\bigl(\hat\mfr(X)\bigr)}{\hat\mfr(X)}$.
Linearity of $r$ and the Lipschitz continuity of the mapping $x\mapsto\tfrac{\tanh(x)}{x}$ then ensure the required slow-rate bounds (see \Cref*{sec:non_asymtotic_rates}).

\section{Finite Sample Guarantees for General Reward Functions} {\label{sec:non_asymptotic_rates_ol_intro}}
We denote the joint nuisance pair as  $g=(\mfr,t)$. In this section we bound the estimation error $\|\hat r - r_o\|_{L_2(\mathcal D)}$ in terms of the population pseudo-excess risk defined as $\gL^{\mathtt{ortho}}(\hat r;\hat g) - \gL^{\mathtt{ortho}}(r_o,\hat g)$ plus higher‐order terms depending on nuisance estimation error. We assume $r_o\in\mathcal{R}$ and $t_o\in\mathcal{T}$, and let $\mathcal{G}=\mathcal{R}\times\mathcal{T}$ denote the joint nuisance class (and $g \in \mathcal{G}$). Further let $\text{star}\{\gX,x'\} := \{(1-t)x + tx': x\in \gX; t\in [0,1]\}$. Further, we adopt the standard $L_p$ norm for functions i.e.
\vspace{-0.3 em}
\[
\|f\|_{L_2(\gD)} \;=\; \sqrt{\mathbb{E}_{X \sim \gD}\!\bigl[f(X)^2\bigr]}, \qquad
\|f\|_{L_1(\gD)} \;=\; \mathbb{E}_{X \sim \gD}\!\bigl[\,|f(X)|\,\bigr], \qquad
\|f\|_{L_\infty} \;=\; \sup_{x}\,|f(x)|.
\]

Next we define a pre-norm $(\gR, \alpha)$ (need not satisfy triangle inequality) on the nuisance function $g(\cdot)$ to first present a general guarantee in term of this norm for any function class, and then further bound it with $L_{2}(\gD)$ norm, attaining different rates for different function classes such as RKHS balls and finite-VC subclasses following the general theorem and in \Cref*{sec:non_asymtotic_rates}.
\vspace{-0.4 em}
\begin{equation}{\label{eq:R_alpha_g_norm}} \small
\lVert g\rVert_{\normtuple}^2
= \sup_{r\in\gR}
\l(\frac{
\bigl\lVert \mfr(\cdot) t(\cdot) r(\cdot) \bigr\rVert_{L_{1}(\gD)}
}{
\lVert r\rVert_{L_{2}(\gD)}^{\,1-\alpha}
} + \frac{
\bigl\lVert t(\cdot)^2 r(\cdot) \bigr\rVert_{L_{1}(\gD)}
}{
\lVert r\rVert_{L_{2}(\gD)}^{\,1-\alpha}
} \r)
\end{equation}

While Neyman orthogonality ensures higher‐order dependence on nuisance error, defining the pre‐norm this way gives additional robustness with respect to errors in 
$\mfr$. The product $\mfr(\cdot) t(\cdot)$ couples the estimation errors of $\hat{\mfr}$ and $\hat{t}$: even if $\mfr$ is poorly estimated—say, under large reward magnitudes—a sufficiently accurate estimate of $t$ can keep the product of errors small, yielding low nuisance error in the $\normtuple$ norm. 

\begin{theorem}\label{thm:finite_sample_rates}
  Suppose $\hat r$ minimizes the orthogonal population loss $\gL^{\mathrm{ortho}}$ and satisfies
  \[
    \gL^{\mathrm{ortho}}(\hat r;\hat g)
    - \gL^{\mathrm{ortho}}(r_o;\hat g)
    \;\le\;
    \epsilon(\hat r,\hat g).
  \]
  Let $S$ be an absolute bound on $r_o$.  Then the two‐stage ~\Cref{meta:reward_model}
  guarantees
  \[
    \|\hat r - r_o\|_{L_2(\mathcal D)}^2
    \;\le\;
    4S^2\,\epsilon(\hat r,\hat g)
    \;+\;
    4\,S^{\frac{4}{1+\alpha}}\,
    \|\hat g - g_o\|_{\normtuple}^{\frac{4}{1+\alpha}}.
  \]
\end{theorem}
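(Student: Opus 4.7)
The plan is to sandwich the population excess risk between a strongly-convex piece at the true nuisance $g_o$ and a ``nuisance-perturbation'' piece that Neyman-orthogonality forces to be second-order in $\hat g - g_o$, and then to balance the two via Young's inequality. First, expanding the square in $\olone$ and using $\E[Y\mid X] = r_o(X)t_o(X)$ together with $\E[T\mid X] = t_o(X)$, every cross term involving $(Y - r_o t_o)$ or $(T - t_o)$ vanishes under the conditional expectation, leaving
\[
\olone(\hat r; g_o) - \olone(r_o; g_o) \;=\; \E\bigl[t_o(X)^2(\hat r(X) - r_o(X))^2\bigr].
\]
Because $t_o(X) = \tanh(r_o(X))/r_o(X)$ is monotonically decreasing in $|r_o(X)|$ and $|r_o|\le S$, an elementary bound gives $t_o(X)^2 \ge 1/(4S^2)$, yielding the strong-convexity inequality $\olone(\hat r;g_o) - \olone(r_o;g_o) \ge (4S^2)^{-1}\|\hat r - r_o\|_{L_2(\gD)}^2$.

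Next I would control the nuisance perturbation
\[
\Pi \;:=\; [\olone(\hat r;\hat g)-\olone(\hat r;g_o)] - [\olone(r_o;\hat g)-\olone(r_o;g_o)]
\]
by appealing to \cref{lemma:Neyman_orthogonality}. A direct computation (expanding the squares and conditioning on $X$) yields $\Pi = -2\,\E[(\hat\mfr-r_o)(\hat t-t_o)(\hat r-r_o)\,\hat t] + \E[(\hat r-r_o)^2(\hat t-t_o)(\hat t + t_o)]$; the linear-in-$(\hat g-g_o)$ contribution that would otherwise appear vanishes by Neyman orthogonality, so each surviving term is at least bilinear across the $\hat r - r_o$ and $\hat g - g_o$ directions. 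Bounding $|\hat t|,|t_o|\le 1$ and $|\hat r - r_o|\le 2S$, each $L_1$ norm reduces to a product of the form $|\hat\mfr - r_o|\,|\hat t - t_o|\,|\hat r - r_o|$ or $|\hat t - t_o|^2\,|\hat r - r_o|$, which are precisely the two summands inside the supremum in \eqref{eq:R_alpha_g_norm}. Choosing the test function in that supremum equal to $\hat r - r_o\in\gR$ then yields $|\Pi|\le C\,S\,\|\hat g - g_o\|_{\normtuple}^2\,\|\hat r - r_o\|_{L_2(\gD)}^{1-\alpha}$ for an absolute constant $C$.

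Combining the two steps, $(4S^2)^{-1}\|\hat r - r_o\|_{L_2(\gD)}^2 \le \epsilon(\hat r,\hat g) + C\,S\,\|\hat g - g_o\|_{\normtuple}^2\,\|\hat r - r_o\|_{L_2(\gD)}^{1-\alpha}$. Applying Young's inequality with conjugate exponents $p = 2/(1+\alpha)$ and $q = 2/(1-\alpha)$ splits the mixed term into $\tfrac{1}{8S^2}\|\hat r - r_o\|_{L_2(\gD)}^2$ plus a constant multiple of $S^{4/(1+\alpha)}\|\hat g - g_o\|_{\normtuple}^{4/(1+\alpha)}$; absorbing the quadratic piece on the left and multiplying through by $4S^2$ recovers the stated bound.

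The main obstacle is the residual $\E[(\hat r - r_o)^2(\hat t - t_o)(\hat t + t_o)]$ in the second step, which is only \emph{linear} in $\hat t - t_o$ and therefore is not naturally absorbed by a pre-norm that is quadratic in the nuisance. Exploiting the a priori bound $|\hat r - r_o|\le 2S$ converts one factor of $\hat r - r_o$ into a constant, and then Cauchy-Schwarz combined with $\E[(\hat r - r_o)^2(\hat t - t_o)^2]\le 2S\,\|(\hat t - t_o)^2(\hat r - r_o)\|_{L_1}$ (again using $|\hat r - r_o|\le 2S$) allows one to express this residual using the $\|t^2 r\|_{L_1}$ summand of $\|\cdot\|_{\normtuple}$, at the cost of the $S^{4/(1+\alpha)}$ factor that ultimately appears in the final bound. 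Verifying that all constants close up correctly in Young's inequality to yield exactly the exponent $4/(1+\alpha)$ is then a mechanical book-keeping step.
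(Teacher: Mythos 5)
Your proposal is essentially correct but takes a genuinely different route from the paper's. The paper's proof simply instantiates Foster--Syrgkanis Theorem~1 after verifying its three hypotheses (Neyman orthogonality via \cref{lemma:Neyman_orthogonality}, higher-order smoothness and strong convexity via \cref{lemma:ass_higher_order_smooth} and \cref{lemma:ass_strong_convex}); crucially, Foster--Syrgkanis's strong-convexity assumption is imposed on $D_r^2\gL^{\mathrm{ortho}}(\cdot;\hat g)$, i.e.\ \emph{at the estimated nuisance}, which is why the paper restricts $\gT$ so that every $t\in\gT$ satisfies $t\geq\tanh(S)/S$. You instead anchor at $g_o$, use the exact identity $\gL^{\mathrm{ortho}}(\hat r;g_o)-\gL^{\mathrm{ortho}}(r_o;g_o)=\E[t_o(X)^2(\hat r-r_o)(X)^2]$ (correct, and needs only the a priori bound $|r_o|\le S$, not a restriction on $\gT$), and push the $\hat g$-dependence into $\Pi$. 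Both computations and the Young's step are valid, so this is a legitimate and arguably cleaner alternative.

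The one place your writeup is not internally consistent is the middle paragraph's claim that $|\Pi|\le C\,S\,\|\hat g-g_o\|_{\normtuple}^2\,\|\hat r-r_o\|_{L_2(\gD)}^{1-\alpha}$. Expanding $\hat t+t_o=(\hat t-t_o)+2t_o$, the second term of $\Pi$ splits into $\E[(\hat t-t_o)^2(\hat r-r_o)^2]$ (fine, reduces to the $\|t^2 r\|_{L_1}$ summand after using $|\hat r-r_o|\le 2S$) \emph{plus} $2\E[t_o(\hat t-t_o)(\hat r-r_o)^2]$, which is only \emph{linear} in $\hat t-t_o$ and quadratic in $\hat r-r_o$, and hence does \emph{not} reduce to either summand of the pre-norm as you claim there. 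You do flag this in your final paragraph, and the Cauchy--Schwarz repair you sketch is the right idea, but note that it produces a bound of the form $\sqrt{S}\,\|\hat g-g_o\|_{\normtuple}\,\|\hat r-r_o\|_{L_2(\gD)}^{(3-\alpha)/2}$ — a \emph{different} homogeneity from the other $\Pi$-terms, not the $\eta^2\rho^{1-\alpha}$ form you asserted. This does not break the argument: this exponent pair is also Young-conjugate, with $p=4/(3-\alpha)$ and $q=4/(1+\alpha)$, so it still yields the same $\|\hat g-g_o\|_{\normtuple}^{4/(1+\alpha)}$ tail. You should state the two-term bound on $|\Pi|$ and carry out Young's inequality separately for each, rather than forcing both into a single $\eta^2\rho^{1-\alpha}$ form. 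The resulting absolute constants will not come out exactly as $4S^2$ and $4S^{4/(1+\alpha)}$, but the paper itself concedes in a footnote that its constants are loose in $S$.

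Finally, one shared (not peculiar to you) caveat: both your argument and the paper's choose the test function $r=\hat r-r_o$ inside the $\sup_{r\in\gR}$ of \eqref{eq:R_alpha_g_norm}, which requires $\hat r-r_o\in\gR$ (or that $\gR$ be replaced by its difference class/star hull). This is implicit in \cref{lemma:ass_higher_order_smooth}, so your proposal matches the level of rigor the paper uses.
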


% \begin{theorem}
%  Let $S$ be a uniform (absolute) bound on the true reward model $r_o(\cdot)$.  
% Assume that an estimator $\hat r$ is obtained by minimizing the orthogonal population loss $\gL^{\mathrm{ortho}}$ and satisfies the plug-in excess-risk condition $\gL^{\mathrm{ortho}} \bigl(\hat r, \hat g\bigr)
% - \gL^{\mathrm{ortho}}\bigl(r_o, \hat g\bigr)
% \;\le\;
% \epsilon\bigl(\hat r, \hat g\bigr)$.
% Under this assumption, the following guarantee applies to the two-stage Meta-Algorithm~\ref{meta:reward_model}.

%     \begin{equation}
%         ||r - r_o||^2_{L_2(\gD)} \leq 4 S^2\epsilon(\hat r, \hat g) + 4S^{\frac{4}{1+\alpha}} ||\hat g - g_o||^{\frac{4}{1+\alpha}}_{\normtuple} 
%     \end{equation} 
% \end{theorem}

The proof of the theorem follows by instantiating \cite[Theorem 1]{foster2023orthogonal}. The details can be found in \Cref*{sec:non_asymtotic_rates}. Now we instantiate the above theorem for specific function classes.
  
\subsection{Nuisance estimation error $||\hat g - g_o||_{\normtuple}$} 

Let $\lVert r\rVert_{\gR}$ denote the RKHS norm of $r$. First, if $r\in\mathcal{R}$ lies in an RKHS ball of bounded radius and with a kernel whose eigenvalues decay as $j^{-1/\alpha}$, then by \cite{Mendelson_2010} we have $\|r\|_{\infty}\le C\,\|r\|_{\mathcal{H}}^{\alpha}\,\|r\|_{L_{2}(\mathcal{D})}^{1-\alpha}$,  which in turn implies $\|g\|_{\normtuple}\le C \sqrt{\,\|t(\cdot) \mfr(\cdot)\|_{L_{1}(\mathcal{D})} + || t(\cdot)^2||_{L_1(\gD)}}$.

Second, if we have $\frac{\|t\|_{L_{4}(\mathcal{D})}}{\|t\|_{L_{2}(\mathcal{D})}}\le C_{2\to4}$ and $\frac{\|\mfr\|_{L_{4}(\mathcal{D})}}{\|\mfr\|_{L_{2}(\mathcal{D})}}\le C_{2\to4}$, then setting $\alpha=0$ and applying Cauchy–Schwarz gives 
$$\|g\|_{\normtuple}^2 \le C_{2\to4}^2 \,\l( \lVert t \rVert_{L_2(\gD)} \cdot  \lVert \mfr \rVert_{L_2(\gD)} + \lVert t \rVert_{L_2(\gD)}^2 \r).   $$

Under these two cases, any bound $\zeta$ on $||\hat t -t_o||_{L_2(\gD)}$ and $|| \hat \mfr - r_o||$ shows up as $\zeta^{\frac{4}{1+\alpha}}$ in reward estimation error.

% The excess‐risk bounds in Corollaries in the next section rely on a standard Talagrand‐type concentration argument, which requires each point‐wise loss to be uniformly bounded. Since in our orthogonal loss depends on $T$, it can be unbounded (which is an unbounded random variable), in \Cref*{sec:non_asymtotic_rates} we introduce a simple modification where we cap the contribution of any single decision time  $T$ at a large constant~$\brB$.

% Before starting this section, we would like to clarify that the following corollaries on critical radius do not hold for the loss function defined above as it is potentially point-wise unbounded as there is no bound on the decision time $T$. Critical radius arguments invoke Talagrand's inequality which require the loss functions to be point-wise bounded.

% In \cref{sec:non_asymtotic_rates} we define a loss function $\brolone$ which caps the decision time by $\brB$ and then applies the loss function. This puts a an additional bias on the $\ell_2$ error in \cref{thm:finite_sample_rates} decaying with threshold $\brB$. A detailed discussion is presented in \cref{sec:non_asymtotic_rates}. This is also a better modeling choice as in practice the decision time is typically bounded.

\begin{remark}
The finite‐sample excess‐risk bounds in Corollaries~\ref{corollary:data_split} and~\ref{corollary:data_reuse} are obtained via a critical‐radius argument  and Talagrand’s concentration inequality, both of which require the loss to be uniformly bounded point‐wise.  However, our original (pointwise) orthogonal loss:
$\ell^{\mathrm{ortho}}(r,g;Z)
\;=\;
\bigl(Y - (T - t(X))\,\mfr(X) - r(X)\,t(X)\bigr)^{2}$,
can be unbounded when the decision time \(T\) is unbounded, violating these assumptions.  

To remedy this, in \Cref{sec:non_asymtotic_rates}, we introduce a simple modification where we cap the contribution of any single decision time  $T$ at a large constant~$\brB$.  This puts an additional bias on the $\ell_2$ error in \cref{thm:finite_sample_rates} decaying fast with threshold $\brB$.  Moreover, in any real-world use case, all the recorded response times would always be bounded; this modeling choice aligns with a real-world application of the framework. The effect of capped‐loss construction on finite‐sample rates appear in Appendix~\ref{sec:non_asymtotic_rates}.
\end{remark}

\subsection{Bounding excess-risk $\epsilon(\hat r, \hat g)$}
We consider two nuisance estimation schemes for ~\Cref{meta:reward_model}:
\emph{data‐splitting} (independent nuisance and target estimation) and \emph{data‐reuse} (joint estimation).  We start by defining critical radius of a function class and provide the guarantees in terms of the critical radius. Further, let  $\ell^{\text{ortho}}$ denote the point-wise loss version of population loss $\gL^{\text{ortho}}$.

\paragraph{Critical radius.}
Following \cite{Wainwright_2019}, define the \emph{localized Rademacher complexity} of a function class \(\mathcal F\) by
\[\small
\mathrm{Rad}_{n}(\mathcal F,\delta)
:=
\mathbb{E}_{\epsilon,z_{1:n}}\Biggl[\,
\sup_{\substack{f\in\mathcal F\\ \|f\|_{L_2(\mathcal D)}\le\delta}}
\;\biggl|\frac{1}{n}\sum_{i=1}^n \epsilon_i\,f(z_i)\biggr|
\Biggr].
\]
The \emph{critical radius} \(\delta_{n}\) is the smallest \(\delta>0\) satisfying
$\mathrm{Rad}_{n}(\mathcal F,\delta)\;\le\;\delta^{2}$.

% \begin{definition}[Critical Radius\cite{Wainwright_2019}]\label{definition:critical_radius}
%   The localized Rademacher complexity of a class $\gG$ is defined as follows. $$\text{Rad}_n(\gF,\delta) := \mathbb{E}_{\epsilon,z_{1:n}} \left[ \sup_{f \in \gF: ||f||_{L_2(\gD) \leq \delta}} \left| \frac{1}{n} \sum_{i=1}^{n} \epsilon_i f(z_i) \right|\right]$$ and the critical radius $\delta_n$ is defined as the smallest solution to $\text{Rad}_n(\gF,\delta)  \leq \delta^2$.
% \end{definition}

% For the case of \emph{Data-splitting} we have:

We now present the rates in terms of moment function $M(\zeta)$ which is an upper bound on moment of $\mathbb{E}[T^{\zeta}\mid X]$ assuming that $r(X) \leq S$ for every $\zeta >1$. From \cite{momentsdecisiontime}, we know that every $\zeta^{\mathrm{th}}$ moment exists. One can choose $\zeta$ to get the tightest bound.

\begin{restatable}[Data‐splitting]{corollary}{datasplitcorollary}\label{corollary:data_split}
Let \(\delta_{n}\) be the critical radius of the star‐shaped class
\[
  \mathrm{star}\bigl\{\,r - r_{o}\colon r\in\mathcal R , 0 \bigr\},
  \]
and define
$
  \delta_{n}^{\mathrm{ds}}
  = \max\!\bigl\{\delta_{n},\,\sqrt{c/n}\bigr\}
  \quad\text{for some constant }c>0.
$
Then under data‐splitting, with probability at least \(1 - c_{1}\exp\bigl(-c_{2}n(\delta_{n}^{\mathrm{ds}})^{2}\bigr)\), ~\Cref{meta:reward_model} satisfies
\[
  \epsilon(\hat r,\hat g)
  \;\le\;
  9\,S\,\brB\delta_{n}^{\mathrm{ds}}\;\|\hat r - r_{o}\|_{L_{2}(\mathcal D)}
  \;+\;
  10\,S^{2}\brB\,(\delta_{n}^{\mathrm{ds}})^{2},
\]
for universal constants \(c_{1},c_{2}>0\).  Consequently for every $\zeta \geq 1$,
\[
  \|\hat r - r_{o}\|_{L_{2}(\mathcal D)}^{2}
  \;\le\;
  \mathrm{poly}(S)\left(\brB\left(\delta_{n}^{\mathrm{ds}}\right)^{2} + \left(\frac{M(\zeta)}{\zeta \brB^{\zeta-1}}\right)^2\right)
  \;+\;
  4\,S^{\tfrac{4}{1+\alpha}}\,
  \|\hat g - g_{o}\|_{\normtuple}^{\tfrac{4}{1+\alpha}},
\]
holds with the same probability. 
\end{restatable}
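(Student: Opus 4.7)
The proof has two ingredients. First, bound the pseudo-excess risk $\epsilon(\hat r, \hat g)$ on the population side by running a localized Rademacher (critical-radius) argument on the \emph{capped} orthogonal loss $\brolone$ introduced in the remark after \Cref{thm:finite_sample_rates}, paying a small truncation bias in terms of $M(\zeta)/(\zeta\brB^{\zeta-1})$. Second, plug the resulting bound into \Cref{thm:finite_sample_rates} and resolve the resulting quadratic inequality in $\|\hat r - r_o\|_{L_2(\gD)}$.

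\textbf{Step 1: Truncation and bias.} Work with the capped pointwise loss $\brolonepoint(r,g;Z) = \bigl(Y-((T\wedge\brB)-t(X))\mfr(X)-r(X)t(X)\bigr)^2$, which is bounded by $O((S\brB)^2)$ under $\|r\|_\infty\le S$ and bounded nuisances. Control the gap $|\brolone(r;g)-\olone(r;g)|$ by $\mathbb{E}[T\,\indic{T>\brB}]$, and then use Markov's inequality on the $\zeta$-th moment of $T$ (finite by \cite{momentsdecisiontime}) to bound the latter by $M(\zeta)/(\zeta\brB^{\zeta-1})$. This is where the second bias term in the final inequality is produced.

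\textbf{Step 2: Conditional on the nuisance half.} Under data-splitting, $\hat g$ is fitted on a sample independent of the second half used for the target. Condition on the first half so that $\hat g$ is deterministic. The pointwise difference $\brolonepoint(r,\hat g;Z)-\brolonepoint(r_o,\hat g;Z)$ is a quadratic in $r-r_o$, and standard algebra shows it is $O(S\brB)$-Lipschitz in $r-r_o$ with range $O((S\brB)^2)$. Apply the localized Rademacher / Talagrand machinery of \cite{Wainwright_2019} to the star-shaped class $\mathrm{star}\{r-r_o:r\in\mathcal R,0\}$. By the definition of the critical radius $\delta_n^{\mathrm{ds}}$, with probability at least $1-c_1\exp(-c_2 n(\delta_n^{\mathrm{ds}})^2)$, uniformly in $r\in\mathcal R$,
\[
\bigl|(\gL_n^{\mathtt{ortho}}-\brolone)(r;\hat g)-(\gL_n^{\mathtt{ortho}}-\brolone)(r_o;\hat g)\bigr|
\;\lesssim\; S\,\brB\,\delta_n^{\mathrm{ds}}\,\|r-r_o\|_{L_2(\gD)} + S^2 (\delta_n^{\mathrm{ds}})^2.
\]

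\textbf{Step 3: ERM and assembly.} Since $\hat r$ minimizes the empirical capped loss, $\gL_n^{\mathtt{ortho}}(\hat r;\hat g)\le \gL_n^{\mathtt{ortho}}(r_o;\hat g)$. Combining with Step~2 and tracking constants yields the stated inequality
\[
\epsilon(\hat r,\hat g) \le 9\,S\,\brB\,\delta_n^{\mathrm{ds}}\,\|\hat r - r_o\|_{L_2(\gD)} + 10\,S^2 (\delta_n^{\mathrm{ds}})^2 + \text{(truncation bias from Step 1)}.
\]
Now invoke \Cref{thm:finite_sample_rates}: $\|\hat r-r_o\|_{L_2(\gD)}^2 \le 4S^2\epsilon(\hat r,\hat g) + 4 S^{4/(1+\alpha)}\|\hat g-g_o\|_{\normtuple}^{4/(1+\alpha)}$. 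The linear-in-$\|\hat r-r_o\|_{L_2(\gD)}$ piece creates a quadratic inequality $u^2 \le A u + B$ with $A = O(S^3\brB\delta_n^{\mathrm{ds}})$ and $B$ collecting $S^4(\delta_n^{\mathrm{ds}})^2$, the truncation bias, and the nuisance term. Solving via $u \le A + \sqrt{B}$ and then squaring (or via $2AB\le A^2+B^2$) folds $A^2 = O(S^6 \brB^2 (\delta_n^{\mathrm{ds}})^2)$ into $\mathrm{poly}(S)\,\brB(\delta_n^{\mathrm{ds}})^2$ times a factor of $\brB$, absorbed into $\mathrm{poly}(S)$ after choosing $\brB$ appropriately in the final statement.

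\textbf{Main obstacle.} The delicate step is reconciling the boundedness requirement of Talagrand/localized Rademacher with the naturally unbounded decision time $T$: truncating at $\brB$ is clean on the loss side but produces two competing terms, the concentration contribution scaling as $\brB(\delta_n^{\mathrm{ds}})^2$ (from the $S\brB$ range of the capped loss) and the truncation bias $M(\zeta)/(\zeta\brB^{\zeta-1})$. The interplay between these two pieces forces one to keep the $\brB$-dependence explicit through the Rademacher bound, rather than absorbing it into constants, and to invoke the $\zeta$-th moment bound of \cite{momentsdecisiontime} uniformly over $r$ with $\|r\|_\infty\le S$.
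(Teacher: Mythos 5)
Your high-level route matches the paper's: truncate $T$ at $\brB$ to make the loss bounded; exploit data-split independence to condition on $\hat g$; run a localized Rademacher / Talagrand argument over the star-shaped class $\mathrm{star}\{r-r_o: r\in\gR, 0\}$ with the contraction giving a Lipschitz factor of order $S\brB$; then combine with the ERM inequality and the oracle inequality of \Cref{thm:finite_sample_rates} and resolve the quadratic. Steps 2 and 3 are essentially the paper's proof of the first display, and your observation that explicit tracking of $\brB$ and the interplay with the moment bound is the delicate point is exactly the one the paper acknowledges in its footnote about looseness.

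Where you diverge is in how the truncation penalty is accounted for. The paper does not compare $\olone$ and $\brolone$ at the level of the population losses. Instead it re-derives the whole Foster--Syrgkanis oracle inequality for the capped loss (its Theorem \ref{thm:finite_sample_rates_capped}), showing via Lemmas \ref{lemma:ass_one_capped_loss}--\ref{lemma:ass_two_capped_loss} that $\brolone$ is only \emph{approximately} Neyman-orthogonal, with a bias controlled by $\|t_o - \brt_o\|_{L_\infty}$; this is a first-moment tail and is what gives the $\tfrac{M(\zeta)}{\zeta\brB^{\zeta-1}}$ term. Your Step 1 instead bounds $\sup_r|\brolone(r;g)-\olone(r;g)|$ and folds that into $\epsilon$ before invoking the uncapped \Cref{thm:finite_sample_rates}. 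That decomposition is valid, but the quantity you control is not $\mathbb{E}[T\,\indic{T>\brB}]$: expanding the difference of squares gives a factor $(T-\brB)_+\,\mfr(X)$ multiplied by a second bracket that itself grows like $T$, so the gap is really of order $S^2\,\mathbb{E}[T^2\indic{T>\brB}]$ rather than $\mathbb{E}[T\indic{T>\brB}]$. This still decays polynomially in $\brB$ by the moment bound, but with a shifted exponent, and the paper's near-orthogonality route avoids the issue by working with the first-moment difference $\|t_o-\brt_o\|_{L_\infty}$ from the start. If you want your version to line up with the stated $\tfrac{M(\zeta)}{\zeta\brB^{\zeta-1}}$, either adjust the moment index appropriately or switch to the paper's approximate-orthogonality decomposition.

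Finally, you correctly flag that resolving $u^2 \le Au + B$ with $A \sim S^3\brB\delta_n^{\mathrm{ds}}$ produces a $\brB^2(\delta_n^{\mathrm{ds}})^2$ contribution, whereas the corollary statement shows only $\brB(\delta_n^{\mathrm{ds}})^2$ inside the $\mathrm{poly}(S)$. The paper's own proof glosses over exactly this when it says the second display "follows on applying the bound," and the footnote about constants being loose covers it; your identification of the issue is accurate and your suggestion (absorbing the extra $\brB$ by tuning the threshold, or leaving it explicit) is reasonable.
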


For the case of data-reuse we have,

\begin{restatable}[Data‐reuse]{corollary}{datareusecorollary}\label{corollary:data_reuse}
Let \(\delta_{n}\) be the critical radius of
\[
  \mathrm{star}\bigl\{\,
    \ell^{\mathrm{ortho}}(r,g;\cdot)
    - \ell^{\mathrm{ortho}}(r_{o},g;\cdot)
  : r\in\mathcal R,\;g\in\mathcal G\bigr\},
\]
and define
$
  \delta_{n}^{\mathrm{dr}}
  = \max\!\{\delta_{n},\,\sqrt{c/n}\}
  \quad\text{for some constant }c>0.
$
Then under data‐reuse, with probability at least \(1 - c_{1}\exp\bigl(-c_{2}n(\delta_{n}^{\mathrm{dr}})^{2}\bigr)\), ~\Cref{meta:reward_model} satisfies
\[
  \epsilon(\hat r,\hat g)
  \;\le\;
  9\,S\,\delta_{n}^{\mathrm{dr}}\;\|\hat r - r_{o}\|_{L_{2}(\mathcal D)}
  \;+\;
  10\,(\delta_{n}^{\mathrm{dr}})^{2},
\]
for universal constants \(c_{1},c_{2}>0\).  Consequently for every $\zeta \geq 1$,
\[
  \|\hat r - r_{o}\|_{L_{2}(\mathcal D)}^{2}
  \;\le\;
  \mathrm{poly}(S)\left((\delta_{n}^{\mathrm{dr}})^{2} + \left(\frac{M(\zeta)}{\zeta \brB^{\zeta-1}}\right)^2\right)+\;
  4\,S^{\tfrac{4}{1+\alpha}}\,
  \|\hat g - g_{o}\|_{\normtuple}^{\tfrac{4}{1+\alpha}},
\]
holds with the same probability.
\end{restatable}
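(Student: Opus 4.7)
The plan is to mirror the argument used for the data-splitting corollary but applied to the larger joint class indexed by $(r,g)$. Specifically, I will instantiate Theorem~1 of \cite{foster2023orthogonal} in its data-reuse form, which gives an excess-risk bound in terms of the critical radius of the localized empirical process of loss differences $\ell^{\mathrm{ortho}}(r,g;\cdot) - \ell^{\mathrm{ortho}}(r_o,g;\cdot)$ over the full product class $\mathcal R \times \mathcal G$. Once an excess-risk bound of the form $\epsilon(\hat r,\hat g) \le 9 S \delta_n^{\mathrm{dr}} \|\hat r - r_o\|_{L_2(\mathcal D)} + 10(\delta_n^{\mathrm{dr}})^2$ is in hand, the second claim follows by plugging this into~\Cref{thm:finite_sample_rates} and solving the resulting quadratic in $\|\hat r - r_o\|_{L_2(\mathcal D)}$.

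\textbf{Step 1: Handling unbounded response times via truncation.} Because $T$ may be unbounded, the raw loss $\ell^{\mathrm{ortho}}$ is not uniformly bounded and Talagrand's inequality does not apply directly. Following the remark preceding the corollary, I first replace the loss by its $\brB$-capped version $\brolonepoint$, which caps the contribution of $T$ at $\brB$. The gap between the capped and uncapped population risks is then controlled by a Markov-type argument: for any $\zeta \ge 1$, the truncation error is at most $\mathbb{E}[T^{\zeta} \mathbb{1}\{T > \brB\}]/\brB^{\zeta-1}$, which is bounded by $M(\zeta)/(\zeta \brB^{\zeta-1})$ using the moment bound of \cite{momentsdecisiontime}. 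This produces the additive term $\mathrm{poly}(S)\cdot M(\zeta)/(\zeta \brB^{\zeta-1})$ in the final $L_2$ bound.

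\textbf{Step 2: Localized concentration on the joint class.} Working with the bounded capped loss, I apply the standard localized Rademacher complexity / Talagrand machinery (as in Chapter~14 of \cite{Wainwright_2019}) to the star-shaped class of loss differences indexed by $(r,g) \in \mathcal R \times \mathcal G$. This is where the data-reuse case diverges from data-splitting: since the same sample is used to fit $\hat g$ and $\hat r$, uniform control must hold jointly in both arguments, which is exactly why $\delta_n^{\mathrm{dr}}$ is defined relative to the larger class in the corollary's statement. The critical-radius condition together with the $\sqrt{c/n}$ floor gives, with probability $1 - c_1 \exp(-c_2 n (\delta_n^{\mathrm{dr}})^2)$, the stated excess-risk inequality; the factor $9S$ arises from bounding $|r(X) - r_o(X)| \le 2S$ and $|Y - T \mfr(X)|$ uniformly, and the coefficient $10$ absorbs universal constants from Talagrand.

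\textbf{Step 3: Conversion to $L_2$ error.} Combining the excess-risk bound from Step 2 with~\Cref{thm:finite_sample_rates} yields
\[
\|\hat r - r_o\|_{L_2(\mathcal D)}^2 \;\le\; 4 S^2\bigl(9 S \delta_n^{\mathrm{dr}}\|\hat r - r_o\|_{L_2(\mathcal D)} + 10(\delta_n^{\mathrm{dr}})^2\bigr) + 4 S^{\frac{4}{1+\alpha}}\|\hat g - g_o\|_{\normtuple}^{\frac{4}{1+\alpha}}.
\]
Treating this as a quadratic in $\|\hat r - r_o\|_{L_2(\mathcal D)}$ and applying the elementary inequality $x^2 \le ax + b \Rightarrow x^2 \le 2a^2 + 2b$ gives the claimed bound, with the truncation bias term $M(\zeta)/(\zeta \brB^{\zeta-1})$ from Step~1 folded into the $\mathrm{poly}(S)$ factor.

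\textbf{Main obstacle.} The chief difficulty compared to the data-splitting corollary is that uniform concentration must now hold over the joint class indexed by both $r$ and $g$; this is why the critical radius in the statement is defined over loss differences rather than just reward differences. The Neyman-orthogonality of $\olone$ (\Cref{lemma:Neyman_orthogonality}) is what allows this to still yield only a second-order dependence on $\|\hat g - g_o\|_{\normtuple}$ through~\Cref{thm:finite_sample_rates}, but verifying that the capped loss $\brolonepoint$ inherits (approximate) orthogonality up to the truncation bias requires care. The rest of the argument is structurally identical to the data-splitting proof, with the substitution of the joint critical radius and without the factor $\brB$ multiplying $\delta_n$ (which is absent because the loss class already incorporates the cap).
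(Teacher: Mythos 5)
Your proposal follows essentially the same route as the paper's proof: truncation of the response time to obtain a bounded capped loss, a localized Rademacher-complexity/Talagrand argument over the joint $(r,g)$-indexed class of loss differences (the key departure from data-splitting), and conversion to $L_2$ error via the capped finite-sample theorem with the truncation bias tracked as $\|t_o - \brt_o\|_{L_\infty} \le M(\zeta)/(\zeta\brB^{\zeta-1})$. One small bookkeeping note: the paper routes the final step through Theorem~\ref{thm:finite_sample_rates_capped} directly (which already carries the truncation-bias term) rather than through Theorem~\ref{thm:finite_sample_rates} plus a separate truncation correction as you describe, but the two presentations are equivalent and your accounting of all the terms is correct.
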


% \begin{corollary}[Data reuse]\label{corollary:data_reuse}
% Let $\delta_n$ be the critical radius of 
% \[
% \mathrm{star}\bigl\{\ell^{\mathrm{ortho}}(r,g; \cdot)-\ell^{\mathrm{ortho}}(r_o,g;\cdot)
% :\;r\in\gR,\;g\in\gG, \ 0 \bigr\},
% \]
% and define $\delta_n^{\mathrm{dr}}
% =\max\bigl\{\delta_n,\sqrt{c/n}\bigr\}$, for some constant $c>0$. Then, with probability at least 
% \(\,1 - c_1\exp\bigl(-c_2\,n\,(\delta_n^{\mathrm{dr}})^2\bigr)\),  
% ~\Cref{meta:reward_model} with data‐reuse satisfies 
% \[
% \epsilon(\hat r,\hat g)
% \;\le\;
% 9\,S\,\delta_n^{\mathrm{dr}}\,
% \|\hat r - r_o\|_{L_2(\mathcal{D})}
% \;+\;
% 10\,\bigl(\delta_n^{\mathrm{dr}}\bigr)^2.
% \]
% for universal constants $c_1, c_2 > 0$. This implies 
% $$|| \hat r - r_o||^2_{L_2(\gD)} \leq  \text{poly}(S) \left(\delta_n^{\mathrm{dr}}\right)^2 + 4S^{\frac{4}{1+\alpha}} ||\hat g - g_o||^{\frac{4}{1+\alpha}}_{\normtuple}.$$
% \end{corollary}

The full proof of the critical‐radius bounds and further discussion appear in \Cref*{sec:non_asymtotic_rates}\footnote{Above bounds in \cref{corollary:data_reuse} and \cref{corollary:data_split} are loose in $S$ and can be tightened via a careful analysis.}. Our analysis follows \cite[Theorem 14.20]{Wainwright_2019}, leveraging uniform law for Lipschitz loss functions.

The critical radius differs between data‐splitting and data‐reuse: with data reuse, each observation \(Z = (X, Y, T) \) influences both the reward model \(r(\cdot)\) and the nuisance \(g(\cdot)\), creating conditional dependence that increases the critical radius and slows convergence, whereas under data splitting the two estimates remain independent, yielding a smaller radius and faster rates.

\section{Experiments}

%We divide our experiment section into three parts we consider 3 setups. a) Linear model b) 

\begin{figure}[t]
  \centering

  \begin{subfigure}[b]{0.45\linewidth}
    \centering
    \includegraphics[width=\linewidth]{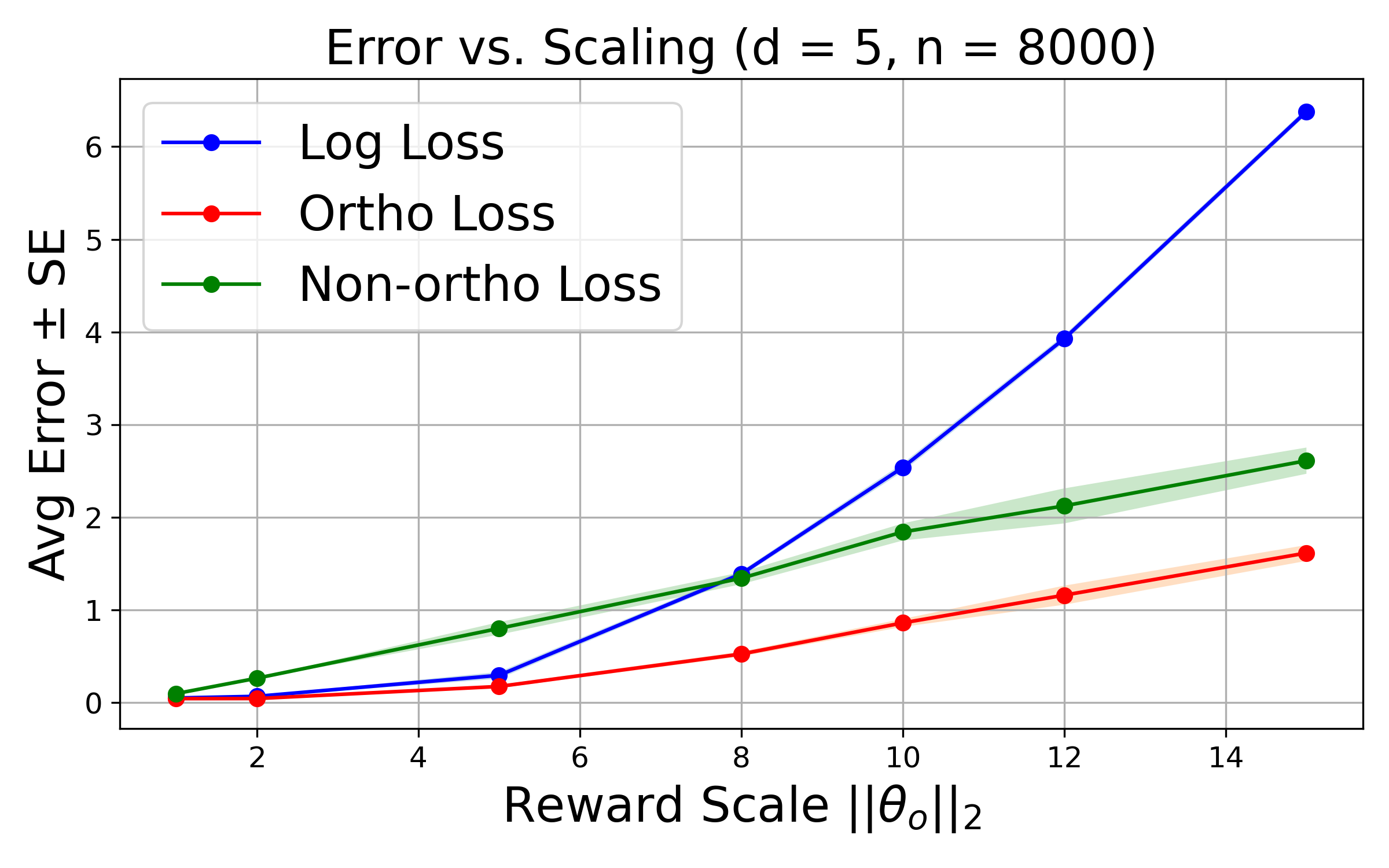}
  \end{subfigure}
  \hfill
  \begin{subfigure}[b]{0.45\linewidth}
    \centering
    \includegraphics[width=\linewidth]{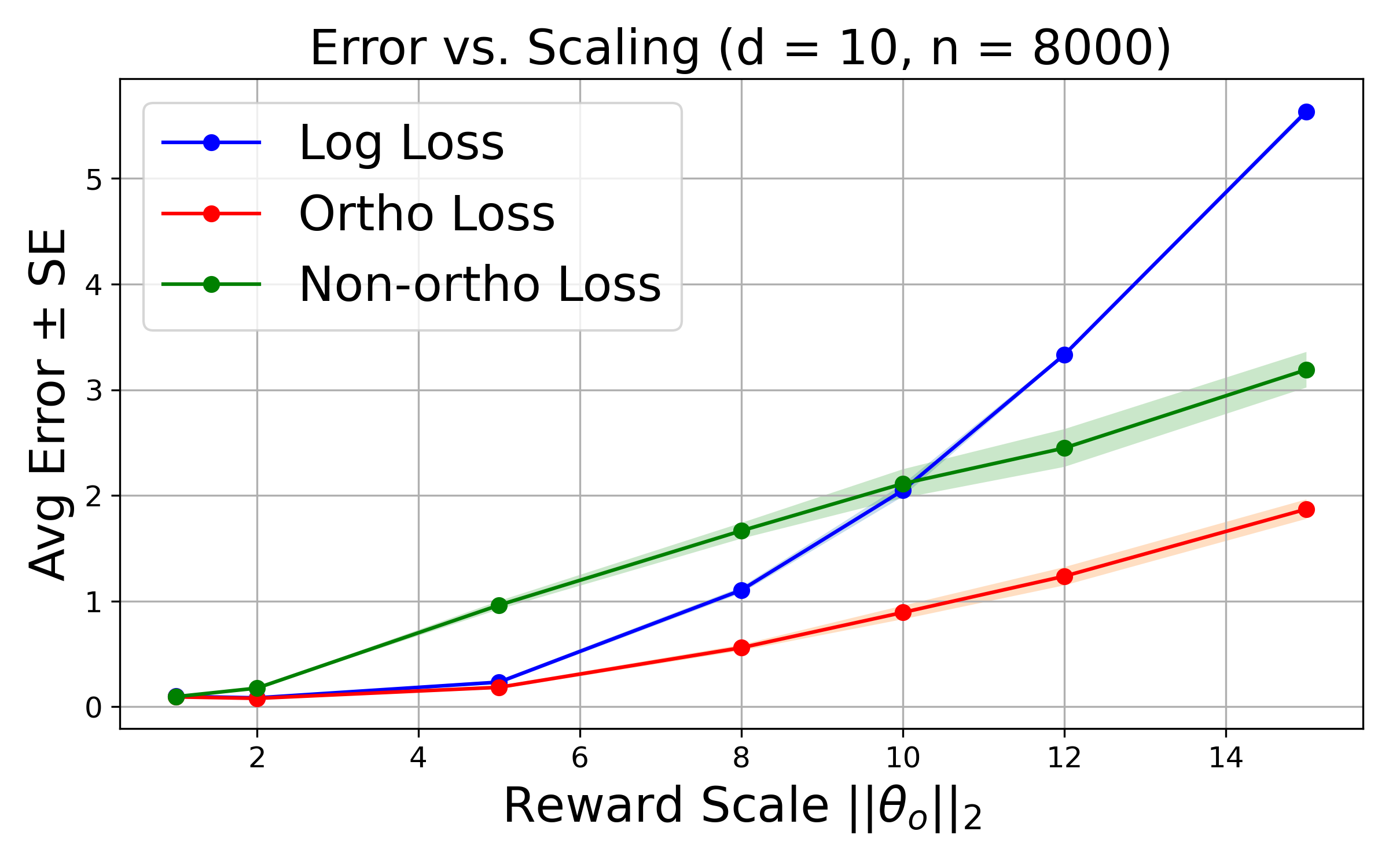}
  \end{subfigure}

\caption{Performance of the linear‐reward model as the true parameter magnitude $\lVert\theta_{0}\rVert$ varies.  Left: $d=5$; right: $d=10$.}
  \label{fig:two_panels}
\end{figure}
We evaluate three settings: (a) linear reward models, (b) neural network–parameterized rewards, and (c) a semi-synthetic text-to-image preference dataset. In addition, \cref{sec:additional_best_arm_identification} applies our orthogonal loss within the sequential-elimination algorithm of \cite{Li2024Enhancing} for best-arm identification, where it outperforms their algorithms.\footnote{The experiment code is available in \url{https://github.com/sawarniayush/Preference-Learning-with-Response-Time}.}

\paragraph{Linear rewards.}  
We evaluate on synthetic data where each query pair \((X^1,X^2)\) is drawn uniformly from the unit‐radius sphere and the true reward is \(r_{0}(X)=\langle\theta_{o},X\rangle\) with \(\|\theta_{o}\|_{2}=B\).  Preferences \(Y\) and response times \(T\) are generated via the EZ diffusion model.  For each \(B\), we draw \(\theta_{o}\) at random, generate 10 independent datasets, and fit \(\theta\) by minimizing the logistic loss, non-orthogonal loss and orthogonal loss. Further to estimate the nuisance parameter $\mfr(\cdot)$ and $t(\cdot)$ for orthogonal and non-orthogonal loss, we use logistic regression and a 3 layered neural network respectively. We report the average error \(\|\hat\theta-\theta_{o}\|_{2}\) as we vary $B$ in \cref{fig:two_panels}. Full experimental details are provided in Appendix~\ref*{sec:experiment_details}. We observe that the estimation error under the logistic loss grows rapidly with~\(B\) and the orthogonal loss \(\gL^{\mathtt{ortho}}\) consistently outperforms the other two losses.

\paragraph{Non‐linear rewards—neural networks}  
We generate synthetic data from random three‐layer neural networks with sigmoid activations, fixed input dimension, and hidden‐layer widths. For each training size $N$, we sample a new network (details in \cref*{sec:experiment_details}) as the true reward model and draw $N$ query pairs $X_1$, $X_2$ uniformly from the unit sphere. We evaluate all three losses—logistic, non‐orthogonal, and orthogonal—and for the orthogonal loss we compare both a simple data‐split implementation and a data‐reuse implementation. \Cref{fig:nn_experiment} reports the mean squared error (and $\pm$ standard error) of the estimated reward under each of the three loss functions and the corresponding policy regret after thresholding $\hat r$ into a binary decision. Regret measures the gap between the learned policy and the optimal binary policy. Denoting by $\hat X_i$ the option selected by our policy for query $i$, the regret over $M$ new queries is
$$
\mathrm{Regret}
= \sum_{i=1}^M \Bigl(\max\{r_o(X^1_i),\,r_o(X^2_i)\} \;-\; r_o(\hat X_i)\Bigr), 
\qquad
\hat X_i =
\begin{cases}
X^1_i, & \hat r(X^1_i)\ge \hat r(X^2_i),\\
X^2_i, & \text{otherwise}.
\end{cases}
$$
\paragraph{Text‐to‐image preference learning.}
We evaluate our approach on a real‐world text‐to‐image preference dataset - Pick-a-pick \cite{kirstain2023pick}, which contains an approx 500k text-to-image dataset generated from several diffusion models.  Furthermore, we use the PickScore model \cite{kirstain2023pick} as an oracle reward function, we simulate binary preferences \(Y \in \{+1,-1\}\) and response times \(T\) via the EZ‐diffusion process conditioned on the PickScore difference between each image-test pair.  To learn the reward model we extract 1024‐dimensional embeddings from both the text prompt and the generated image using the CLIP model \cite{radford2021learning}.  On top of these embeddings, we train a $4$-layered feed‐forward neural network with hidden layers of sizes $1024,512,256$, under three training objectives: our proposed orthogonal loss, a non‐orthogonal response‐time loss, and the standard log‐loss on binary preferences.  Further experiment details are available in \cref*{sec:experiment_details}. As shown in ~\cref{fig:image_experiment}, the orthogonal loss consistently achieves significantly lower Mean squared error and regret compared to the non‐orthogonal and log‐loss baselines.

\begin{figure}[t]
  \centering
    \begin{subfigure}[b]{0.45\linewidth}
    \centering
    \includegraphics[width=\linewidth]{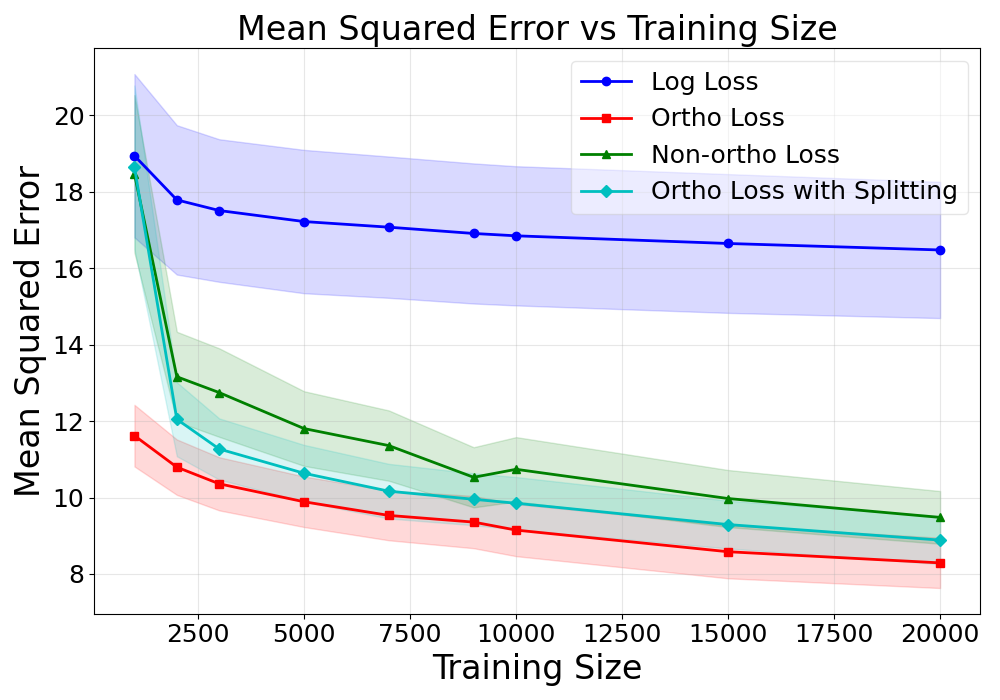}
  \end{subfigure}
  \hfill
  \begin{subfigure}[b]{0.45\linewidth}
    \centering
    \includegraphics[width=\linewidth]{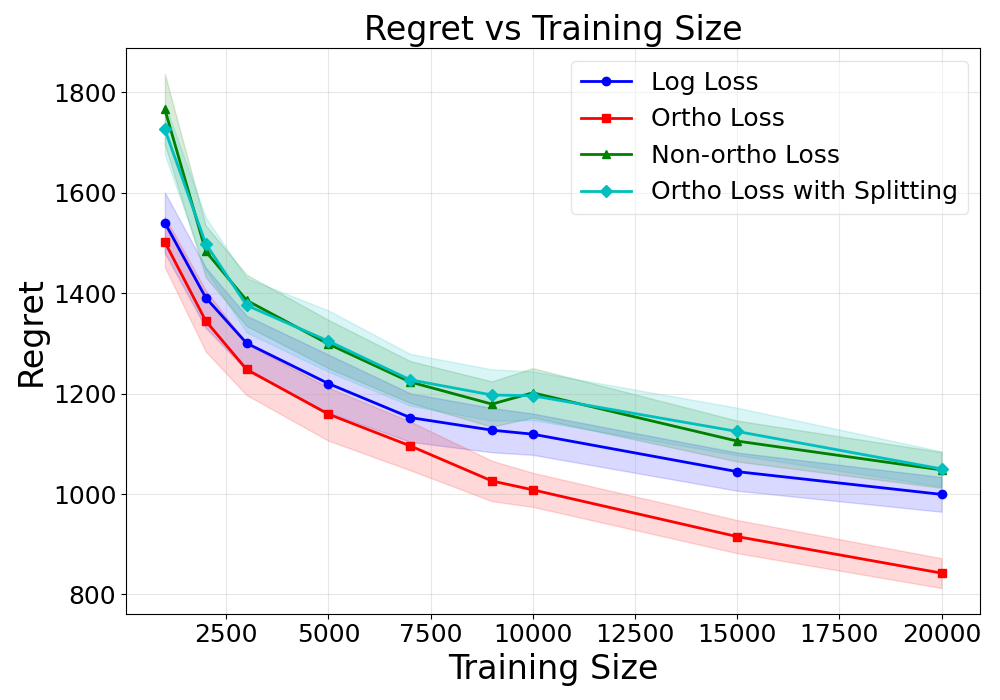}
  \end{subfigure}
  \caption{Left: mean‑squared error ($\pm$ standard error); right: cumulative regret ($\pm$ standard error) over $M =3000$ new queries on randomly sampled non-linear (neural network) reward models, both plotted against training‑set size $N$.}
  \label{fig:nn_experiment}
\end{figure}

\begin{figure}[t]
  \centering
  \begin{subfigure}[t]{0.45\linewidth}
    \centering
    \includegraphics[width=\linewidth]{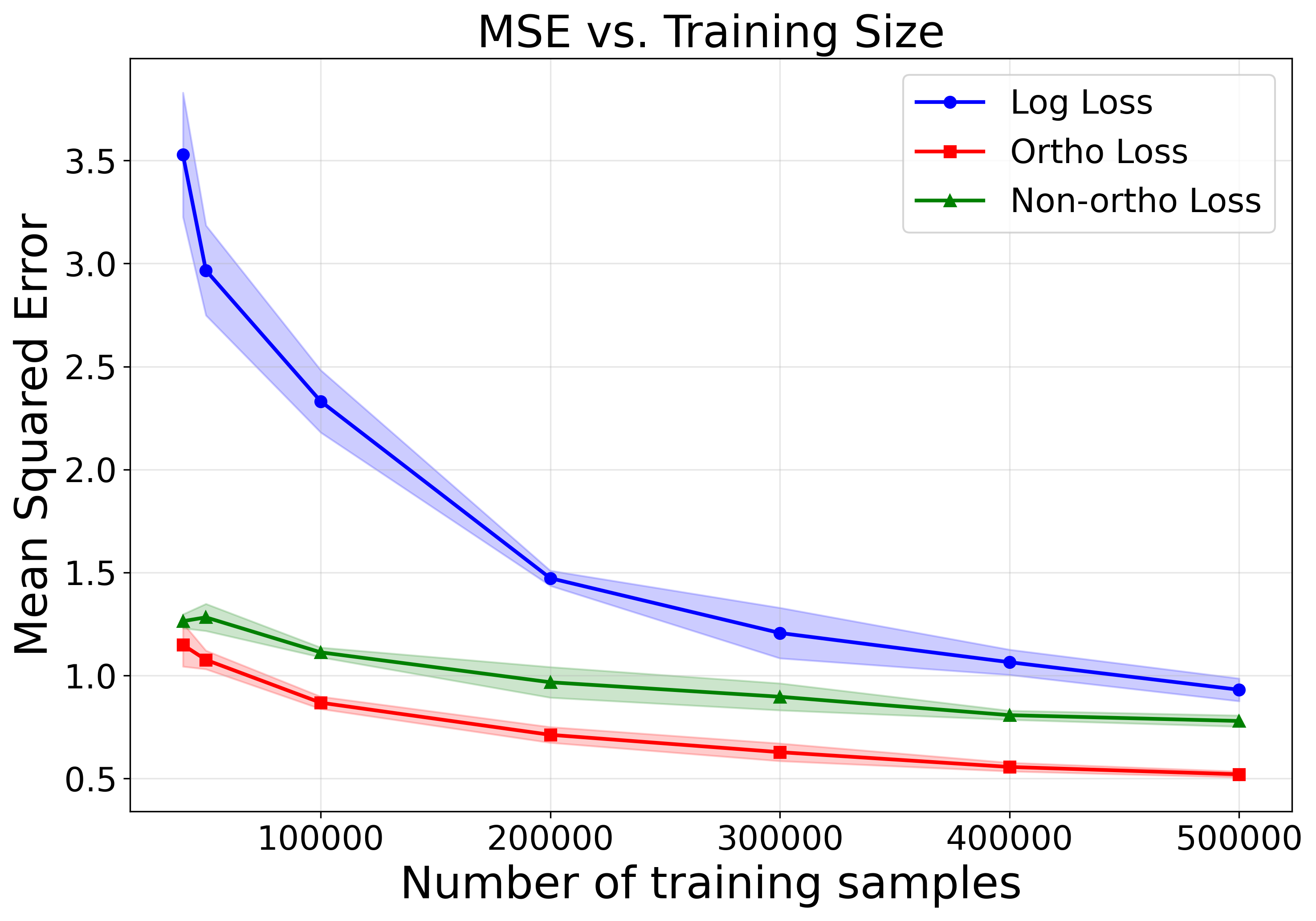}
  \end{subfigure}
  \hfill
  \begin{subfigure}[t]{0.45\linewidth}
    \centering
    \includegraphics[width=\linewidth]{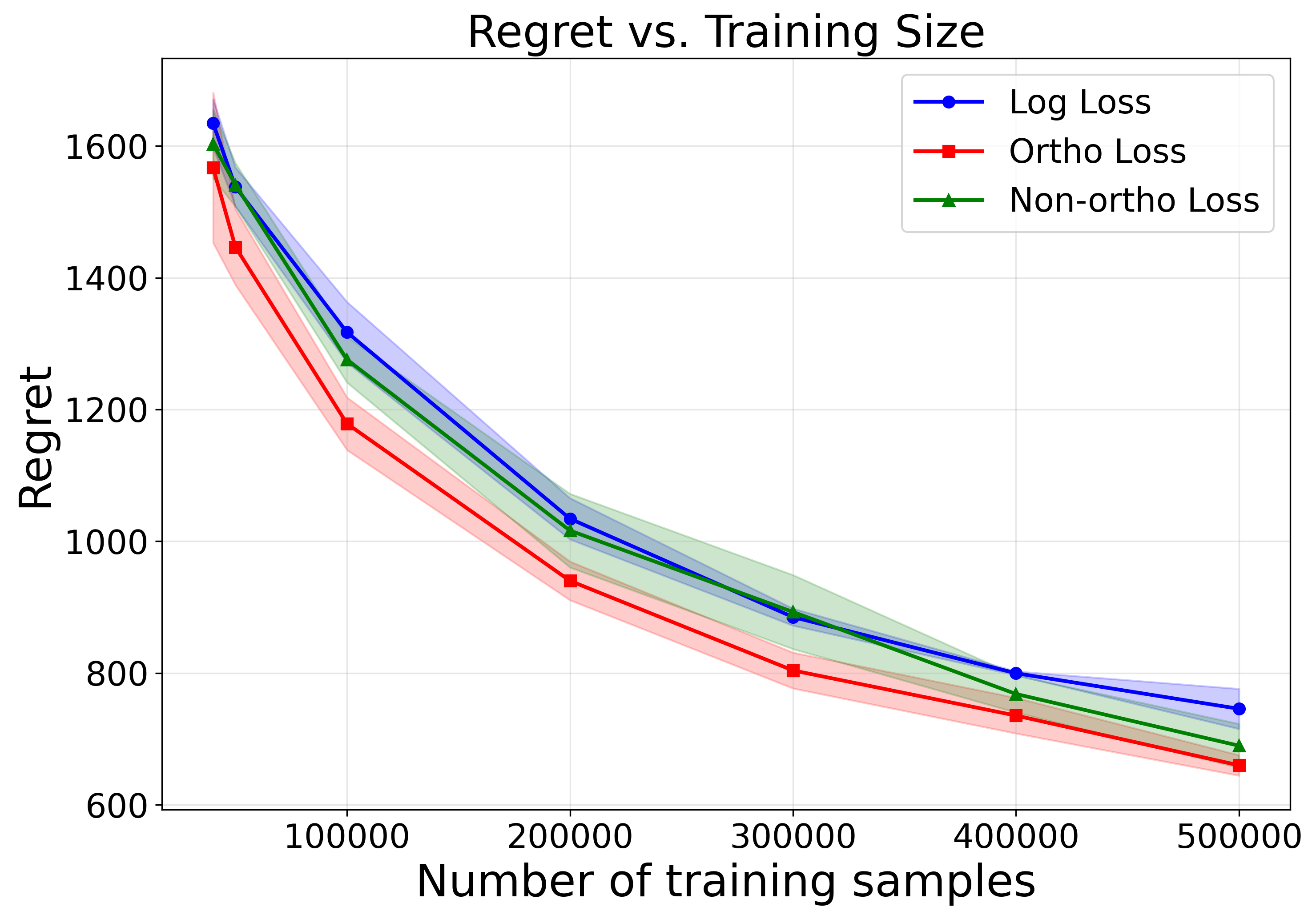}
  \end{subfigure}
  \caption{Left: mean‑squared error ($\pm$ standard deviation); right: cumulative regret ($\pm$ standard deviation) over $M =10000$ new queries on the Pick‑a‑Pic text‑to‑image task, both plotted against training‑set size $N$.} \label{fig:image_experiment}
\end{figure}

% \begin{figure}[t]  % use [t] or [h] to suit your layout
%   \centering
%   %
%   % ---------- left sub-figure ----------
%   \subfigure[te]{%
%     \includegraphics[width=0.4\linewidth]{plots/linear_dim5.png}%
%     \label{fig:left_panel}}
%   \hfill
%   %
%   % ---------- right sub-figure ----------
%   \subfigure[te]{%
%     \includegraphics[width=0.4\linewidth]{plots/linear_dim10.png}%
%     \label{fig:right_panel}}
%   %
%   \caption{Overall caption describing both panels.}
%   \label{fig:two_panels_neurips}
% \end{figure}

\vspace{-1.0 em}
\section{Conclusion and future work}
Our work proposes a Neyman-orthogonal loss function that jointly learns over the preferences and response time to estimate reward functions. We show that it better estimates the reward model theoretically and empirically in semi-synthetic setups. Possible future directions might include:
\paragraph{Extension to bandit setup}: Dueling bandits model an online setting where the learner queries arm pairs and observes a noisy binary preference. Extending this framework to incorporate response time as an auxiliary feedback signal—and adapting our supervised loss to such adaptive querying for regret minimisation —presents an interesting direction for future work.
\paragraph{Experiments with true response time data}: Our experiments assume a homogeneous reward model and synthetically generated response times. Evaluating on real-world response time data, which may be noisy and population-dependent (with varying barriers $a$ and true rewards $r_o(\cdot)$), could provide valuable insights into model robustness.
\paragraph{DPO-style loss~\cite{rafailov2024directpreferenceoptimizationlanguage}}: Beyond reward estimation, our framework can be extended to direct policy learning via a DPO-style objective by replacing the reward model with a policy parameterization. Adapting the orthogonal loss $\olone$ for this purpose is a promising avenue (see \cref{sec:nuisance_estimation_plugin}).

\section*{Acknowledgments}
Vasilis Syrgkanis is supported by NSF Award IIS-2337916. Ayush Sawarni is partially supported by NSF Award IIS-2337916.

\bibliography{ref}
\bibliographystyle{alpha}
\newpage
\appendix
\newpage

\section{Discussion on Barrier \(a\) and  Non‐Decision Time \(t_{\mathrm{nd}}\)}
\label{sec:a_estimation_tnondec_discussions}

\subsection{Properties of the EZ Diffusion Model}{\label{sec:EZ_var_properties}}
Recall the EZ diffusion model for a decision between two options \(X_1,X_2\), with drift \(\mu = r(X_1)-r(X_2)\) and symmetric absorbing barriers at \(\pm a\).  Writing \(X=(X_1,X_2)\) and overloading \(r(X)=r(X_1)-r(X_2)\), the choice and response‐time moments are given by \cite{Wagenmakers2007EZ}:
\[
P\bigl(Y=1\mid X\bigr)
= \frac{1}{1 + e^{-2a\,r(X)}},\quad
E\bigl[Y\mid X\bigr]
= \tanh\bigl(a\,r(X)\bigr),\quad
\mathrm{Var}\bigl(Y\mid X\bigr)
=1 - \tanh^2\bigl(a\,r(X)\bigr),
\]
\[
E\bigl[T\mid X\bigr]
=
\begin{cases}
\dfrac{a\,\tanh\bigl(a\,r(X)\bigr)}{r(X)}, & r(X)\neq 0,\\
a^2, & r(X)=0,
\end{cases}
\quad
\mathrm{Var}\bigl(T\mid X\bigr)
=
\begin{cases}
\dfrac{a\bigl(e^{4a\,r(X)}-1-4a\,r(X)e^{2a\,r(X)}\bigr)}
     {r(X)^3\,\bigl(e^{2a\,r(X)}+1\bigr)^2}, & r(X)\neq 0,\\[1ex]
\dfrac{2a^4}{3}, & r(X)=0.
\end{cases}
\]

\subsection{Loss Functions for General Barrier \(a\)}
When \(a\) is known, a natural \emph{non‐orthogonal} baseline loss is
\[
\nonol(r;\,t)
= \mathbb{E}\Bigl[\bigl(Y - \tfrac{r(X)\,t(X)}{a}\bigr)^{2}\Bigr],
\]
where \(t(X)\approx E[T\mid X]\).  This also results in the following \emph{orthogonal} loss
\begin{equation}\label{eq:orthogonal_loss_general_a}
\olone(r;\,\mfr,\,t)
= \mathbb{E}\Bigl[\bigl(Y - \l(T - t(X)\r)\tfrac{\mfr(X)}{a}\;-\;\tfrac{r(X)\,t(X)}{a}\bigr)^{2}\Bigr],
\end{equation}
where $\mfr(\cdot)$ is a crude estimate of the reward function $r_o(\cdot)$ (which can be calculated by minimizing the loss using the log loss) specified below 

\begin{equation}\label{eq:logloss_a}
   \logloss(r)
   = \mathbb{E}\bigl[\log\bigl(1 + \exp(-2\,Y\, a ~r(X))\bigr)\bigr].
\end{equation}

\subsection{Reward Learning When \(a\) Is Unknown}

If the barrier \(a\) is unknown, one can still estimate the scaled reward \(r(\cdot)/a\) with identical guarantees. Note that this suffices for any standard machine learning tasks involving learning from human feedback.  Indeed, minimizing the log‐loss \eqref{eq:logloss_a} yields an estimate of \(a\,r_o(\cdot)\), and hence of \(r_o(\cdot)\) up to scale.  However, the loss \eqref{eq:orthogonal_loss_general_a} cannot be applied directly because it requires the nuisance \(\mfr(\cdot)/a\), which is not identifiable without knowing \(a\). 

Instead, we introduce a new nuisance pair \((y,t)\), where \(y(X)\) approximates $E[Y\mid X]$. We also define the notation $y_o(X) \coloneqq \E[Y \mid X]$.  One can then define an alternative orthogonal loss,
\begin{equation}\label{eq:orthogonal_loss_unknown_a}
\oltwo(r;\,y,\,t)
= \E\Bigl[\bigl(Y - (T - t(X))\tfrac{y(X)}{t(X)}
             - \,\tfrac{r(X) ~ t(X)}{a}\bigr)^{2}\Bigr].
\end{equation}

\cref{eq:orthogonal_loss_unknown_a} is also Neyman-orthogonal with respect of \((y,t)\) as we show in the lemma below \ref{lemma:Neyman_orthogonality}. Moreover, the error rate guarantees for \ref{eq:orthogonal_loss_general_a} and \ref{eq:orthogonal_loss_unknown_a} are identical for the linear case, as shown in \cref{sec:asymptotic_guarantees_linear}.

\begin{lemma}
    The population loss $\oltwo$ is Neyman-orthogonal with respect to nuisance $g \coloneqq \l( y , t \r)$ i.e. $D_g D_{r} \oltwo (y_o; g_o)[r - r_o, g- g_o] = 0 \quad \forall r \in \gR \quad \forall g \in \gG$. 
\end{lemma}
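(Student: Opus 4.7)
The plan is to mirror the calculation used for $\olone$ in \cref{lemma:Neyman_orthogonality}: compute the mixed directional derivative $D_g D_r \ell$ of the pointwise loss, evaluate it at $(r_o,g_o)$, and verify that its conditional expectation given $X$ vanishes by invoking the two defining moment identities of the nuisances, namely $\E[Y\mid X]=y_o(X)$ and $\E[T\mid X]=t_o(X)$, together with the algebraic identity relating $y_o$, $t_o$ and $r_o$.

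First I would set $A(r,g;Z)=Y-(T-t(X))\tfrac{y(X)}{t(X)}-\tfrac{r(X)t(X)}{a}$ so that $\ell=A^2$ and $D_r\ell[h]=-\tfrac{2}{a}A\,h(X)\,t(X)$. Differentiating $A$ in the nuisance direction $(k_y,k_t)=(y-y_o,t-t_o)$ via the product and quotient rules gives
\begin{align*}
D_g A\,[k_y,k_t]
&= -(T-t(X))\tfrac{k_y(X)}{t(X)}
 + k_t(X)\tfrac{y(X)}{t(X)}
 + (T-t(X))\,y(X)\tfrac{k_t(X)}{t(X)^{2}}
 - \tfrac{r(X)\,k_t(X)}{a}.
\end{align*}
Then, by the product rule applied to $D_r\ell[h]=-\tfrac{2}{a}A\,h(X)\,t(X)$,
\begin{align*}
D_g D_r \ell[h,(k_y,k_t)]
&= -\tfrac{2}{a}\Bigl[\bigl(D_g A\bigr)\,h(X)\,t(X)\;+\;A\,h(X)\,k_t(X)\Bigr].
\end{align*}

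The second step is to evaluate at $(r_o,g_o)$ and take conditional expectation given $X$. The key identity, which is the analogue of $r_o(X)=\E[Y\mid X]/\E[T\mid X]$ used for $\olone$, is $y_o(X)/t_o(X)=r_o(X)/a$, obtained by dividing $y_o(X)=\tanh(a r_o(X))$ by $t_o(X)=a\tanh(a r_o(X))/r_o(X)$. Using this identity, at $(r_o,g_o)$ we get $A=Y-\tfrac{r_o(X)\,T}{a}$, whose conditional mean is $y_o(X)-\tfrac{r_o(X)\,t_o(X)}{a}=0$; this kills the $A\,h(X)\,k_t(X)$ term. For the $(D_g A)\,h(X)\,t_o(X)$ term, the first and third summands carry the factor $(T-t_o(X))$ whose conditional mean is zero, while the remaining two summands combine to $k_t(X)\bigl(\tfrac{y_o(X)}{t_o(X)}-\tfrac{r_o(X)}{a}\bigr)=0$, again by the same identity.

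Combining both observations yields $\E\!\left[D_g D_r \ell(r_o,g_o;Z)\bigl[r-r_o,(y-y_o,t-t_o)\bigr]\right]=0$ for arbitrary directions, which is precisely the Neyman-orthogonality condition \eqref{eq:Neyman_orthogonal}. I do not expect a serious obstacle here; the only place one has to be careful is the $t$-derivative of the ratio $y/t$ inside $A$, where the product/quotient rule produces three terms that must all be tracked, and the cancellation at the end hinges crucially on the identity $y_o/t_o=r_o/a$, which is special to the EZ diffusion parametrisation.
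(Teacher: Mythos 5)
Your proof is correct and matches the paper's strategy exactly: compute the mixed directional derivative of the pointwise loss at the true parameters, then kill each contribution in conditional expectation using $\E[T\mid X]=t_o(X)$, $\E[Y\mid X]=y_o(X)$, and the EZ-identity $y_o(X)/t_o(X)=r_o(X)/a$. The paper presents the mixed derivative directly in a pre-simplified form (using $y_o=r_o t_o/a$ to rewrite $Y-\tfrac{r_o t_o}{a}$ as $Y+y_o-2\tfrac{r_o}{a}t_o$), while you keep the quotient-rule terms explicit before cancelling, but these are cosmetic differences and both arguments are the same.
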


\begin{proof}
Let $\ell(\cdot)$ be the pointwise evaluation  of $\oltwo$ at a data point
$Z=(X,Y,T)$.  A direct calculation gives
\begin{align*}
D_gD_{r}\,\ell\bigl(r,g_o;X,Y,T\bigr)\!
      \left[r-r_o,\;g-g_o\right]
&= 
\frac{-2}{a}\bigl( Y + y_o(X) - 2\frac{r_o(X)}{a}t_o(X)\bigr)\bigl(t(X)-t_o(X)\bigr)\bigl(r(X)-r_o(X)\bigr) \\
&\quad+\frac{2}{a}\bigl(T-t_o(X)\bigr)
        \bigl(y(X)-y_o(X)\bigr)\bigl(r(X)-r_o(X)\bigr).
\end{align*}
Because $\frac{r_o(X)}{a}=\dfrac{\mathbb{E}[Y\mid X]}{t_o(X)}$, we have $\mathbb{E}\!\left[Y + y_o(X) - 2\frac{r_o(X)}{a}t_o(X)\mid X\right]=0$. Similarly, by definition of $t_o$, $\mathbb{E}\!\left[T -t_o(X)\mid X\right]=0$. Taking expectations of the directional derivative, therefore yields
\[
\mathbb{E}\!\left[
D_gD_{r}\,\ell\bigl(r,g_o;X,Y,T\bigr)\!
      \left[r-r_o,\;g-g_o\right]
\right]=0,
\]
which establishes the claimed Neyman‑orthogonality.
\end{proof}

% If the barrier $a$ is unknown, then one can learn a scaled reward model $\frac {r_o(\cdot)}{a}$ using essentially the same techniques with identical error guarantees. Note that even with just preferences (and exploiting the \emph{Bradley-Terry}) model minimizing the log loss yields an estimate  $a ~ r_o(\cdot)$. However, note that the ortho loss defined in \cref{eq:orthogonalized_loss1_with_a} is not usable with unknown $a$. This is because the nuisace function $\frac{\mfr(\cdot)}{a}$ is not possible estimate, one can estimate $a \mfr(\cdot )$ via log loss but that not useful here because that would require knowledge of a to get the exact value of $\mfr(\cdot) / a$.  however one can construct yet another othrogonal loss with a new nuisance tuple $(y(\cdot), t(\cdot))$ where $y(X)$ estimate $\E [Y \mid X]$ with identical guarantees. The loss look like this:
% \begin{equation}
%         \olone(r; y ,t) = \mathbb{E} \left[\left(Y- \frac{T- t(X)}{a}\mfr(X) - \frac{r(X) t(X)}{a}\right)^2 \right]
% \end{equation}
\subsection{Experiment for varying $a$}
We conduct experiments for varying  barrier levels $a$ for the case where the barrier $a$ is unknown. Similar the synthetic neural network experiment in the main paper, we generate synthetic data from random three‐layer neural networks with sigmoid activations, fixed input dimension ($=10$), and hidden‐layer widths ($32 , 16$). We sample a random neural network (details in \cref*{sec:experiment_details}) as the true reward model and draw $2000$ query pairs $X_1$, $X_2$ sampled from a spherical guassian distribution. We evaluate all three losses—logistic, non‐orthogonal, and orthogonal $\oltwo$ for different values of barrier $a$. We find that the mean squared error of the estimated reward under each of the three loss functions and the corresponding policy regret (after thresholding $\hat r$ into a binary decision) is consistently better for $\oltwo$. \footnote{Recall that the regret over $M$ new queries is
$$
\mathrm{Regret}
= \sum_{i=1}^M \Bigl(\max\{r_o(X^1_i),\,r_o(X^2_i)\} \;-\; r_o(\hat X_i)\Bigr), 
\qquad
\hat X_i =
\begin{cases}
X^1_i, & \hat r(X^1_i)\ge \hat r(X^2_i),\\
X^2_i, & \text{otherwise}.
\end{cases}
$$
}

\begin{table}[ht]
  \centering
  \caption{Mean squared error for different losses across barrier values}
  \label{tab:barrier_results}
  \begin{tabular}{@{}>{\bfseries}c|rrrr@{}}
    \toprule
\shortstack{Barrier\\$a$}
    & \shortstack{Log‐loss\\$\logloss$}
    & \shortstack{Non‐orthogonal\\$\nonol$}
    & \shortstack{Orthogonal\\$\oltwo$} \\
    \midrule
     0.5 & 13.1047 & 10.3015 &  9.2564 \\
     0.7 & 14.8284 & 10.8872 &  9.3922 \\
     0.9 & 16.1897 & 11.2413 &  9.2729 \\
     1.1 & 17.0966 & 11.7575 &  9.5093 \\
     1.3 & 17.8099 & 11.8773 &  9.6774 \\
     1.5 & 18.4114 & 12.3013 &  9.7517 \\
     1.7 & 18.9334 & 12.4945 &  9.8271 \\
     1.9 & 19.2903 & 12.6231 &  9.8571 \\
     2.1 & 19.6200 & 12.8721 &  9.8152 \\
     2.3 & 19.8787 & 13.1132 &  9.9202 \\
     2.5 & 20.1948 & 13.1526 & 10.1428 \\
    \bottomrule
  \end{tabular}
\end{table}

\begin{table}[ht]
  \centering
  \caption{Regret for $M=2000$ queries for different losses across different barrier values}
  \label{tab:regret_2000}
  \begin{tabular}{@{}>{\bfseries}c|rrrr@{}}
    \toprule
    \shortstack{Barrier\\$a$}
      & \shortstack{Log‐loss \\ $\logloss$}
      & \shortstack{Non‐orthogonal \\ $\nonol$}
      & \shortstack{Orthogonal \\ $\oltwo$} \\
    \midrule
     0.5 & 0.2980 & 0.2537 & 0.2538 \\
     0.7 & 0.3437 & 0.2987 & 0.2971 \\
     0.9 & 0.3635 & 0.3000 & 0.3016 \\
     1.1 & 0.3732 & 0.3152 & 0.3044 \\
     1.3 & 0.3686 & 0.3142 & 0.3120 \\
     1.5 & 0.3807 & 0.3148 & 0.3115 \\
     1.7 & 0.3787 & 0.3155 & 0.3105 \\
     1.9 & 0.3751 & 0.3216 & 0.3037 \\
     2.1 & 0.3750 & 0.3198 & 0.3082 \\
     2.3 & 0.3772 & 0.3203 & 0.3028 \\
     2.5 & 0.3763 & 0.3171 & 0.3085 \\
    \bottomrule
  \end{tabular}
\end{table}

\subsection{Second order dependence in errors in $\tnd$}
\cref{thm:finite_sample_rates} directly implies the following corollary when $\tnd$ is misspecified

\begin{corollary}
Let $\widetilde t_{\mathrm{nd}}$ be any estimate of the non‐decision time satisfying
\[
\bigl|\widetilde t_{\mathrm{nd}} - t_{\mathrm{nd}}\bigr|\le \epsilon.
\]
and nuisance $\hat t^{(\epsilon)}$ is estimated using mis-specified decision times by 
\[
\widetilde T \;=\; T_{\mathrm{total}} - \widetilde t_{\mathrm{nd}},
\]
and let $\hat r^{(\epsilon)}$ be the reward estimator obtained by replacing $T$ with $\widetilde T$ in the orthogonal loss.  Then under the same conditions as Theorem~\ref{thm:finite_sample_rates},
\begin{equation*}
\bigl\|\hat r^{(\epsilon)} - r_o\bigr\|_{L_2(\mathcal D)}^2
\;=\;
O_\epsilon\!\bigl(\epsilon^\frac{4}{1+\alpha}\bigr)\;+\;o_n(1),
\end{equation*}
where $o_n(1)$ is the estimation‐error from Theorem~\cref{thm:finite_sample_rates} and $\alpha$ is as in that theorem.
\end{corollary}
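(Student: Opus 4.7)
The plan is to reduce the corollary to an application of Theorem~\ref{thm:finite_sample_rates}, treating the misspecification of $t_{\mathrm{nd}}$ as a deterministic additive bias on the nuisance estimate $\widehat t^{(\epsilon)}$. Setting $\delta := t_{\mathrm{nd}}-\widetilde t_{\mathrm{nd}}$ with $|\delta|\le\epsilon$, the observed decision time satisfies $\widetilde T = T+\delta$, so any regression of $\widetilde T$ on $X$ targets $t_o(X)+\delta$ rather than $t_o(X)$; in particular $\widehat t^{(\epsilon)}-t_o = \delta + o_n(1)$ in $L_2(\mathcal D)$. The other nuisance $\widehat{\mfr}$ is trained from binary preferences only (via the log-loss) and is therefore unaffected by the misspecification, so $\|\widehat{\mfr}-r_o\|_{L_2(\mathcal D)}=o_n(1)$.

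Next I would substitute these errors into the pre-norm~\eqref{eq:R_alpha_g_norm}: the product term contributes $\|(\widehat{\mfr}-r_o)(\widehat t^{(\epsilon)}-t_o)\,r\|_{L_1(\mathcal D)} = o_n(1)\cdot\epsilon\cdot\|r\|_{L_1(\mathcal D)}$, and the squared term contributes $\|(\widehat t^{(\epsilon)}-t_o)^{2}\,r\|_{L_1(\mathcal D)} = (\epsilon^{2}+o_n(1))\,\|r\|_{L_1(\mathcal D)}$. Using the boundedness $\|r\|_\infty\le S$ on the reward class to control $\|r\|_{L_1(\mathcal D)}/\|r\|_{L_2(\mathcal D)}^{1-\alpha}\le S^{\alpha}$ uniformly over $r\in\mathcal R$, this yields $\|\widehat g-g_o\|_{\normtuple}^{2} = O(\epsilon^{2})+o_n(1)$, so the nuisance term in Theorem~\ref{thm:finite_sample_rates} contributes $S^{4/(1+\alpha)}\,O\!\bigl(\epsilon^{4/(1+\alpha)}\bigr)+o_n(1)$; the excess-risk term is absorbed into the $o_n(1)$ error inherited from Corollary~\ref{corollary:data_split} or Corollary~\ref{corollary:data_reuse}.

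The main obstacle is that $\widehat r^{(\epsilon)}$ is the minimizer of the empirical loss built from $\widetilde T$, whereas Theorem~\ref{thm:finite_sample_rates} is stated for the loss built from $T$; one cannot apply the theorem directly with $g_o=(r_o,t_o)$ because the link between the empirical minimizer and $r_o$ through the excess-risk definition breaks. The cleanest way around this is to apply the theorem to the \emph{modified} population loss $\widetilde{\gL}^{\mathtt{ortho}}$, whose population minimizer equals $\widetilde r_o(X):=\E[Y\mid X]/(t_o(X)+\delta)$ and whose true nuisance is $\widetilde g_o=(\widetilde r_o,\,t_o+\delta)$. One must first check that $\widetilde{\gL}^{\mathtt{ortho}}$ remains Neyman-orthogonal at the shifted pair, which follows line-by-line from the computation in Lemma~\ref{lemma:Neyman_orthogonality} with $t_o$ replaced by $t_o+\delta$. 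The theorem then yields $\|\widehat r^{(\epsilon)}-\widetilde r_o\|_{L_2(\mathcal D)}^{2} = O\!\bigl(\epsilon^{4/(1+\alpha)}\bigr)+o_n(1)$, after which a triangle inequality combined with the Taylor estimate $\|\widetilde r_o - r_o\|_{L_2(\mathcal D)}=O(\epsilon)$, obtained by first-order expansion of $(t_o+\delta)^{-1}$ around $\delta=0$ (using that $t_o(X)\ge a^{2}$ is bounded away from zero under the EZ model), closes the argument at the advertised rate.
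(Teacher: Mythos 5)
Your approach differs from the paper's proof in a meaningful way, and the workaround you design in the third paragraph does not actually deliver the advertised $O(\epsilon^{4/(1+\alpha)})$ rate. The paper does \emph{not} re-center the analysis at a shifted population minimizer. Instead it applies \Cref{thm:finite_sample_rates} directly at the original pair $(r_o,g_o)$, and the key observation (item~3 of the paper's proof) is that the shift $\delta := t_{\mathrm{nd}}-\widetilde t_{\mathrm{nd}}$ \emph{cancels} in the debiasing term: since $\hat t^{(\epsilon)}$ is trained on $\widetilde T = T+\delta$, it approximates $t_o+\delta$, so $\widetilde T-\hat t^{(\epsilon)}(X)\approx T-t_o(X)$ is not biased at all. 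The only place $\delta$ survives is in the $r(X)\,t(X)$ product, and that error is exactly what the $\normtuple$ pre-norm measures, yielding $\|\hat g - g_o\|_{\normtuple}^2 = O(\epsilon^2)+o_n(1)$ and hence the $\epsilon^{4/(1+\alpha)}$ exponent through the theorem's $\|\hat g-g_o\|_{\normtuple}^{4/(1+\alpha)}$ term. The cancellation is what makes the paper's direct application possible; you never invoke it.

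The gap in your re-centering route is that it is internally inconsistent. If you shift to $\widetilde g_o=(\widetilde r_o,\,t_o+\delta)$, then the nuisance $\hat t^{(\epsilon)}$ converges to exactly $t_o+\delta$, so $\|\hat t^{(\epsilon)}-\widetilde t_o\|_{L_2(\gD)}=o_n(1)$ with \emph{no} $\epsilon$-dependence; the pre-norm $\|\hat g-\widetilde g_o\|_{\normtuple}$ is therefore $o_n(1)$, and \Cref{thm:finite_sample_rates} applied at the shifted center gives $\|\hat r^{(\epsilon)}-\widetilde r_o\|_{L_2(\gD)}^2=o_n(1)$, not $O(\epsilon^{4/(1+\alpha)})+o_n(1)$ as you claim. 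The entire $\epsilon$-dependence must then come out of your triangle-inequality step, $\|\widetilde r_o-r_o\|_{L_2(\gD)}^2=O(\epsilon^2)$, so your argument closes at the rate $O(\epsilon^2)+o_n(1)$. Since $\tfrac{4}{1+\alpha}\ge 2$ for $\alpha\in[0,1]$ and $\epsilon<1$, $\epsilon^2$ is a strictly \emph{weaker} bound than the $\epsilon^{4/(1+\alpha)}$ stated in the corollary; your route as written does not establish the claimed rate.

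Two smaller points. First, your Taylor step uses $t_o(X)\ge a^2$, which is wrong: the EZ model gives $t_o(X)=a\tanh(a\,r_o(X))/r_o(X)\le a^2$, i.e.\ $a^2$ is the \emph{upper} bound. The correct lower bound away from zero is $t_o(X)\ge a\tanh(aS)/S>0$, using the uniform bound $\|r_o\|_\infty\le S$; this is enough to salvage the Lipschitz estimate $\|\widetilde r_o-r_o\|_{L_2(\gD)}=O(\epsilon)$, just with a different constant. Second, in the re-centered frame one also needs to verify first-order optimality of $\widetilde r_o$ for the $\widetilde T$-loss at $\widetilde g_o$ (not just orthogonality); this does hold, but you should state it rather than only checking the mixed derivative.
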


\begin{proof}
Apply Theorem~\ref{thm:finite_sample_rates} with the nuisance pair $\hat g=(\hat \mfr, \hat t^{(\epsilon)} )$.  Observe that:
\begin{enumerate}
  \item The misspecification $|\widetilde t_{\mathrm{nd}} - t_{\mathrm{nd}}|\le\epsilon$ induces at most an $O(\epsilon)$ error in the estimated nuisance function $\widetilde t(X)$.
  \item By Neyman‐orthogonality, the orthogonal loss incurs only higher‐order dependence on any nuisance error; in particular, a perturbation of order $\epsilon$ in $\hat t$ contributes $O(\epsilon^4)$ to the final reward‐estimation error. 
  \item The debiasing term $(T - t(X))$ is unaffected by the shift in non‐decision time, since $\widetilde t_{\mathrm{nd}}$ cancels in the difference.
\end{enumerate}
Moreover, we can write 
\begin{equation}
\lVert \hat g - g_o\rVert_{\normtuple}^2
=  o_n(1) + C \sup_{r\in\gR}
\l(\frac{
\bigl\lVert \epsilon^2 r(\cdot) \bigr\rVert_{L_{1}(\gD)}
}{
\lVert r\rVert_{L_{2}(\gD)}^{\,1-\alpha}
}  \r)
\end{equation}
where $C$ depends only on the uniform bound $S = \|r\|_{\infty}$. Combining these observations with the $o_n(1)$ estimation rate from Theorem~\ref{thm:finite_sample_rates} yields the stated bound.
\end{proof}

% \subsection{Estimating nuisance $t$ via plugging in $\mfr$}
% One may plug in $t(\cdot)=\frac{\tanh(\mfr(\cdot))}{\mfr(\cdot)}$ directly into $\mathcal{L}^{\mathrm{ortho}}$, and the loss remains Neyman‐orthogonal. This plug-in strategy offers further flexibility: one can first train the reward model $\mfr$ (e.g.\ by minimizing the logistic loss on preference data), then uses the fitted $\mfr$ as the nuisance in $\mathcal{L}^{\mathrm{ortho}}$ to exploit response-time information $T$ for faster convergence. 
% \begin{align}
% {\label{eq:orthogonalized_loss1}}
%     \olone(r;\mfr ) = \mathbb{E} \left[\left(Y- T \mfr(X) + t(X) ~ \tanh(\mfr(X)) - r(X) \frac{\tanh(\mfr(X))}{\mfr(X)}\right)^2 \right], 
% \end{align}

% While our work focuses on reward estimation, this framework also supports DPO-style objectives~\cite{rafailov2024directpreferenceoptimizationlanguage}. This can be done whenever one has closed form expression for the reward function $r$ in terms of the policy $\pi$. That is, one can learn a policy directly from the above loss by learning in 2 stages.   First learning policy $\hat \pi$ using just the preference data and minimizing the log loss. Then substitute the nuisance $\mfr$ in the orthogonal loss $\olone$ in terms of the first stage policy.

\subsection{Estimating the nuisance $t$ via a plug-in of \(\mfr\)}{\label{sec:nuisance_estimation_plugin}}

A convenient way to estimate \(t(\cdot)\) is to exploit the EZ‐diffusion identity
\[
t(X) = \frac{\tanh\!\bigl(\mfr(X)\bigr)}{\mfr(X)}.
\]
Substituting this directly into the orthogonal loss \(\mathcal{L}^{\mathrm{ortho}}\) preserves Neyman-orthogonality.  Concretely, one proceeds in two stages:

\begin{enumerate}
  \item Fit a preliminary reward model \(\hat \mfr\) (for example by minimizing the logistic loss on preference data).
  \item Define the plug-in nuisance
    \[
      \hat t(X) \;=\; \frac{\tanh\!\bigl(\hat \mfr(X)\bigr)}{ \hat \mfr(X)},
    \]
    and minimize the orthogonalized squared loss with $\hat \mfr$ as the nuisance
    \begin{equation}\label{eq:orthogonalized_loss_plugin}
      \mathcal{L}^{\mathrm{ortho}}(r;\mfr)
      \;=\;
      \mathbb{E}\l[\l(
        Y \;-\; T\,\mfr(X)
        \;+\; \tanh\!\bigl(\mfr(X)\bigr)
        \;-\;
        r(X)\, \frac{\tanh\!\bigl(\mfr(X)\bigr)}{\mfr(X)}
      \r)^{2}\r].
    \end{equation}
\end{enumerate}

This plug-in strategy retains the first‐order robustness to errors in \(\mfr\) and yields faster convergence by leveraging response-time information \(T\) in the second stage.

Hence, one can extend this idea to directly learn a policy on preference data. In Direct Preference Optimization (DPO)~\cite{rafailov2024directpreferenceoptimizationlanguage}, one obtains the reward function in closed form from a learned policy \(\pi\):
\begin{align*}
  \mfr(X) = \mfr(X^1) - \mfr(X^2)
  &= c \l(\log\frac{\pi(X^1)}{\pi_{\text{ref}}(X^1)} - \log\frac{\pi(X^2)}{\pi_{\text{ref}}(X^2)} \r)
\end{align*}
where \(\pi_{\text{ref}}\) is the reference policy and $c$ is a constant.  We can embed these into the two-stage procedure:  1) Train \(\hat\pi\) by minimizing the logistic‐loss (DPO-loss)  on the observed preferences, yielding
    \(\mfr(X)=\log\frac{\hat\pi(X)}{\pi_{\text{ref}}(X)}.\)
2) Plug \(\mfr\) into \eqref{eq:orthogonalized_loss_plugin}.  Since \(r\) itself is a known function of \(\pi\), this directly learns a new policy that incorporates response time. Adapting $\olone$ for directly learning a policy from preference data can be an interesting direction for future work.

\newpage
\section{Asymptotic guarantee for the linear reward model classes}\label{sec:asymptotic_guarantees_linear}
\subsection{Guarantees for $\logloss$}{\label{sec:asymptotic_preference}}

We now give asymptotic guarantees on the convergence of the choice-only estimator using the idea of influence functions \cite{Huber201,hampelinfluencefunction}. For the choice-only estimator, the logistic regression loss function $\ell(\theta,X,Y) := \log \left(\sigma(2aY\langle \theta, X \rangle)\right)$ as described in \eqref{eq:logloss_a}. \footnote{The constant $2a$ follows from the probability $P(Y\mid X)$ in \eqref{eq:expected_time_pref_probs}.} The informal lemma shows asymptotic results on the estimator $\hat \theta$ minimize the loss function $\ell(.,X,Y)$ over an iid dataset. Further, we assume that this dataset is generated using a model parametrized by $\theta_0$. i.e. $\mathbb{E}\left[ \frac{\partial}{\partial \theta} \ell(\theta_0,X,Y)\right] = 0$. Further, we often overload $||.||_2$ with  $L_2(\gD)$ norm when operated on a function formally, $||t||_2 = \sqrt{\mathbb{E}_{X}\left[(t(X))^2 \right]}$. 

\begin{lemma}{\label{lemma:asymptotic_influence_function}}
    Define the $\text{IF}(\theta,X,Y) := - \left(\mathbb{E} \left[\frac{\partial^2}{\partial \theta^2} \ell(\theta,X,Y) \right]\right)^{-1} \frac{\partial}{\partial \theta} \ell(\theta, X,Y)$. Let the estimator $\hat \theta_n$ minimize the loss function $\ell(.)$ over the dataset $\{X_i,Y_i\}_{i=1}^{n}$. Under standard regularity assumptions, 

    $$\sqrt{n} (\hat \theta_n - \theta_0) \overset{d}{\rightarrow} \mathcal{N}(0, \text{Var} (\text{IF}(\theta_0,X,Y)))$$
\end{lemma}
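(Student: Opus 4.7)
The plan is to apply the standard M-estimator asymptotic theory: exploit the first-order optimality condition for $\hat{\theta}_n$, linearize via a Taylor expansion around $\theta_0$, and then combine a law of large numbers on the empirical Hessian with a central limit theorem on the score, gluing them together with Slutsky's theorem. Because $\ell$ is twice continuously differentiable (and convex) in $\theta$, the estimator satisfies $\frac{1}{n}\sum_{i=1}^n \nabla_\theta \ell(\hat{\theta}_n, X_i, Y_i) = 0$, which is the entry point for the argument.

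First I would establish consistency $\hat{\theta}_n \overset{p}{\to} \theta_0$. This is where the ``standard regularity assumptions'' do their work: convexity of $\ell$ in $\theta$, identifiability via $\mathbb{E}[\nabla\ell(\theta_0,X,Y)] = 0$, and positive definiteness of $H := \mathbb{E}[\nabla^2 \ell(\theta_0, X, Y)]$ (which is invertible under the moment condition of the excerpt) together with a uniform LLN yield consistency by the classical argmin/argmax continuity argument.

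Second, I would Taylor expand the empirical score: there exists $\tilde{\theta}$ on the segment between $\hat{\theta}_n$ and $\theta_0$ with
\begin{equation*}
0 \;=\; \frac{1}{n}\sum_{i=1}^n \nabla\ell(\theta_0, X_i, Y_i) \;+\; \hat{H}_n(\tilde{\theta})\,(\hat{\theta}_n - \theta_0),
\qquad
\hat{H}_n(\tilde{\theta}) := \frac{1}{n}\sum_{i=1}^n \nabla^2\ell(\tilde{\theta}, X_i, Y_i).
\end{equation*}
Rearranging and scaling by $\sqrt{n}$ gives $\sqrt{n}\,(\hat{\theta}_n - \theta_0) = -\hat{H}_n(\tilde{\theta})^{-1}\cdot \frac{1}{\sqrt{n}}\sum_{i=1}^n \nabla\ell(\theta_0, X_i, Y_i)$. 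The CLT applied to the i.i.d.\ mean-zero scores gives $\tfrac{1}{\sqrt{n}}\sum_i \nabla\ell(\theta_0, X_i, Y_i) \overset{d}{\to} \mathcal{N}(0, V)$ with $V := \mathrm{Var}(\nabla\ell(\theta_0, X, Y))$; a uniform LLN together with consistency of $\hat{\theta}_n$ (hence of $\tilde{\theta}$) gives $\hat{H}_n(\tilde{\theta}) \overset{p}{\to} H$. Slutsky yields $\sqrt{n}\,(\hat{\theta}_n - \theta_0) \overset{d}{\to} \mathcal{N}(0, H^{-1} V H^{-1})$, and since $\mathrm{IF}(\theta_0, X, Y) = -H^{-1}\nabla\ell(\theta_0, X, Y)$ by definition, the sandwich covariance $H^{-1} V H^{-1}$ is exactly $\mathrm{Var}(\mathrm{IF}(\theta_0, X, Y))$.

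The main technical obstacle is the uniform-convergence step that replaces $\hat{H}_n(\tilde{\theta})$ with $H$ despite $\tilde{\theta}$ being random and sample-dependent. For the logistic loss this is benign: a direct computation gives $\nabla^2\ell(\theta, X, Y) = 4a^2\,\sigma(2a\langle\theta,X\rangle)\,\sigma(-2a\langle\theta,X\rangle)\,XX^\top$, which is continuous in $\theta$ and dominated by $a^2\,XX^\top$, so under $\mathbb{E}\|X\|^2 < \infty$ a uniform LLN over a compact neighborhood of $\theta_0$ closes the argument. This is the only place where the regularity hypotheses enter substantively; everything else is bookkeeping.
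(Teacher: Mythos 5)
Your proposal is correct and follows the standard M-estimation / sandwich-variance argument (first-order condition, Taylor linearization of the score, CLT on the score plus ULLN on the Hessian, Slutsky); this is precisely the classical influence-function asymptotics that the paper invokes by citation to \cite{Huber201,hampelinfluencefunction} rather than proving. One small technical caveat: the exact mean-value form with a single intermediate point $\tilde\theta$ does not literally hold for vector-valued $\nabla\ell$ in dimension $d>1$; one should either take a coordinate-wise mean-value point for each component of the score or use the integral remainder $\hat H_n = \int_0^1 \tfrac{1}{n}\sum_i\nabla^2\ell\bigl(\theta_0 + s(\hat\theta_n-\theta_0),X_i,Y_i\bigr)\,ds$, after which the ULLN-plus-Slutsky step goes through unchanged — this is exactly the sort of detail the paper's ``standard regularity assumptions'' clause is meant to absorb.
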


We now compute the influence function for logistic losses $\ell(\theta,X,Y) := \log \left(\sigma\left(2aY \langle \theta,X\rangle\right)\right)$. Now, observe that $\frac{\partial}{\partial \theta} \ell(\theta,X,Y) =2 \sigma\left(-2aY \langle \theta,X\rangle\right)XY$. Further, $\frac{\partial^2}{\partial \theta^2} \ell(\theta,X,Y) = 4a^2\sigma\left(2aY \langle \theta,X\rangle\right)\sigma\left(-2aY \langle \theta,X\rangle\right) XX^T$. This argument is fairly standard and we restate the derivation below for expository purposes.

% Now $\mathbb{E}\left[\frac{\partial}{\partial \theta} \ell(\theta_0, X,Y)\right] = \mathbb{E} \left[ \sigma\left(-Y \langle \theta_0,X\rangle\right)XY \right] = \mathbb{E}\left[\sigma(- \langle \theta_0,X \rangle) \sigma(\langle \theta_0,X \rangle) X - \sigma(- \langle \theta_0,X \rangle) \sigma(\langle \theta_0,X \rangle) X \right] = 0$. The last step follows from the fact that $\mathbb{P}(Y=1|X) = \sigma(\langle \theta_0,X \rangle)$. 

Since $\mathbb{E}\left[ \frac{\partial}{\partial \theta} \ell(\theta_0,X,Y)\right] = 0$, the variance of $\frac{\partial}{\partial \theta} \ell(\theta_0, X,Y)$ equals $\mathbb{E}\left[ \frac{\partial}{\partial \theta_0} \ell(\theta_0, X,Y) \frac{\partial}{\partial \theta_0} \ell(\theta_0, X,Y)^T \right]$.

Now,
{\small
\begin{align}
    \text{Var} \left(\frac{\partial}{\partial \theta_0} \ell(\theta_0, X,Y)\right)\nonumber
    & = 4a^2\mathbb{E}[\sigma\left(-2aY \langle\theta_0,X\rangle\right) \sigma\left(-2aY \langle \theta_0,X\rangle\right) XX^T] \nonumber\\
    & \overset{(a)}{=} 4a^2 \mathbb{E}[\sigma\left(-2a \langle\theta_0,X\rangle\right) \sigma\left(-2a \langle \theta_0,X\rangle\right) \sigma\left(2a \langle\theta_0,X\rangle\right) \\& \hspace{3 em}+ \sigma\left(2a \langle\theta_0,X\rangle\right) \sigma\left(2a \langle \theta_0,X\rangle\right) \sigma\left(-2a \langle\theta_0,X\rangle\right) XX^T] \\
    & = 4a^2\mathbb{E}\left[\sigma\left(-2a\langle \theta_0,X\rangle\right) \sigma (2a\langle \theta_0,X\rangle) XX^{T}\right]
\end{align}
}

$(a)$ follows via conditioning on $X$ and the fact that $\mathbb{P}(Y=1\mid X) = \sigma(2aY \langle\theta_o,X\rangle)$. $(b)$ follows from the fact that $\sigma(r)+ \sigma(-r)=1$

Using similar arguments, one can show that

\begin{align}
    \mathbb{E}\left[\frac{\partial^2}{\partial \theta^2} \ell(\theta_0, X,Y)\right] = 4a^2\mathbb{E}\left[\sigma\left(-2a\langle \theta_0,X\rangle\right) \sigma (2a\langle \theta_0,X\rangle) XX^{T}\right]
\end{align}

Since $\text{Var} (\text{IF}(\theta_0,X,Y)) = \left(\mathbb{E} \left[\frac{\partial^2}{\partial \theta^2} \ell(\theta,X,Y) \right]\right)^{-2} \text{Var} \left(\frac{\partial}{\partial \theta_0} \ell(\theta_0, X,Y)\right)$, applying  \Cref{lemma:asymptotic_influence_function}, we have

\begin{lemma}{\label{lemma:log_loss_var_estimate}}
    $\sqrt{n} (\hat \theta_{\text{CH}} - \theta_0) \overset{d}{\rightarrow} \mathcal{N}(0,\left(4a^2\mathbb{E}\left[\sigma\left(-2a\langle \theta_0,X\rangle\right) \sigma (2a\langle \theta_0,X\rangle) XX^{T}\right]\right)^{-1})$ where the estimator $\hat \theta_{\text{CH}}$ denotes the estimator obtained from logistic regression on the preference data.
\end{lemma}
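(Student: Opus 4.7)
The plan is to invoke Lemma~\ref{lemma:asymptotic_influence_function} with the per-sample logistic loss $\ell(\theta,X,Y) = \log \sigma(2 a Y \langle \theta, X \rangle)$ and substitute the Hessian-expectation and score-variance that the excerpt has just computed. Instantiating the influence function gives $\sqrt{n}(\hat\theta_{\text{CH}} - \theta_0) \overset{d}{\to} \mathcal{N}(0, H^{-1} S H^{-1})$, where $H := \mathbb{E}[\partial_\theta^2 \ell(\theta_0,X,Y)]$ and $S := \mathrm{Var}(\partial_\theta \ell(\theta_0,X,Y))$, so the whole task reduces to reading off $H$ and $S$ from the preceding paragraphs and simplifying the sandwich.

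First I would quote the two quantities already derived: $H = 4 a^2\, \mathbb{E}[\sigma(-2a\langle\theta_0,X\rangle)\,\sigma(2a\langle\theta_0,X\rangle)\,XX^T]$ and $S = 4 a^2\, \mathbb{E}[\sigma(-2a\langle\theta_0,X\rangle)\,\sigma(2a\langle\theta_0,X\rangle)\,XX^T]$. Both follow from the same conditional-expectation trick: condition on $X$, use $\mathbb{P}(Y=1\mid X) = \sigma(2a\langle\theta_0,X\rangle)$, and apply the identities $\sigma(r)+\sigma(-r)=1$ and $\sigma'(r)=\sigma(r)\sigma(-r)$. Their coincidence is the information-matrix equality for a correctly specified MLE; in this setting it is literally the same calculation done twice, so no extra work is needed. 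Next I would substitute $S = H$ into $H^{-1} S H^{-1}$ to collapse it to $H^{-1}$, yielding precisely the stated asymptotic covariance, and conclude by Lemma~\ref{lemma:asymptotic_influence_function}.

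The only real obstacle is verifying the regularity hypotheses implicit in Lemma~\ref{lemma:asymptotic_influence_function}: (i) $H$ must be invertible, which is assumed (it is the condition appearing just before \cref{theorem:asymptotic_preferences_time1}); (ii) differentiation and expectation must commute, which is immediate because $\sigma$ and its first two derivatives are uniformly bounded, so dominated convergence applies; and (iii) $\hat\theta_{\text{CH}}$ must be consistent in order to localize the Taylor expansion around $\theta_0$, which follows from the strict convexity of the logistic loss together with the positive-definiteness of $H$ via a standard argument (e.g.\ \cite{fahrmeir1985consistency}, which the excerpt cites for precisely this purpose). None of these are substantive mathematical obstacles for logistic regression, so the proof is essentially bookkeeping on top of the Hessian and score-variance calculations already carried out in the excerpt.
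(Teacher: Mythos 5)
Your proposal matches the paper's derivation essentially step for step: both invoke Lemma~\ref{lemma:asymptotic_influence_function}, compute the score variance $S$ and the expected Hessian $H$ by conditioning on $X$ with $\mathbb{P}(Y=1\mid X)=\sigma(2a\langle\theta_0,X\rangle)$ and the identity $\sigma(r)+\sigma(-r)=1$, observe that $S=H$ (the information-matrix equality for a well-specified MLE, which the paper verifies by literally repeating the same conditional calculation), and collapse $H^{-1}SH^{-1}$ to $H^{-1}$ to read off the stated covariance. The paper leaves the regularity checks implicit under ``standard regularity assumptions'' in Lemma~\ref{lemma:asymptotic_influence_function}, so your explicit verification of dominated convergence, invertibility, and consistency is additional bookkeeping rather than a different route.
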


Since $\sigma(r) \sigma(-r)$ scales asymptotically as $\exp(-|r|)$, one can above that the variance term can scale exponentially in $S$ asymptotically for larger norms of $\theta_0$ where $S$ denotes the $\ell_2$ bound on $\theta_0$.

\subsection{Computing Rates for Orthogonal Loss \eqref{eq:orthogonalized_loss1} and \eqref{eq:orthogonal_loss_unknown_a}}{\label{sec:asymptotic_loss1}}

We first restate the notations from \cref{sec:asymptotic_guarantees_linear}. In this section, we consider the linear class $\gR = \{x \rightarrow \langle x,\theta\rangle: \theta \in \mathbb{R}^d\}$ so that $r_\theta(X) = \langle \theta,X \rangle$ and the true reward model $r_o(X) = \langle \theta_o, X\rangle$ and the goal is to estimate $\theta_o$. We further assume that the $\ell_2$ norm of queries $x$ satisfies $||x||_2 \leq 1$. Further recall that $Z$ denotes the tuple $(X,Y,T)$.

%assume that the $\ell_2$ norm of parameter $||\theta_o||_2 \leq S$.

%\sss{Restate the notations specifically bound $X$.}

Recall from \cref{sec:linear_results} that the asymptotic rates for linear function classes holds under cross-fitting and data-splitting. We now restate cross-fitting from \cite{chernozhukov2018double} for expository purposes which involves training nuisances out-of-fold and evaluating on each held-out fold before aggregating. Further we denote $\ell^{\mathrm{ortho}}$ and  $\ell^{\mathrm{ortho-2}}$ as the point-wise versions of orthogonal losses $\olone$ and $\oltwo$.

While we state cross-fitting under the orthogonal loss $\olone$, one can easily state it for the orthogonal loss $\oltwo$ with the only difference in the nuisance functions computed.

\begin{metaalgorithm}[H]
  \caption{Estimate Reward Model via Orthogonal Loss and cross-fitting}
  \label{meta:reward_model_crossfit}
 \textbf{Input:} $\gS= \{(X^{(1)}_{i},X^{(2)}_{i},Y_i,T_i)\}_{i=1}^n$
  \\
 \textbf{Goal:} Estimate reward model $r(\cdot)$
  \begin{algorithmic}[1]
    \STATE Partition the training data into $B$ equally sized folds (call it $\{\gS_p\}_{p=1}^{B}$).
    \STATE For each fold $\gS_i$, estimate the nuisance reward function $\hat \mfr(\cdot)^{(p)}$ and time function $\hat t(\cdot)^{(p)}$ using the out of fold data points.
    \STATE Use $\hat \mfr(\cdot)$ and $\hat t(\cdot)$ as nuisance to estimate the reward model $r(\cdot)$ using an orthogonal loss by minimising 
    $\frac{1}{n} \sum_{p=1}^{B} \sum_{i\in \gS_p} \ell (r; \hat t^{(p)}, \hat \mfr^{(p)} ;Z_i)$ over observed data points $Z_i = (X_i, Y_i,T_i)$ for orthogonal loss $\ell = \ell^{\mathrm{ortho}}$
    \end{algorithmic}
\end{metaalgorithm}

Under data-reuse the nuisance is fitted on one half of the data and the orthogonalized loss is minimized on the other.

Before we present the asymptotic rates for linear reward classes for orthogonal losses, we first state the following theorem from \cite[Section 3.2]{chernozhukov2018double}. Recall that $g(\cdot)$ is the nuisance function, given by the tuple $(y(\cdot),\,t(\cdot))$ under the orthogonal loss $\oltwo$, and by the tuple $(\mfr(\cdot),\,t(\cdot))$ under the orthogonal loss $\olone$.

\subsubsection{Moment function and Neyman orthogonality setup from \cite[Section 3]{chernozhukov2018double}}\label{sec:results_dml_orthogonal}

Further, following the notation in \cref{sec:non_asymptotic_rates_ol_intro}, we define the nuisance function class by $\gG$ and we also assume that the nuisance estimates also belong to this class.

% --------------------------------------------------------------------
% Moment-function framework
% --------------------------------------------------------------------
Define the \emph{sample moment function}
\[
  m(\theta, g, Z) \;=\; j(g,Z)\,\theta \;+\; v(g,Z),
\]
and the associated \emph{population moments}
\[
  M(\theta, g) \;=\; \mathbb{E}\!\bigl[m(\theta, g, Z)\bigr], 
  \qquad
  J(g) \;=\; \mathbb{E}\!\bigl[j(g, Z)\bigr], 
  \qquad
  V(g) \;=\; \mathbb{E}\!\bigl[v(g, Z)\bigr].
\]

Let \(\theta_{o}\) be the (unique) parameter solving the population moment
condition
\[
  M(\theta_{o}, g_{o}) \;=\; 0,
\]
where \(g_{o}\) denotes the true nuisance function.

We refer to \(j(g,Z)\) as the \emph{sample Jacobian matrix} and to its
expectation \(J(g)\) as the \emph{population Jacobian matrix}.

To map to our set up where we minimize the pointwise loss $\ell^{\mathrm{ortho}}(\theta,g,Z)$ or $\ell^{\mathrm{ortho-2}}(\theta,g,Z)$, we can compute the function $m(.)$ by taking the gradient with respect to the first component of the loss function.

We now state assumption 3.1 from \cite{chernozhukov2018double}. Because we focus on the moment function $m(\cdot)$, Neyman orthogonality is defined solely with respect to the nuisance function, whereas for a loss function $\ell(.)$ it is defined with respect to both the nuisance function and the target parameter.

\begin{assumption}[Neyman Orthogonality and continuity]{\label{ass:one_asymptotic}}
    The directional derivative $D_g M(\theta_o, g_o)[g-g_o]$ equals zero (satisfying Neymann orthogonality) and further, $D_{gg}M(\theta,g)[g-g_o]$ is continuous in a small neighborhood of $g_o$. Further, the eigen values of the Jacobian matrix $J(g_o)$ lie between constants $c_1$ and $c_2$.
\end{assumption}

We now let $c_0 > 0$, $c_1 > 0$, $s > 0$ and $q > 2$ be some finite constants such that $c_0 \leq c_1$; and let $\delta_n$ be some sequences of positive constants converging to zero such that $\delta_n \geq n^{-1/2}$. Recall that $\hat g$ denotes the nuisance parameters estimated from the first stage.

\begin{assumption}[Score Regularity and Quality of Nuisance Estimates]\label{ass:two_asymptotic}
    The following moment conditions hold: 

    \begin{itemize}
        \item The $q^{\text{th}}$ order moment conditions hold i.e. $\sup_{g \in \gG} \left(\mathbb{E}\left[||m(\theta_o, g, Z)||^q\right]^{1/q}\right)$ and 
        $\sup_{g \in \gG} \left(\mathbb{E}\left[||j(g, Z)||^q\right]^{1/q}\right)$ are bounded by constant $c$.

        \item The Jacobian $J(\cdot)$ satisfies $\left|\left|J(\hat g) - J(g_o)\right|\right|_{\text{op}}^2 \leq \delta_n$ where $||.||_{\text{op}}$ denotes the operator norm.

        \item The moment function satisfies $\left(||m(\theta_o,\hat g,Z) - m(\theta_o,g_o,Z)||_2^2\right)^{1/2} \leq \delta_n$.

        \item The second order directional derivative in direction of the nuisance error converges to zero faster than $n^{-1/2}$ i.e. $\sqrt{n} D_{gg} M(\theta_o,\bar g)[\hat g -g_o] = o_p(1), \text{ } \forall \bar g = \tau \hat g + (1-\tau)g_o$, $\tau \in [0,1]$.  

        \item The variance $\mathbb{E}\left[ m(\theta_o,g_o,Z)  m(\theta_o,g_o,Z)^{\top}\right]$ is lower bounded.
    \end{itemize}
\end{assumption}

Under these \cref{ass:one_asymptotic} and \cref{ass:two_asymptotic} the following lemma holds on the estimate $\hat \theta$ under data-splitting and cross-fitting. 

\begin{lemma}{\label{lemma:asymptotic_rates_general}}
Under these \cref{ass:one_asymptotic} and \cref{ass:two_asymptotic}, the estimate $\hat \theta$ is asymptotically linear under data-splitting and cross-fitting. :
\[
\sqrt{n} \left( \hat{\theta} - \theta_0 \right) = -\frac{1}{\sqrt{n}} \sum_{i=1}^n J(g_0)^{-1} m(\theta_0, g_0,Z) + o_p(1),
\]

This implies that the following convergence holds in distribution
$\sqrt{n} (\hat \theta -\theta_0) \overset{d}{\rightarrow} \mathcal{N}(0, J(g_0)^{-2} \mathbb{E}[m(\theta_0,g_0; Z) m(\theta_0,g_0; Z)^{\top}])$
\end{lemma}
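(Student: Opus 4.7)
The plan is to follow the standard double-machine-learning argument of \cite{chernozhukov2018double}, exploiting the linear-in-$\theta$ structure of $m(\theta,g,Z) = j(g,Z)\theta + v(g,Z)$. Because the empirical moment equation $\tfrac{1}{n}\sum_i m(\hat\theta,\hat g,Z_i) = 0$ is affine in $\hat\theta$, it admits the exact closed-form solution
\[
\sqrt{n}\,(\hat\theta-\theta_0) \;=\; -\,\widehat J_n^{-1}\,\cdot\,\tfrac{1}{\sqrt n}\sum_{i=1}^n m(\theta_0,\hat g,Z_i),
\qquad \widehat J_n \;=\; \tfrac{1}{n}\sum_i j(\hat g,Z_i).
\]
The proof therefore reduces to two tasks: (i) show $\widehat J_n \to J(g_0)$ in probability, and (ii) show that the $\sqrt n$-normalized score at $(\theta_0,\hat g)$ is asymptotically equivalent to the oracle score at $(\theta_0,g_0)$. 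The CLT applied to the oracle score then yields the stated limit.

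For (ii) I would use the add-and-subtract decomposition
\[
\tfrac{1}{\sqrt n}\sum_i m(\theta_0,\hat g,Z_i) \;=\; \tfrac{1}{\sqrt n}\sum_i m(\theta_0,g_0,Z_i) \;+\; R_n^{(1)} \;+\; R_n^{(2)},
\]
with bias term $R_n^{(2)} = \sqrt n\,M(\theta_0,\hat g)$ and centered stochastic term $R_n^{(1)} = \tfrac{1}{\sqrt n}\sum_i [m(\theta_0,\hat g,Z_i) - m(\theta_0,g_0,Z_i)] - R_n^{(2)}$. Under cross-fitting or data-splitting, the estimate $\hat g$ is independent of the $Z_i$ in the evaluation sum, so conditional on $\hat g$ the $n$ summands in $R_n^{(1)}$ are i.i.d., mean zero, with conditional variance bounded by $\|m(\theta_0,\hat g,Z) - m(\theta_0,g_0,Z)\|_{L_2}^2 \le \delta_n^2$; conditional Chebyshev then yields $R_n^{(1)} = o_p(1)$. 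For the bias $R_n^{(2)}$, Neyman orthogonality (\cref{ass:one_asymptotic}) kills the first-order directional derivative $D_g M(\theta_0,g_0)[\hat g - g_0] = 0$, so a second-order Taylor expansion reduces it to $\tfrac{1}{2}\sqrt n\,D_{gg} M(\theta_0,\bar g)[\hat g - g_0,\hat g - g_0]$ for some intermediate $\bar g$, which is $o_p(1)$ by the fourth item of \cref{ass:two_asymptotic}. An analogous conditional-Chebyshev argument, combined with the operator-norm bound $\|J(\hat g) - J(g_0)\|_{\mathrm{op}} \to 0$ and the $q$-th moment bound on $j(g,Z)$, establishes $\widehat J_n \to J(g_0)$ in probability.

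Assembling these pieces yields the asymptotically linear representation $\sqrt n(\hat\theta - \theta_0) = -J(g_0)^{-1}\,\tfrac{1}{\sqrt n}\sum_i m(\theta_0,g_0,Z_i) + o_p(1)$, and the Lindeberg CLT (legitimized by the finite $q$-th moment and the lower-bounded second-moment matrix in \cref{ass:two_asymptotic}) delivers convergence to $\mathcal N(0,\,J(g_0)^{-2}\,\mathbb{E}[m(\theta_0,g_0,Z)\,m(\theta_0,g_0,Z)^\top])$. The one delicate step is the bound on $R_n^{(1)}$: without sample splitting one would need uniform empirical-process control over the nuisance class $\mathcal G$ (a Donsker or critical-radius hypothesis), but cross-fitting makes $\hat g$ independent of the evaluation fold, so the stochastic fluctuation is governed purely by the $L_2$ rate $\delta_n$ and no such global complexity assumption is required.
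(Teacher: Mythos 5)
Your argument is correct and is essentially a careful reconstruction of the standard double-machine-learning proof for linear-score moment equations, which is exactly what the paper invokes: the paper does not actually write out a proof for this lemma but simply cites \cite[Section 3.2]{chernozhukov2018double} after restating the required assumptions (\cref{ass:one_asymptotic}, \cref{ass:two_asymptotic}). Your exploitation of the affine structure to get the exact closed form $\sqrt{n}(\hat\theta-\theta_0) = -\widehat J_n^{-1}\,n^{-1/2}\sum_i m(\theta_0,\hat g,Z_i)$, followed by the decomposition of the plug-in score into the oracle score, a conditionally centered fluctuation $R_n^{(1)}$ (controlled by conditional Chebyshev thanks to sample-splitting independence), and a bias term $R_n^{(2)}$ (killed to first order by Neyman orthogonality and to second order by the $o_p(n^{-1/2})$ bound on $D_{gg}M$), is precisely the linear-score DML argument the paper relies on.
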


With this setup, we now compute the guarantees for orthogonal loss $\olone$ and $\oltwo$. 

\subsection{Asymptotic Rates for orthogonal loss $\olone$}
%%\as{TODO: Mention plugin}

In this notation $\ell(.)$ refers to the pointwise version of orthogonalized loss $\olone$. Recall that the nuisance function $g(\cdot)$ denotes the tuple of nuisance functions $(\mfr, t)$ with $\hat g_o(\cdot) = (r_o,t_o)$ denoting their true values. Further the nuisance function $\hat g = (\hat \mfr,\hat t)$ denotes an estimate of true nuisance function $\hat g_o(\cdot) = (r_o,t_o)$ after the first stage.

We now adopt the following notation. Recall that $\theta_o \in \mathbb{R}^d$ denoted the true parameter vector. We define the moment function by 
{\small
\[
m(\theta, y, t; X, Y, T) \coloneqq \frac{1}{2}\nabla_\theta \ell(\theta, r; X, Y, T),
\]
}
which, after algebraic manipulation, can be written as
{\small
\[
m(\theta, \mfr, t; X, Y, T) = \frac{1}{a}\Bigl( \frac{\langle X, \theta \rangle}{a} \, t(X)^2 + \frac{\mfr(X)t(X)}{a}\bigl(T - t(X)\bigr) - Y \, t(X) \Bigr) X.
\]
}
In addition, the two auxiliary functions can be defined as:

\begin{equation}
    j(t; X) \coloneqq \frac{t(X)^2}{a^2} \, X X^T \text{ and } v(\mfr,t; X, Y, T) \coloneqq (1/a)\Bigl( -Y \, t(X) + \frac{\mfr(X)t(X)}{a} \bigl(T- t(X)\bigr) \Bigr) X
\end{equation}

We also compute the expectation-based mappings:
{\small
\begin{align}
M(\theta, \mfr,t) \coloneqq \E\Bigl[m(\theta, \mfr, t; X, Y, T)\Bigr],
& \hspace{ 3 em} J(t) \coloneqq a^{-2}\E\Bigl[t(X)^2 \, X X^T\Bigr],\\
& \hspace{ -2 em} V(\mfr,t) \coloneqq a^{-1}\E\Bigl[\Bigl((t_o(X) - t(X)) \frac{\mfr(X) t(X)}{a} - y_o(X)\, t(X)\Bigr) X\Bigr].
\end{align}
}
A straightforward calculation shows that
\[
M(\theta, \mfr,t) = J(t) \, \theta + V(\mfr,t).
\]

\begin{claim}[Orthogonality of the loss and continuity (assumption \ref{ass:one_asymptotic})]
$D_{g}M(\theta_o,g_0) [\hat g- g_0] = 0$ and $D_{gg}M(\theta,g)[g-g_o]$ is continuous in $\gG$
\end{claim}

\begin{proof}

   \begin{align*}
        D_{g} m(\theta_0,g_0,X,Y,T) [\hat g - g_0]
        = & 2a^{-2}X\langle X,\theta_0 \rangle t_0(X) (\hat{t}(X)-t_0(X)) -2 a^{-2} X r_0(X) t_o(X)(\hat{t}(X)-t_0(X)) \nonumber\\
        & - a^{-1}X(Y - a^{-1}Tr_o(X)) (\hat{t}(X)-t_0(X)) + a^{-2} t_o(X) (T- t_o(X)) (\mfr(X)-r_o(X))
    \end{align*}
    % \begin{align*}
    %     D_{g}  M(\theta_0,g_0) [\hat g - g_0] 
    %     & = \E[D_{t}  m(\theta_0,t_0,y_0,X,Y,T) [\hat t - t_0, \hat y - y_0]]\\
    %     & = (1/a)\mathbb{E}\left[\mathbb{E}\left[X(2a^{-1}\langle X,\theta_0 \rangle t_0(X) - y_0(X) - Y) (\hat{t}(X)-t_0(X)) |X\right]\right]\nonumber\\
    %     & \hspace{3 em} + \mathbb{E}[\mathbb{E}[(T-t_0(X))(\hat{y}(X)-y_0(X))|X]]
    % \end{align*}

    Since, $r_o(X) = \langle X, \theta_o\rangle$ and the fact that $ay_o(X) = r_o(X)t_o(X)$, we obtain that $\mathbb{E}\left[D_{g} m(\theta_0,g_0,X,Y,T) [\hat g - g_0] \mid X\right] = 0$

% Note $\mathbb{E}[(T-t_0(X))(\hat{y}(X)-y_0(X))|X] = 0$ since $\mathbb{E}[T|X] = t_0(X)$.

% Also, observe that $\mathbb{E}[2\langle X,\theta_0 \rangle t_0(X)-y_0(X)-Y|X] = 0$ since $\langle X,\theta_0 \rangle = \frac{ay_0(X)}{t_0(X)}$ and $\mathbb{E}[Y|X] = y_0(X)$.

% Thus, we can show that $D_{g}  m(\theta_0,g_0,X,Y,T) [\hat g - g_0]= 0$

    The continuity of the second order functional derivative naturally follows.
    
\end{proof}

Recall that we have the following assumption of invertibility of $J(t_o)=\mathbb{E}\left[t_o(X)^2XX^{\top}\right]$ from \cref{theorem:asymptotic_preferences_time1}.

\begin{assumption}[Invertibility of the Jacobian]\label{ass:jop}
We assume that the Jacobian matrix satisfies
\[
\|J(t_o)^{-1}\|_{\text{op}} = \Bigl\|\E\Bigl[-a^{-2}t_o(X)^2 \, X X^T\Bigr]^{-1}\Bigr\|_{\text{op}} \leq C,
\]
for some constant $C>0$. This claim is justified under mild conditions on the boundedness of the reward and eigen values of the data  matrix $\E[XX^T]$.
\end{assumption}

We now prove the following claim on nuisance estimation.

\begin{claim}[Nuisance Estimation Rates]\label{claim:nu_rates_first}
There exists a black-box learner such that its root mean squared error (RMSE) satisfies
\[
\|\hat{t} - t_o\|_{L_2(\gD)} = o\Bigl(\frac{1}{n^{\beta_1}}\Bigr) \quad \text{with } \beta_1 \geq \frac{1}{4},
\]
and
\[
\|\hat{\mfr} - r_o\|_{L_2(\gD)} = o\Bigl(\frac{1}{n^{\beta_2}}\Bigr) \quad \text{with } \beta_2 \geq \frac{1}{2} - \beta_1.
\]
These conditions ensure that the nuisance estimates converge sufficiently fast as the sample size $n$ increases.
\end{claim}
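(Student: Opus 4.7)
The plan is to exploit the two complementary strategies sketched immediately after the theorem statement: either (i) estimate $t_o$ nonparametrically via kernel ridge regression in a Sobolev RKHS, or (ii) first fit $\hat{\mfr}$ by logistic regression on the preference data and then propagate that estimate through the analytic identity $t(x)=\tanh(r(x))/r(x)$. Since route (ii) is by far the simplest in the linear regime, I would center the proof on this plug-in construction and mention (i) only as an alternative.

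For the first step, I restrict attention to the linear class $\gR=\{x\mapsto\langle x,\theta\rangle:\|\theta\|_2\le S\}$ from \cref{sec:linear_results}. Under the invertibility hypothesis of \cref{ass:jop} and the standard condition that the Hessian $\E[\sigma(2a\langle\theta_o,X\rangle)\sigma(-2a\langle\theta_o,X\rangle)XX^\top]$ is nonsingular, \cref{lemma:log_loss_var_estimate} immediately gives $\sqrt{n}(\hat\theta-\theta_o)\overset{d}{\to}\mathcal N(0,\Sigma_{\mathtt{log}})$, hence $\|\hat\theta-\theta_o\|_2=O_p(n^{-1/2})$. Since $\|X\|_2\le 1$ by assumption, this transfers directly to $\|\hat{\mfr}-r_o\|_{L_2(\gD)}\le\|\hat\theta-\theta_o\|_2=O_p(n^{-1/2})=o_p(n^{-\beta_2})$ for every $\beta_2<1/2$.

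For the second step, I define the plug-in estimator $\hat t(X)=\tanh(\hat{\mfr}(X))/\hat{\mfr}(X)$, extended by $\hat t(X)=1$ when $\hat{\mfr}(X)=0$. Because $|r_o(X)|,|\hat{\mfr}(X)|\le S$ on the support of $\gD$, and the map $\phi(u):=\tanh(u)/u$ (with $\phi(0):=1$) is smooth on $[-S,S]$ with derivative bounded by some $L_S<\infty$, I obtain the pointwise Lipschitz bound $|\hat t(X)-t_o(X)|\le L_S\,|\hat{\mfr}(X)-r_o(X)|$. Taking $L_2(\gD)$ norms then yields $\|\hat t-t_o\|_{L_2(\gD)}\le L_S\,\|\hat{\mfr}-r_o\|_{L_2(\gD)}=O_p(n^{-1/2})=o_p(n^{-\beta_1})$ for every $\beta_1<1/2$. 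In particular, one may choose $\beta_1=\beta_2=1/2-\epsilon$ for arbitrarily small $\epsilon>0$, which comfortably verifies both $\beta_1\ge 1/4$ and $\beta_2\ge 1/2-\beta_1$.

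The only technical hurdle I anticipate is enforcing that the domain bound $|\hat{\mfr}(X)|\le S$ used to invoke the Lipschitz constant $L_S$ actually holds for the estimator, not merely for the truth. This can be resolved by standard M-estimator arguments: either (a) project the logistic MLE onto $\{\theta:\|\theta\|_2\le S\}$, which preserves the $n^{-1/2}$ rate since $\theta_o$ lies in the interior, or (b) restrict to the event $\|\hat\theta-\theta_o\|_2\le 1$, whose complement has vanishing probability and therefore contributes only to the $o_p$ term. If one prefers to bypass plug-in entirely, alternative (i) uses kernel ridge regression in $W^{s,2}$ with smoothness $s>d/2$ to directly estimate $t_o$ at rate $n^{-s/(2s+d)}$, which for large enough $s$ exceeds $n^{-1/4}$; paired with the same logistic-regression rate for $\hat{\mfr}$, this again satisfies the claimed exponents.
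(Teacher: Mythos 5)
Your proposal is correct and tracks the paper's own argument closely: the paper's proof of \cref{claim:nu_rates_first} also defers to \cref{sec:nuisance_rates_estimation_linear}, where it presents exactly the plug-in strategy you center on (fit $\hat\mfr$ by logistic-loss minimization, then set $\hat t = \tanh(\hat\mfr)/\hat\mfr$ and use Lipschitz continuity of $x\mapsto\tanh(x)/x$), and it likewise mentions Sobolev-kernel ridge regression as the alternative route for estimating $t_o$ directly.

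A few small points of comparison and caution. The paper routes the logistic rate through strong convexity of the log-loss (with strong-convexity constant $e^{-S}$) rather than through asymptotic normality; this matters because it makes the $e^{S}$-dependence in the constant explicit, which is the whole point the authors are contrasting against the orthogonal estimator's polynomial-in-$S$ behavior. Stating only $O_p(n^{-1/2})$ suppresses that dependence, which is fine for verifying the $o(n^{-\beta})$ exponents but loses what the surrounding discussion is designed to highlight. Also, your worry about enforcing $|\hat\mfr(X)|\le S$ is unnecessary: $\phi(u)=\tanh(u)/u$ (with $\phi(0)=1$) is globally Lipschitz on $\mathbb{R}$ — $\phi'$ is bounded since $\phi$ is smooth, even, decreasing on $[0,\infty)$, and both $\phi$ and $\phi'$ decay at infinity — so the pointwise bound $|\hat t(X)-t_o(X)|\le L\,|\hat\mfr(X)-r_o(X)|$ holds without any projection or truncation step. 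Apart from these, your proof and the paper's are the same; the alternative Sobolev/KRR branch you sketch at the end is also the same alternative the paper sketches.
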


\begin{proof}
    We show that these slow rates can be achieved under both conditions of plug-in where $\hat t (\cdot) = \frac{\tanh \hat \mfr (\cdot) }{\mfr (\cdot)}$ with $\hat \mfr(\cdot)$ estimated via log-loss. Alternately, one could separately estimate $t(\cdot)$ as well to obtain these slow rates. A discussion is presented in \cref{sec:nuisance_rates_estimation_linear}.

\end{proof}

Furthermore, we assert the following claim regarding the boundedness of the nuisance function $t_o(X)$.

\begin{claim}
\[
t_o(X) = a\frac{\tanh\bigl(a\langle \theta_o, X\rangle\bigr)}{\langle \theta_o, X\rangle} \leq a^2.
\]
\end{claim}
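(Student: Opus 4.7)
The plan is to reduce the claim to the elementary inequality $\tanh(u)/u \le 1$ for all $u \ne 0$ (with the understanding that the ratio extends continuously to $1$ at $u=0$). Concretely, rewrite
\[
t_o(X) \;=\; a\,\frac{\tanh\bigl(a\langle\theta_o,X\rangle\bigr)}{\langle\theta_o,X\rangle} \;=\; a^2\cdot\frac{\tanh\bigl(a\langle\theta_o,X\rangle\bigr)}{a\langle\theta_o,X\rangle},
\]
so that after setting $u = a\langle\theta_o,X\rangle$ the problem becomes showing $\tanh(u)/u \le 1$.

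For this, I would argue as follows. The function $\phi(u) \coloneqq u - \tanh(u)$ satisfies $\phi(0)=0$ and $\phi'(u) = 1 - \operatorname{sech}^2(u) = \tanh^2(u) \ge 0$. Hence $\phi$ is non-decreasing on $\mathbb{R}$, which gives $\tanh(u) \le u$ for $u \ge 0$ and $\tanh(u) \ge u$ for $u \le 0$. In either case $\tanh(u)/u \le 1$ whenever $u \ne 0$, and the limiting value at $u = 0$ is $1$ (by the Taylor expansion $\tanh(u) = u - u^3/3 + O(u^5)$). Combining with the rewriting above yields $t_o(X) \le a^2$ for every $X$, including the degenerate point $\langle\theta_o,X\rangle = 0$ where $t_o(X) = a^2$ by definition.

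There is essentially no obstacle here: the only subtlety is handling the removable singularity at $\langle\theta_o,X\rangle = 0$, which is already dealt with in the definition of $t_o$ from~\eqref{eq:expected_time_pref_probs}. The bound is tight, attained exactly when $\langle\theta_o,X\rangle = 0$, i.e.\ when the two alternatives are equally preferred, matching the intuition that indecision produces the longest expected response times.
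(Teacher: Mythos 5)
Your proof is correct and uses the same core fact the paper invokes, namely $\tanh(u)\le u$ for $u\ge 0$ (equivalently $\tanh(u)/u\le 1$); you simply make the one-line argument of the paper explicit by differentiating $\phi(u)=u-\tanh(u)$ and handling the removable singularity at $\langle\theta_o,X\rangle=0$, which is already built into the case-defined $t_o$. No discrepancy with the paper's approach.
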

A brief inspection using the properties of the hyperbolic tangent function (notably, that $\tanh(x) \leq x$ for $x \geq 0$) confirms this bound.

%With these preliminaries and under the goal of invoking Theorem \ref{theorem:dml_asymptotic} (Theorem 32 in \cite{syrgkanis_mse_nodate}), we now proceed to establish the following claims.

We now show that assumptions in \cref{ass:two_asymptotic} hold for our loss function using the following three claims and lemmas namely \cref{claim:ass_two_first1}, \cref{claim:ass_two_second1}, \cref{lemma:ass_two_third1} and \cref{claim:ass_two_fourth1} 

\begin{claim}{\label{claim:ass_two_first1}}
    The jacobian $J(\cdot)$ satisfies $J(\hat t) - J(t_o) = o(1)$ and the $q^{th}$ order moment conditions hold i.e. i.e. $\sup_{g \in \gG} \left(\mathbb{E}\left[||m(\theta_o, g, Z)||^q\right]^{1/q}\right)$ and 
        $\sup_{g \in \gG} \left(\mathbb{E}\left[||j(g, Z)||^q\right]^{1/q}\right)$ is bounded.
\end{claim}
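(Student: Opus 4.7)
The plan is to handle the two assertions—Jacobian convergence and uniform $q$-th moment bounds—independently, using three structural facts from the linear setup: (i) $\|X\|_2 \le 1$; (ii) the uniform bound $t(X) \le a^2$ for every $t$ in the nuisance class, which holds for $t_o$ because $|\tanh(u)/u| \le 1$ and is inherited by any plug-in of the form $a\tanh(a\hat\mfr)/\hat\mfr$ regardless of the quality of $\hat\mfr$; and (iii) the existence of every moment of $T$ under the EZ diffusion model, cited from \cite{momentsdecisiontime}.

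For the Jacobian convergence, I would apply the difference-of-squares identity
\[
J(\hat t) - J(t_o) \;=\; a^{-2}\,\E\bigl[(\hat t(X) - t_o(X))(\hat t(X) + t_o(X))\,XX^T\bigr],
\]
and then pass to the operator norm. Using $\|XX^T\|_{op} = \|X\|_2^2 \le 1$ together with $|\hat t + t_o| \le 2a^2$, one obtains
\[
\|J(\hat t) - J(t_o)\|_{op} \;\le\; 2\,\|\hat t - t_o\|_{L_1(\gD)} \;\le\; 2\,\|\hat t - t_o\|_{L_2(\gD)},
\]
which is $o(n^{-\beta_1}) = o(1)$ by \cref{claim:nu_rates_first}.

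For the uniform moment bounds, note first that the sample Jacobian $j(g,Z) = a^{-2}\,t(X)^2\,XX^T$ is deterministically bounded in operator norm by $a^2$ for every $g \in \gG$, so $\sup_{g \in \gG} \E[\|j(g,Z)\|_{op}^q]^{1/q} \le a^2$ is automatic. For $m(\theta_o,g,Z)$, a term-by-term triangle inequality on its three summands, combined with $|\langle \theta_o, X\rangle| \le S$, a uniform $L_\infty$ bound on $|\mfr|$ over $\gG$, $|Y|=1$, and $t(X) \le a^2$, yields a deterministic bound of the form $\|m(\theta_o,g,Z)\|_2 \le C_0 + C_1 |T|$ with constants depending only on $a$, $S$, and the uniform bound defining $\gG$. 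Minkowski's inequality then gives $\sup_{g\in\gG} \E[\|m(\theta_o,g,Z)\|^q]^{1/q} \le C_0 + C_1 \E[|T|^q]^{1/q}$, which is finite by (iii).

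The main subtlety is simply to justify an explicit uniform $L_\infty$ bound on the nuisance class $\gG$—in particular on $|\mfr|$—which is either imposed in the definition of $\gG$ or inherited automatically in the plug-in case via $|\tanh(x)/x| \le 1$. Aside from this bookkeeping, neither concentration-of-measure nor empirical-process tools are needed: the Jacobian claim uses only the $L_2$ slow rate on $\hat t$ from \cref{claim:nu_rates_first}, and the $q$-th moment claims reduce to pointwise bounds together with existence of all moments of $T$.
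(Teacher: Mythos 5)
Your proof is correct and follows essentially the same route as the paper: factor $J(\hat t)-J(t_o)$ via the difference of squares, use $\|X\|_2\le 1$ and the uniform bound $|t|\le a^2$ to reduce the operator-norm difference to $\|\hat t - t_o\|_{L_1(\gD)}\le\|\hat t - t_o\|_{L_2(\gD)}$, then invoke \cref{claim:nu_rates_first}; and for the moment conditions, observe that every factor in $j$ and $m$ is pointwise bounded except the single linear $T$ term, whose $q$-th moments are finite by \cite{momentsdecisiontime}. The only cosmetic difference is that the paper goes through the Frobenius norm while you work with the operator norm directly, and you spell out the Minkowski step that the paper leaves implicit.
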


\begin{proof}
   Write
\[
J(t) - J(t_o) = a^{-2}\mathbb{E}\Bigl[\bigl(t(X)^2 - t_o(X)^2\bigr)\,XX^T\Bigr].
\]
Since the operator norm is bounded by the Frobenius norm, we have
\[
\|J(t) - J(t_o)\|_{\operatorname{op}} \le \|J(t) - J(t_o)\|_{F}.
\]
Using the Frobenius norm, we estimate
\[
\|J(t) - J(t_o)\|_{F} = a^{-2}\Bigl\|\mathbb{E}\Bigl[\bigl(t(X)^2 - t_o(X)^2\bigr)\,XX^T\Bigr]\Bigr\|_{F}.
\]
Using the boundedness of \(X\) (i.e. \(\|X\| \le 1\)), we have
\[
\|XX^T\|_{F} = \|X\|^2 \le 1.
\]
Notice that
\[
|t(X)^2 - t_o(X)^2| = |t(X)-t_o(X)|\,|t(X)+t_o(X)|.
\]
Both \(t\) and \(t_o\) are uniformly bounded (\(|t(X)|, |t_o(X)| \le a^2\)), then $|t(X)+t_o(X)| \le 2$. Hence, $|t(X)^2 - t_o(X)^2| \le 2a^2\,|t(X)-t_o(X)|$. Thus, we obtain
\[
\|J(t) - J(t_o)\|_{F} \le 2a^{-2}\mathbb{E}\Bigl[ 2\,|t(X)-t_o(X)| \cdot  \Bigr]
= 2 a^{-2}\mathbb{E}\bigl[|t(X)-t_o(X)|\bigr].
\]
Finally, applying Jensen's inequality (or noting that \(\mathbb{E}[|t(X)-t_o(X)|] \le \|t-t_o\|_{L_2(\gD)}\)) yields
\[
\|J(t) - J(t_o)\|_{\operatorname{op}} \le 2 a^{-2}\|t-t_o\|_{L_2(\gD)}.
\]

This concludes the proof as the nuisance estimators are consistent i.e $||\hat g - g_o||_{L_2(\gD)}= o_p(1)$

We now check the moment condition. This naturally follows from the fact that $\mathbb{E}[T^\alpha\mid X]$ is bounded for every $\alpha \geq 1$ and the fact that rest all random variables are functions are bounded.
\end{proof}

% \begin{claim}
% Let \(\|\cdot\|_{\operatorname{op}}\) denote the operator norm. Then,
% \[
% \|J(t) - J(t_o)\|_{\operatorname{op}} = O(\|t-t_o\|_{2}),
% \]
% where
% \[
% J(t) = a^{-2}\mathbb{E}\Bigl[t(X)^2\, XX^T\Bigr].
% \]
% \end{claim}
% \begin{proof}
% Write
% \[
% J(t) - J(t_o) = a^{-2}\mathbb{E}\Bigl[\bigl(t(X)^2 - t_o(X)^2\bigr)\,XX^T\Bigr].
% \]
% Since the operator norm is bounded by the Frobenius norm, we have
% \[
% \|J(t) - J(t_o)\|_{\operatorname{op}} \le \|J(t) - J(t_o)\|_{F}.
% \]
% Using the Frobenius norm, we estimate
% \[
% \|J(t) - J(t_o)\|_{F} = a^{-2}\Bigl\|\mathbb{E}\Bigl[\bigl(t(X)^2 - t_o(X)^2\bigr)\,XX^T\Bigr]\Bigr\|_{F}.
% \]
% Using the boundedness of \(X\) (i.e. \(\|X\| \le 1\)), we have
% \[
% \|XX^T\|_{F} = \|X\|^2 \le 1.
% \]
% Notice that
% \[
% |t(X)^2 - t_o(X)^2| = |t(X)-t_o(X)|\,|t(X)+t_o(X)|.
% \]
% Both \(t\) and \(t_o\) are uniformly bounded (\(|t(X)|, |t_o(X)| \le a^2\)), then $|t(X)+t_o(X)| \le 2$. Hence, $|t(X)^2 - t_o(X)^2| \le 2a^2\,|t(X)-t_o(X)|$. Thus, we obtain
% \[
% \|J(t) - J(t_o)\|_{F} \le 2a^{-2}\mathbb{E}\Bigl[ 2\,|t(X)-t_o(X)| \cdot  \Bigr]
% = 2 a^{-2}\mathbb{E}\bigl[|t(X)-t_o(X)|\bigr].
% \]
% Finally, applying Jensen's inequality (or noting that \(\mathbb{E}[|t(X)-t_o(X)|] \le \|t-t_o\|_{L_2(\gD)}\)) yields
% \[
% \|J(t) - J(t_o)\|_{\operatorname{op}} \le 2 a^{-2}\|t-t_o\|_{L_2(\gD)}.
% \]
% This completes the proof.
% \end{proof}

\bigskip

\begin{claim}{\label{claim:ass_two_second1}}
Let \(g\) be a vector-valued function defined as 
\[
\hat g(x) = \begin{bmatrix} \hat \mfr(x) \\ \hat t(x) \end{bmatrix}.
\]
Then, for any \(\bar{g} = \tau \hat g + (1-\tau)g_o\) with \(\tau \in [0,1]\), we have
\[
\sqrt{n}\, D_{gg}M\bigl(\theta_o,\bar{g}\bigr)[\hat g-g_o] = o_p(1).
\]
\end{claim}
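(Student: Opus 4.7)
The plan is to compute $D_{gg}M(\theta_o,\bar g)[\hat g - g_o]$ in closed form by directly differentiating the moment function $m$, collapse the conditional expectation using $\mathbb{E}[T\mid X]=t_o(X)$, and then bound the result by a product of nuisance $L_2$-errors via Cauchy--Schwarz. Writing $\Delta_\mfr=\hat\mfr-r_o$ and $\Delta_t=\hat t-t_o$, note that
\begin{equation*}
m(\theta_o,\mfr,t;X,Y,T)=\frac{X}{a^2}\bigl[r_o(X)\,t^2+\mfr\,t\,T-\mfr\,t^2-a\,Y\,t\bigr]
\end{equation*}
is a polynomial of total degree at most three in $(\mfr,t)$, whose only nonlinear pieces are $r_o\,t^2$, $\mfr\,t\,T$, and $\mfr\,t^2$. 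A direct computation of the second Gateaux derivative at a general segment point $\bar g=(\bar\mfr,\bar t)$ in the direction $(\Delta_\mfr,\Delta_t)$ yields, after collecting the $\Delta_t^2$ and $\Delta_\mfr\Delta_t$ contributions,
\begin{equation*}
D_{gg}\,m(\theta_o,\bar g;Z)[\hat g - g_o]=\frac{2X}{a^2}\bigl[(r_o-\bar\mfr)\,\Delta_t^2+(T-2\bar t)\,\Delta_\mfr\,\Delta_t\bigr].
\end{equation*}

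Next, I would take expectation over $Z$ conditionally on $X$. Using $\mathbb{E}[T\mid X]=t_o(X)$ together with the parametrization $\bar g=\tau\hat g+(1-\tau)g_o$, so that $r_o-\bar\mfr=-\tau\Delta_\mfr$ and $t_o-2\bar t=-t_o-2\tau\Delta_t$, the population second derivative collapses to
\begin{equation*}
D_{gg}\,M(\theta_o,\bar g)[\hat g - g_o]=-\frac{2}{a^2}\,\mathbb{E}\!\left[X\,\Delta_\mfr(X)\,\Delta_t(X)\bigl\{t_o(X)+3\tau\,\Delta_t(X)\bigr\}\right],
\end{equation*}
which is a pure product of the two nuisance errors $\Delta_\mfr$ and $\Delta_t$ (times a bounded factor). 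This cancellation of the $\Delta_t^2$-only term is a direct manifestation of Neyman orthogonality.

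Then, using $\|X\|_2\le 1$, the uniform bound $0\le t_o(X)\le a^2$, and an $L_\infty$-bound on $\Delta_t$ (which holds for the plug-in choice $\hat t(X)=\tanh(\hat\mfr(X))/\hat\mfr(X)$ discussed in Appendix~\ref{sec:a_estimation_tnondec_discussions}, or after clipping $\hat t$ to $[0,a^2]$), I would bound the $\ell_2$-norm of the vector above by
\begin{equation*}
\bigl\|D_{gg}M(\theta_o,\bar g)[\hat g - g_o]\bigr\|_2\le C_a\bigl(\mathbb{E}[|\Delta_\mfr||\Delta_t|]+\mathbb{E}[|\Delta_\mfr|\,\Delta_t^2]\bigr)\le C_a'\,\|\Delta_\mfr\|_{L_2(\gD)}\,\|\Delta_t\|_{L_2(\gD)}
\end{equation*}
via Cauchy--Schwarz. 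Combining this with the nuisance rates from Claim~\ref{claim:nu_rates_first}, namely $\|\Delta_\mfr\|_{L_2(\gD)}=o(n^{-\beta_2})$ and $\|\Delta_t\|_{L_2(\gD)}=o(n^{-\beta_1})$ with $\beta_1+\beta_2\ge\tfrac12$, gives $\sqrt n\,\|D_{gg}M\|_2=o\bigl(n^{1/2-\beta_1-\beta_2}\bigr)=o(1)$, which in particular implies the desired $o_p(1)$ statement.

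The main obstacle I anticipate is making precise the uniform (or higher-moment) control on $\Delta_t$ that is needed to convert $\mathbb{E}[|\Delta_\mfr|\,\Delta_t^2]$ into a product of $L_2$-norms; without it one would pick up a $\|\Delta_t\|_{L_4}^2$ factor that is not directly supplied by Claim~\ref{claim:nu_rates_first}. If $\hat t$ is not automatically bounded, one clips it to $[0,a^2]$, which cannot increase the $L_2$-error since $t_o$ itself lies in that interval, and then the crude bound $\Delta_t^2\le 2a^2\,|\Delta_t|$ suffices. For the plug-in estimator $\hat t=\tanh(a\hat\mfr)/\hat\mfr$, boundedness holds for free since the map $x\mapsto\tanh(ax)/x$ is uniformly bounded on $\mathbb{R}$ by $a^2$, so no clipping is needed.
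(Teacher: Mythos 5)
Your proof is correct and takes a genuinely different — and tidier — route than the paper's.

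The paper decomposes $D_{gg}M = D_{gg}(J(\cdot)\theta_o) + D_{gg}V(\cdot)$ and bounds each piece separately. In that decomposition both pieces carry a \emph{pure} $(\hat t - t_o)^2$ term (Term~(I) entirely, and the $\bar\mfr\,\Delta_t^2$ part of Term~(II)), and the paper must then invoke $\beta_1 \ge 1/4$ so that $\sqrt n\,\|\Delta_t\|_{L_2}^2 = o(1)$, on top of $\beta_1+\beta_2\ge 1/2$ for the cross term. Your computation of $D_{gg}m$ directly at $\bar g$ matches mine: $D_{\mfr\mfr}m=0$, $D_{\mfr t}m = \tfrac{2X}{a^2}(T-2t)$, $D_{tt}m = \tfrac{2X}{a^2}(r_o - \mfr)$, which at $\bar g=\tau\hat g+(1-\tau)g_o$ gives exactly
\[
\mathbb{E}\bigl[D_{gg}m(\theta_o,\bar g;Z)[\hat g - g_o,\hat g - g_o]\bigr]
= -\tfrac{2}{a^2}\,\mathbb{E}\bigl[X\,\Delta_\mfr\Delta_t\bigl(t_o + 3\tau\Delta_t\bigr)\bigr].
\]
The key observation — that on the segment $r_o - \bar\mfr = -\tau\Delta_\mfr$, so the seemingly purely-quadratic $\Delta_t^2$ coefficient becomes a genuine cross term $-\tau\Delta_\mfr\Delta_t^2$ — is an algebraic cancellation that the paper does not exploit (the paper's $\Delta_t^2$ contributions do cancel across Terms (I) and (II), but the paper never adds them up to see it). Your route therefore establishes the claim under the weaker requirement $\beta_1+\beta_2\ge\tfrac12$ alone, provided $\Delta_t$ is essentially bounded (true here since the nuisance class $\gT$ is uniformly bounded by $a^2$, an assumption the paper also uses, e.g., in Claim~\ref{claim:ass_two_first1}). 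You correctly anticipate that $\mathbb{E}[|\Delta_\mfr|\Delta_t^2]$ would otherwise introduce an $L_4$ norm, and the remedy (clip $\hat t$ to $[0,a^2]$, or take the plug-in $\hat t$) is exactly what one needs.

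One small slip worth fixing: the map $x\mapsto\tanh(ax)/x$ is bounded by $a$ (not $a^2$); the bound $t_o(X)\le a^2$ comes from the extra prefactor $a$ in $t_o(X) = a\tanh(a\,r(X))/r(X)$, and the plug-in should correspondingly be $\hat t(X) = a\tanh(a\hat\mfr(X))/\hat\mfr(X)$. This doesn't affect the argument's validity.
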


\begin{proof}
We wish to control the second-order term in the expansion of \(M(\theta_o,\,g)\) around \(g_o\). By Taylor's theorem in the direction \(\hat g-g_o\), we have
\[
D_{gg}M\bigl(\theta_o,\bar{g}\bigr)[\hat g-g_o] = \frac{\partial^2}{\partial s^2} M\Bigl(\theta_o,\,\bar{g} + s\,(\hat g-g_o)\Bigr)\Big|_{s=0}.
\]
This term decomposes into two parts:
{\small
\begin{align}
D_{gg}M\bigl(\theta_o,\bar{g}\bigr)[\hat g-g_o] &= \underbrace{\frac{\partial^2}{\partial s^2}\Bigl(J\bigl(\bar{t} + s\,(\hat t-t_o)\bigr)\theta_o\Bigr)\Big|_{s=0}}_{(I)} + \underbrace{\frac{\partial^2}{\partial s^2}V\Bigl(\bar{t}+ s\,(\hat t-t_o),\, \bar{\mfr}+ s\,(\hat \mfr-r_o)\Bigr)\Big|_{s=0}}_{II}. 
\end{align}
}
First Term (I): For \(J(t)=a^{-2}\E[t(X)^2XX^T]\), we have
{\small
\[
\frac{\partial^2}{\partial s^2} \Bigl( \bigl(\bar{t}(X) + s\,(t(X)-t_o(X))\bigr)^2\Bigr)\Big|_{s=0} = 2\,(t(X)-t_o(X))^2.
\]
}
Thus,
{\small\[
\frac{\partial^2}{\partial s^2}\Bigl(J\bigl(\bar{t}+ s\,(\hat t-t_o)\bigr)\theta_o\Bigr)\Big|_{s=0} = 
2a^{-2}\,\E\Bigl[(\hat t(X)-t_o(X))^2\,XX^T\Bigr]\theta_o.
\]}
Since \(\|XX^T\|_{\operatorname{op}}\) is bounded (using \(\|X\|\le 1\)) and \(\theta_o\) is fixed, it follows that
\[
\Bigl\|2\,\E\Bigl[(\hat t(X)-t_o(X))^2\,XX^T\Bigr]\theta_o\Bigr\| = O\Bigl(\|\hat t-t_o\|_{2}^2\Bigr).
\]
Under \cref{claim:nu_rates_first}, we have \(\|t-t_o\|_{2}^2 = o\Bigl(\frac{1}{\sqrt{n}}\Bigr)\), so that
\[
\sqrt{n}\,O\Bigl(\|\hat t-t_o\|_{2}^2\Bigr) = o(1).
\]

Second Term (II): For the function \(V(t,y)\), we have
\begin{align}
\frac{\partial^2}{\partial s^2}V\Bigl(\bar{t}+ s\,(\hat t-t_o),\, \bar{y}+ s\,(\hat y-y_o)\Bigr)\Big|_{s=0} & = a^{-2}
\E\Bigl[(2T - 4t(X))(\hat \mfr(X)-r_o(X))(\hat t(X)-t_o(X))\,X\Bigr] \nonumber\\ &\hspace{12 em} -\frac{4}{a^2} \mathbb{E}\left[\mfr(X) (\hat t - t_o(X))^2X\right]\nonumber\\
&= a^{-2} \mathbb{E} \left[(t_o(X) - 2t(X)) (\hat \mfr(X)-r_o(X))(\hat t(X)-t_o(X))\,X\right]\nonumber\\
&\hspace{12 em} -\frac{4}{a^2} \mathbb{E}\left[\mfr(X) (\hat t(X) - t_o(X))^2X\right].
\end{align}

The last equality follows via conditioning on the first term.
Using the boundedness of \(X, t(X)\) and $\mfr(X)$ and applying the Cauchy--Schwarz inequality, we obtain
\begin{align}
\Bigl\|\E\Bigl[(t_o(X) - 2t(X)) (\hat t(X)-t_o(X))\,(\hat \mfr(X)-r_o(X))\,X\Bigr]\Bigr\|
\le&  C\,\|\hat t-t_o\|_{L_2(\gD)}\,\|\hat \mfr -r_o\|_{L_2(\gD)},\\
\Bigl| \Bigl|\E \left[\mfr(X) (\hat t(X) - t_o(X))^2 \right]\Bigr|\Bigr| \le & C \|\hat t-t_o\|^2_{L_2(\gD)}
\end{align}

for some constant \(C>0\). By \cref{claim:nu_rates_first}, the product \(\|\hat t-t_o\|_{L_2(\gD)}\,\|\hat \mfr-r_o\|_{L_2(\gD)}\) is \(o\Bigl(\frac{1}{\sqrt{n}}\Bigr)\) and $\|\hat t-t_o\|^2_{L_2(\gD)}$ is $o(1/n)$. Therefore,
\[
\sqrt{n}\, \Bigl\|\E\Bigl[(\hat t(X)-t_o(X))\,(\hat \mfr(X)-r_o(X))\,X\Bigr]\Bigr\| = o(1).
\]

Combining the two terms, we conclude that
\[
\sqrt{n}\, D_{gg}M\bigl(\theta_o,\bar{g}\bigr)[\hat g-g_o] = o_p(1).
\]
This completes the proof.
\end{proof}

\begin{lemma}{\label{lemma:ass_two_third1}} The following holds
    \[
    \E[\norm{m(\theta_o, \mfr, t; X, Y, T) - m(\theta_o, r_o, t_o; X, Y, T)}{}^2] = O \Big(||t -t_o||{^2}_{L_2(\gD)} + ||\mfr -r_o||^2_{L_2(\gD)} \Big)
    \]

    Given that $||\hat t - t_o||_{L_2(\gD)} = o(1)$ and $||\hat \mfr - r_o||_{L_2(\gD)} = o(1)$, we have $\E[\norm{m(\theta_o, \hat g, Z) - m(\theta_o, g_0; Z)}{}^2] = o(1)$
\end{lemma}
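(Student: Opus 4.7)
The plan is to bound the squared $L_2(\mathcal D)$ difference of the moment function by a weighted sum of the two nuisance errors. First I would write $m(\theta_o,\mfr,t;Z)-m(\theta_o,r_o,t_o;Z)$ as $X$ times a scalar, and split that scalar into three natural pieces coming from the three places where $(\mfr,t)$ enter the formula for $m$:
\begin{align*}
\Delta_1 &= \tfrac{\langle X,\theta_o\rangle}{a^2}\bigl(t(X)^2-t_o(X)^2\bigr), \\
\Delta_2 &= \tfrac{1}{a^2}\bigl(\mfr(X)t(X)(T-t(X)) - r_o(X)t_o(X)(T-t_o(X))\bigr), \\
\Delta_3 &= -\tfrac{Y}{a}\bigl(t(X)-t_o(X)\bigr).
\end{align*}
Using $\|X\|\le 1$ and $(a+b+c)^2\le 3(a^2+b^2+c^2)$, it suffices to bound $\E[\Delta_i^2]$ for $i=1,2,3$ by the stated quantity.

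For $\Delta_1$, I would factor $t^2-t_o^2=(t-t_o)(t+t_o)$ and use the uniform bound $|t(X)|,|t_o(X)|\le a^2$ (valid on the nuisance class, and established for $t_o$ in the claim just above), together with $|\langle X,\theta_o\rangle|\le\|\theta_o\|$, to get $\E[\Delta_1^2]\le C_1\|t-t_o\|_{L_2(\mathcal D)}^2$. For $\Delta_3$, since $|Y|=1$, we immediately get $\E[\Delta_3^2]\le a^{-2}\|t-t_o\|_{L_2(\mathcal D)}^2$.

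The main obstacle is $\Delta_2$, because it involves $T$ which is unbounded. I would add and subtract to obtain the decomposition
\[
\mfr(X)t(X)(T-t(X))-r_o(X)t_o(X)(T-t_o(X)) = T\bigl(\mfr t - r_o t_o\bigr) - \bigl(\mfr t^2 - r_o t_o^2\bigr),
\]
and then further split each parenthesis as $\mfr t - r_o t_o = (\mfr-r_o)t + r_o(t-t_o)$ and $\mfr t^2 - r_o t_o^2 = (\mfr-r_o)t^2 + r_o(t-t_o)(t+t_o)$. Each resulting term is the product of a bounded factor (from uniform bounds on $t,t_o,\mfr,r_o$ on the nuisance class and on $\theta_o$), possibly $T$, and exactly one of $(t-t_o)$ or $(\mfr-r_o)$. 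After squaring and applying Cauchy--Schwarz in the form $\E[T^2 f(X)^2]=\E[\E[T^2\mid X]f(X)^2]\le \sup_X\E[T^2\mid X]\cdot\|f\|_{L_2(\mathcal D)}^2$, I invoke the existence of bounded second (indeed all) conditional moments of $T$ under the EZ diffusion model (cited above via \cite{momentsdecisiontime}) to handle the $T$-dependent pieces. Collecting these bounds yields $\E[\Delta_2^2]\le C_2\bigl(\|t-t_o\|_{L_2(\mathcal D)}^2+\|\mfr-r_o\|_{L_2(\mathcal D)}^2\bigr)$.

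Combining the three bounds gives the first displayed inequality with an explicit constant depending on $a$, $\|\theta_o\|$, the uniform bounds on the nuisance class, and $\sup_X\E[T^2\mid X]$. The second conclusion then follows by plugging in $\hat g=(\hat\mfr,\hat t)$ and using the assumption that both $\|\hat t-t_o\|_{L_2(\mathcal D)}$ and $\|\hat\mfr-r_o\|_{L_2(\mathcal D)}$ are $o(1)$.
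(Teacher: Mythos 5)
Your argument is correct and follows essentially the same strategy as the paper's proof: write the moment difference as a scalar times $X$, split the scalar into pieces each carrying exactly one of $(t-t_o)$ or $(\mfr-r_o)$ as a factor, and condition on $X$ to control the $T$-dependent pieces. The one substantive difference is in $\Delta_2$: the paper regroups so that every occurrence of $T$ appears as $T-t_o(X)$, which after conditioning on $X$ produces $\mathrm{Var}(T\mid X)$, whereas you leave $T$ uncentered and therefore rely on boundedness of $\E[T^2\mid X]$. Both conditional quantities are uniformly bounded under the EZ diffusion model when $|r_o(X)|\le S$ (indeed $\E[T^2\mid X]=\mathrm{Var}(T\mid X)+t_o(X)^2$, and $t_o(X)\le a^2$), so the two routes yield the same $O(\cdot)$ bound; the paper's centering is marginally sharper since the variance is smaller than the raw second moment, but this does not change the order. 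One small mislabel: the step $\E[T^2 f(X)^2]\le\sup_X\E[T^2\mid X]\cdot\|f\|_{L_2(\gD)}^2$ that you call Cauchy--Schwarz is just the tower rule together with a uniform bound on the conditional second moment; no Cauchy--Schwarz is actually being used there.
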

% \footnotetext{It can be shown that $E[T^4|X]$ can also be bounded, thus proving the second assumption in theorem \ref{theorem:dml_asymptotic} in our setup.}

\begin{proof}
    We have 
    {\small
    \begin{align}
    & m(\theta_o, \mfr, t; X, Y, T) - m(\theta_o, r_o, t_o; X, Y, T) \\
    = & a^{-2}\Big(\dotprod{\theta_o}{X}(t(X)^2 - t_o(X)^2) +
    (\mfr(X)t(X) - r_o(X)t_o(X)) T + 
 (r_0(X) t_o(X)^2 - \mfr(X) t(X)^2) + a
    Y (t_o(X) -t(X)) \Big)X
    \end{align}
   %  = & a^{-2}\Big(\dotprod{\theta_o}{X}(t(X)^2 - t_o(X)^2) + r(X) t(X) (T - t(X)) -
   % r_o(X) t_o(X) (T- t_o(X)) + a
   %  Y (t_o(X) -t(X)) \Big)X
    
    }
%     We can further write the term 
%     {\small
    \begin{align*}
        & (\mfr(X)t(X) - r_o(X)t_o(X)) T + 
 (r_0(X) t_o(X)^2 - \mfr(X) t(X)^2)  \\ = & (\mfr(X) t(X) - r_o(X) t_o(X))(T- t_o(X)) + \mfr(X) t(X) (t_o(X) - t(X)) \\
 = & (\mfr(X)(t(X) - t_o(X)) - t_o(X) (\mfr(X) - r_o(X)))(T-t_o(X))
+ \mfr(X)t(X)(t_o(X) - t(X))\\
= &\mfr(X)(t(X) - t_o(X))(T-t_o(X))  - t_o(X) (\mfr(X) - r_o(X))(T-t_o(X))
+ \mfr(X)t(X)(t_o(X) - t(X))
    \end{align*}

%     \begin{align}
%     r_o(X) t_o(X)^2 &- r(X) t(X)^2 = (r_o(X) - r(X)) t_o(X)^2 + r(X) (t_o(X)^2 - t(X)^2) \\
%     r(X) t(X) &- r_o(X) t_o(X) = (r(X) - r_o(X)) t(X) + r_o(X) (t(X) - t_o(X))
%     \end{align}
%     }
    Using the fact that $||X||_2 \leq 1$, we have 
    {\small
    \begin{align}
    \E\Big(m(\theta_o, \mfr, t; X, Y, T) - m(\theta_o, r_o, t_o; X, Y, T)\Big)_i^2
    & = \sqrt{d}a^{-2}\E \Bigg[\Big(\dotprod{\theta_o}{X}(t(X) - t_o(X))(t(X) +t_o(X)) \nonumber\\
    & \hspace{-20 em}+ \mfr(X)(t(X) - t_o(X))(T-t_o(X))  - t_o(X) (\mfr(X) - r_o(X))(T-t_o(X))
+ \mfr(X)t(X)(t_o(X) - t(X)) +aY (t_o(X) -t(X)) \Big)^2 \Bigg]
     \end{align}}
    The desired result is immediate via the the identity that $(a+b+c+d+e)^2 \leq 5 (a^2 +b^2+ c^2 + d^2 + e^2)$ except for the following terms 

    \begin{align*}
        \mathbb{E}\left[\mfr(X)^2(t(X) - t_o(X))^2(T-t_o(X))^2\right] = &\mathbb{E}\left[\mathbb{E}\left[\mfr(X)^2(t(X) - t_o(X))^2(T-t_o(X))^2\mid X\right]\right]\\
        = &\mathbb{E} \left[\mfr(X)^2(t(X) - t_o(X))^2 \text{Var}(T\mid X)\right] \leq C ||t - t_o||^2_{L_2(\gD)}
    \end{align*}

    for some constant $C$.

    One can similarly argue that \begin{align*}
        \mathbb{E}\left[t_o(X)^2 (\mfr(X) - r_o(X))^2(T-t_o(X))^2\right] \leq C |\mfr - r_o|^2_{L_2(\gD)}
    \end{align*}

%     \begin{align*}
% \mathbb{E}\left[\Big((y(X) - y_o(X))^2 (T -t_o(X)) \Big)^2\right] &= \mathbb{E}\left[\mathbb{E}\left[\Big((y(X) - y_o(X))^2 (T -t_o(X)) \Big)^2\mid X\right]\right] \\
% &= \mathbb{E}\left[(y(X) - y_o(X))^4\mathbb{E}\left[\Big((T -t_o(X)) \Big)^2\mid X\right]\right] \\
% &= \mathbb{E}\left[(y(X) - y_o(X))^2 \text{Var}(T\mid X) \right] \leq \mathbb{E}\left[(y(X) - y_o(X))^2\right]
% \end{align*}

%     The last inequality follows since $\text{Var}(T\mid X) \leq 1$. 
Finally this gives us
    {\small
    \[
    \E\Big(m(\theta_o, \mfr, t; X, Y, T) - m(\theta_o, r_o, t_o; X, Y, T)\Big)_i^2  \leq
    C \left(||t -t_o||{^2}_{L_2(\gD)} + (4S+1)||\mfr -r_o||{^2}_{L_2(\gD)}\right)
    \]       } 
\end{proof}

\begin{claim}{\label{claim:ass_two_fourth1}}
Each component of the matrix \(j(t_o, X)\) satisfies
\[
\E\Bigl[j_{pq}(t_o, X)^2\Bigr] = O(1).
\]
Furthermore, the components of the moment function satisfy
\[
\E\Bigl[m_q(\theta_o, r_o, t_o)^2\Bigr] = O(1).
\]
\end{claim}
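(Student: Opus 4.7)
}

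The plan is to bound both quantities by exploiting the uniform boundedness of $t_o$, $r_o$, $X$, $\theta_o$, and $Y$, together with the finite-moment properties of the first-passage time $T$ under the EZ-diffusion model. The Jacobian bound will be a purely deterministic estimate, whereas the moment-function bound will require one conditioning step to control the debiasing term involving $T$.

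For the Jacobian, recall $j_{pq}(t_o,X) = a^{-2}\, t_o(X)^2\, X_p X_q$. Using the already-established pointwise bound $t_o(X)\le a^2$ together with $|X_p X_q|\le \|X\|^2 \le 1$ (since features lie in the unit ball), I get $|j_{pq}(t_o,X)| \le a^2$ deterministically, so $\E[j_{pq}(t_o,X)^2] \le a^4 = O(1)$.

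For the moment function, I would write
\[
m_q(\theta_o,r_o,t_o;Z) \;=\; \underbrace{\tfrac{\langle X,\theta_o\rangle}{a^2}\, t_o(X)^2 X_q}_{A}\;+\;\underbrace{\tfrac{r_o(X)\,t_o(X)}{a^2}\,(T-t_o(X))\,X_q}_{B}\;-\;\underbrace{\tfrac{Y\,t_o(X)}{a}X_q}_{C},
\]
and apply $(A+B-C)^2 \le 3(A^2+B^2+C^2)$. Term $A$ is bounded in absolute value by $S\,a^2$ via Cauchy--Schwarz (where $S = \|\theta_o\|_2$) and the bound on $t_o$, so $\E[A^2]=O(1)$. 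Term $C$ is bounded by $a^{-1}t_o(X)\le a$ and $Y^2=1$, so $\E[C^2]=O(1)$. The only nontrivial contribution is $B$: conditioning on $X$, the tower property gives $\E[B^2] = a^{-4}\E\bigl[r_o(X)^2 t_o(X)^2 X_q^2\,\mathrm{Var}(T\mid X)\bigr]$, and $|r_o(X)|\le S$, $t_o(X)\le a^2$, $X_q^2\le 1$ already take care of the prefactor.

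The main obstacle is therefore showing that $\mathrm{Var}(T\mid X)$ is uniformly bounded on $\{\|X\|\le 1\}$. This follows from the closed-form EZ-diffusion variance expression reproduced in \cref{sec:EZ_var_properties}: the map $u\mapsto \frac{a(e^{4au}-1-4au\,e^{2au})}{u^3(e^{2au}+1)^2}$ is continuous in $u$ with finite limit $2a^4/3$ as $u\to 0$, hence bounded on the compact set $\{|u|\le S\}$. Equivalently, one may invoke the moment bound on first-passage times cited in the excerpt (as used in the discussion of $M(\zeta)$), which guarantees $\sup_X \E[T^2\mid X] < \infty$ under a uniform bound on $|r_o(X)|$. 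With this in hand, $\E[B^2]=O(1)$ and combining the three pieces yields $\E[m_q(\theta_o,r_o,t_o)^2]=O(1)$, completing the claim.
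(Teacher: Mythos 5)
Your proof is correct and follows essentially the same route as the paper's: bound the Jacobian term deterministically via $t_o(X)\le a^2$ and $\|X\|\le 1$, then bound the moment function by decomposing it, conditioning on $X$ to absorb the $(T-t_o(X))$ factor into $\mathrm{Var}(T\mid X)$, and using uniform boundedness of $r_o$, $t_o$, and the conditional variance. You are in fact more careful than the paper, which only asserts that the variance of $T$ is bounded; your justification via the explicit EZ variance formula (continuity plus finite limit $2a^4/3$ at $u=0$) is valid, though the paper's own \cref{claim:bouding_var_squared_exp} gives a shorter route, $\mathrm{Var}(T\mid X)\le t_o(X)^2\le a^4$, that you could cite instead. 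One minor convention mismatch: you set $S=\|\theta_o\|_2$ while the paper uses $S$ for a uniform bound on $|r_o(X)|$; with $\|X\|\le 1$ these agree up to a constant, so nothing breaks.
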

\begin{proof}
The first statement follows directly from the boundedness of \(t_o(X)\) and the boundedness of \(X\). For the second statement, note that

{\small
\begin{align*}
\E\Bigl[m_q(\theta_o, y_o, t_o)^2\Bigr]
\le 2a^{-2}\left(\E\Bigl[t_o(X)^2\Bigr] +\E\Bigl[a^{-2}\langle X, \theta_o \rangle^2\,t_o(X)^2\Bigr] \cdot \E\Bigl[y_o(X)\bigl(T-t_o(X)\bigr)^2\Bigr] 
  \right).
\end{align*}
}
Since the functions \(r_o\) and \(t_o\) are bounded and the variance of \(T\) is also bounded, it follows that
\[
\E\Bigl[m_q(\theta_o, r_o, t_o)^2\Bigr] = O(1).
\]
\end{proof}

Invoking \cref{lemma:asymptotic_rates_general}, we conclude that the estimator 
\(\hat{\theta}\) obtained via minimising orthogonal loss $\olone$ is asymptotically linear. In particular,
\begin{equation}\label{eq:asympt_lin_one}
\sqrt{n}\,\bigl(\hat{\theta} - \theta_o\bigr)
=
-\frac{1}{\sqrt{n}}\sum_{i=1}^n IF(\theta_o, r_o, t_o; X_i, Y_i, T_i)
+ o_p(1),
\end{equation}
where the influence function is given by
\begin{align*}
    IF(\theta_o, r_o, t_o; X, Y, T)  = \E[a^{-2}t_o(X)^2 XX^T]^{-1}& m(\theta_o, r_o, t_o; X, Y, T)
\end{align*}

Note that by the definition of \(\theta_o\) we have $\mathbb{E} \left[ m(\theta_o, y_o, t_o; X, Y, T) \right] = 0$. Now 
\begin{align}
    m(\theta_o,r_o,t_o; X,Y,T) & = \frac{1}{a}\Bigl( \frac{\langle X, \theta \rangle}{a} \, t_o(X)^2 + y_o(X) \bigl(T - t_o(X)\bigr) - Y \, t_o(X) \Bigr) X\\
    & = \frac{1}{a} \left( y_o(X) \bigl(T - t_o(X)\bigr) -t_o(X) \bigl(Y-y_o(X)\bigr) \right)
\end{align}

Thus using the fact that $\mathbb{E}[T\mid X] = t_o(X), \mathbb{E}[Y\mid X] = y_o(X)$ and $\text{Var} (Y \mid X) = 1- y_o(X)^2$, we get 
\[
\text{Covar}(m(\theta_o, r_o, t_o; X, Y, T))& = \frac{1}{a^2}\E \left[ \Big( t_o(X)^2 (1-y_o(X)^2)+ \text{Var}(T|X) \cdot y_o(X)^2 
\Big)  X X^T\right]\\
= & \frac{1}{a^4}\E \left[ t_o(X)^3 X X^T\right]
\]

The last step follows from the fact that (\cref{claim:exact_var_exp}). Thus, the covariance of the influence function is given by
\[
\text{Covar}(IF) = \E[t_o(X)^2 XX^T]^{-2} \E[ t_o(X)^3 XX^{\top}] 
\]

In particular, this concludes the proof of \cref{theorem:asymptotic_preferences_time1}. We restate it below for clarity.

\begin{theorem*}%{\label{theorem:asymptotic_preferences_time1}}
Let \(\hat\theta_{\mathtt{ortho}}\) minimize the orthogonal loss $\olone$. If \(\mathbb{E}[\,t_{0}(X)^{2}XX^{\top}\,]\) is invertible, then

$$\sqrt{n} (\hat \theta_{\mathtt{ortho}} - \theta_o) \overset{d}{\rightarrow} \mathcal{N}\left(0, \Sigma\right)$$
    where  
    $$\Sigma = \mathbb{E}\left[\left(\frac{a\tanh(a\langle\theta_o, X \rangle)}{\langle\theta_o, X \rangle}\right)^2XX^{\top}\right]^{-2} \mathbb{E}\left[\left(\frac{a\tanh(a\langle\theta_o, X \rangle)}{\langle\theta_o, X \rangle}\right)^3XX^{\top}\right].$$
\end{theorem*}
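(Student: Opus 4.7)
The plan is to prove \cref{theorem:asymptotic_preferences_time1} by fitting it into the general asymptotic framework for debiased machine learning developed in \cite{chernozhukov2018double}, as outlined in \cref{sec:results_dml_orthogonal}. The starting point is to recognize that $\hat\theta_{\mathtt{ortho}}$ solves an empirical moment equation $M_n(\hat\theta, \hat g) = 0$, where $m(\theta, g; Z) = \tfrac12 \nabla_\theta \ell^{\mathrm{ortho}}(\theta, g; Z)$. A short algebraic expansion shows that this moment function has the linear-in-$\theta$ form $m(\theta, \mfr, t; X, Y, T) = j(t; X)\theta + v(\mfr, t; X, Y, T)$ with Jacobian $J(t) = \mathbb{E}[t(X)^2 XX^\top]$ (setting $a=1$). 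So once the regularity conditions of \cref{lemma:asymptotic_rates_general} are checked, the theorem will follow immediately and the covariance can be read off by direct computation.

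First I would verify Neyman orthogonality of the moment function with respect to $g = (\mfr, t)$, i.e., $D_g M(\theta_o, g_o)[\hat g - g_o] = 0$. This is essentially a restatement of \cref{lemma:Neyman_orthogonality} after exchanging expectation and differentiation, using the two conditional mean identities $\mathbb{E}[Y - r_o(X) t_o(X)/\ldots \mid X] = 0$ and $\mathbb{E}[T - t_o(X) \mid X] = 0$. Next I would check the remaining items in \cref{ass:two_asymptotic}: (i) bounded $q$-th moments of $m$ and $j$ at $(\theta_o, g_o)$, which follows since $X$ is bounded, $t_o$ is bounded by $a^2$, and all moments of $T$ exist under the EZ model (cf.\ \cite{momentsdecisiontime}); (ii) Jacobian stability $\|J(\hat t) - J(t_o)\|_{\mathrm{op}} = o_p(1)$, which follows via $|t^2 - t_o^2| \le 2a^2 |t - t_o|$ and the slow convergence of $\hat t$; (iii) the $L_2$ stability of $m$ at $\theta_o$, which I would expand as a sum of cross terms in $(\hat t - t_o)$ and $(\hat \mfr - r_o)$ and bound via Cauchy–Schwarz together with the bounded conditional variance of $T$; and (iv) invertibility of $J(t_o)$, which is exactly the hypothesis of the theorem.

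The step I expect to be the main obstacle is the second-order condition $\sqrt{n}\, D_{gg} M(\theta_o, \bar g)[\hat g - g_o] = o_p(1)$. A direct differentiation produces two types of quadratic terms: a pure $(\hat t - t_o)^2$ piece coming from differentiating $J(t)\theta_o$, and cross terms $(\hat t - t_o)(\hat \mfr - r_o)$ together with another $(\hat t - t_o)^2$ piece coming from $V(\mfr, t)$. Controlling the cross term forces the nuisance rate pairing $\|\hat t - t_o\|_{L_2}\,\|\hat \mfr - r_o\|_{L_2} = o(n^{-1/2})$, which is precisely the condition $\beta_1 + \beta_2 \ge 1/2$ stated in the excerpt. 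I would justify achievability of these rates by either (a) plugging in $\hat t(X) = \tanh(\hat \mfr(X))/\hat \mfr(X)$ with $\hat \mfr$ obtained from logistic regression (invoking Lipschitzness of $x \mapsto \tanh(x)/x$ and the parametric rate of logistic MLE), or (b) estimating $t$ by kernel ridge regression in a Sobolev RKHS; both appear in \cref{sec:nuisance_rates_estimation_linear}.

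With all hypotheses of \cref{lemma:asymptotic_rates_general} satisfied, the conclusion gives $\sqrt{n}(\hat\theta_{\mathtt{ortho}} - \theta_o) \xrightarrow{d} \mathcal{N}(0,\, J(t_o)^{-1} \Omega\, J(t_o)^{-1})$ with $\Omega = \mathbb{E}[m(\theta_o, g_o; Z) m(\theta_o, g_o; Z)^\top]$. The final step is to compute $\Omega$ in closed form. Substituting $y_o(X) = r_o(X) t_o(X)$ into $m(\theta_o, r_o, t_o; X, Y, T)$ yields the compact expression $m = \bigl(y_o(X)(T - t_o(X)) - t_o(X)(Y - y_o(X))\bigr) X$, and taking the conditional expectation over $(Y, T)$ gives $\mathbb{E}[m m^\top \mid X] = \bigl(y_o(X)^2 \mathrm{Var}(T \mid X) + t_o(X)^2 (1 - y_o(X)^2)\bigr) XX^\top$. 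Using the identity $y_o(X)^2 \mathrm{Var}(T \mid X) + t_o(X)^2(1 - y_o(X)^2) = t_o(X)^3$ (which follows from the exact second-moment formulas in \cref{sec:EZ_var_properties}, invoked as \cref{claim:exact_var_exp}), one obtains $\Omega = \mathbb{E}[t_o(X)^3 XX^\top]$. Recalling $t_o(X) = \tanh(\langle\theta_o, X\rangle)/\langle\theta_o, X\rangle$ and combining with $J(t_o) = \mathbb{E}[t_o(X)^2 XX^\top]$ produces exactly the stated $\Sigma$.
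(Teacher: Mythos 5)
Your proposal is correct and mirrors the paper's own proof almost step for step: both cast the problem as a linear-in-$\theta$ moment equation $m(\theta,g;Z) = j(t;Z)\theta + v(\mfr,t;Z)$ with $J(t)=\mathbb{E}[t(X)^2 XX^\top]$, verify Neyman orthogonality, Jacobian stability, bounded moments, $L_2$ stability of $m$, and the second-order condition $\sqrt{n}\,D_{gg}M(\theta_o,\bar g)[\hat g - g_o]=o_p(1)$ via the same decomposition into a $(\hat t - t_o)^2$ piece from $J$ and cross/quadratic pieces from $V$, then invoke the Chernozhukov et al.\ asymptotic-linearity result and reduce $\Omega = \mathbb{E}[m m^\top]$ to $\mathbb{E}[t_o(X)^3 XX^\top]$ using the identity $y_o(X)^2\,\mathrm{Var}(T\mid X) + t_o(X)^2(1-y_o(X)^2) = t_o(X)^3$. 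Your treatment of nuisance rates via plug-in from the logistic MLE (with Lipschitzness of $\tanh(x)/x$) or Sobolev kernel ridge regression also matches the paper's discussion of how the slow-rate conditions $\beta_1 \ge 1/4,\ \beta_2 \ge 1/2 - \beta_1$ are achieved.
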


From claim \cref{claim:pointwise_bigger} and the fact that $\frac{\tanh x}{x} \leq 1$, one can observe that this variance $\Sigma$ is smaller than the variance computed in \cref{lemma:log_loss_var_estimate}.

\subsection{Guarantees for Orthogonal Loss $\oltwo$}

In this section, we prove the following theorem which is the analog of \cref{theorem:asymptotic_preferences_time1} while minimising the orthogonal loss $\oltwo$. We obtain identical asymptotic rates as obtained in $\olone$.

\begin{theorem}{\label{theorem:asymptotic_preferences_time2}}
Let \(\hat\theta_{\mathtt{ortho}}\) minimize the orthogonal loss $\oltwo$. If \(\mathbb{E}[\,t_{0}(X)^{2}XX^{\top}\,]\) is invertible, then

$$\sqrt{n} (\hat \theta_{\mathtt{ortho}} - \theta_o) \overset{d}{\rightarrow} \mathcal{N}\left(0, \Sigma\right)$$
    where  
    $$\Sigma = \mathbb{E}\left[\left(\frac{a\tanh(a\langle\theta_o, X \rangle)}{\langle\theta_o, X \rangle}\right)^2XX^{\top}\right]^{-2} \mathbb{E}\left[\left(\frac{a\tanh(a\langle\theta_o, X \rangle)}{\langle\theta_o, X \rangle}\right)^3XX^{\top}\right].$$
\end{theorem}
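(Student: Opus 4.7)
The plan is to mirror the proof of \cref{theorem:asymptotic_preferences_time1}, casting $\hat\theta_{\mathtt{ortho}}$ as a Z-estimator and invoking \cref{lemma:asymptotic_rates_general}. First I would write out the moment function
\[
m(\theta, y, t; X, Y, T) := \tfrac{1}{2}\nabla_\theta \ell^{\mathrm{ortho\text{-}2}}(\theta, y, t; X, Y, T) = \frac{t(X)^2}{a^2}XX^\top\theta + \frac{y(X)(T-t(X))}{a}X - \frac{t(X)Y}{a}X,
\]
from which one reads off $j(t;X) = t(X)^2 XX^\top / a^2$ and $v(y,t;Z) = \tfrac{1}{a}\bigl[y(X)(T-t(X)) - t(X)Y\bigr]X$. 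The Jacobian $J(t_o) = \E[t_o(X)^2 XX^\top]/a^2$ is \emph{identical} to the one for $\olone$, so its invertibility follows from the same hypothesis on $\E[t_o(X)^2 XX^\top]$. Neyman orthogonality at $g_o = (y_o, t_o)$ is precisely the lemma already proven for $\oltwo$ in \cref{sec:a_estimation_tnondec_discussions}, and the continuity of $D_{gg}M$ in a neighborhood of $g_o$ is routine.

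Next I would verify the quantitative hypotheses of \cref{lemma:asymptotic_rates_general} using almost verbatim the arguments from the $\olone$ case. The Lipschitz-in-nuisance bound $\|J(\hat t) - J(t_o)\|_{\mathrm{op}} = O(\|\hat t - t_o\|_{L_2(\gD)})$ and the moment-stability bound $\E\|m(\theta_o, \hat g; Z) - m(\theta_o, g_o; Z)\|^2 = O(\|\hat y - y_o\|^2_{L_2(\gD)} + \|\hat t - t_o\|^2_{L_2(\gD)})$ follow from boundedness of $X$, $t_o$, and $\mathrm{Var}(T\mid X)$. The second-order remainder $D_{gg}M(\theta_o, \bar g)[\hat g - g_o]$ splits into a term proportional to $\E[(\hat t - t_o)^2 XX^\top]\theta_o$ and a cross term proportional to $\E[(\hat y - y_o)(\hat t - t_o)X]$; both are $o(n^{-1/2})$ whenever $\|\hat t - t_o\|_{L_2(\gD)} = o(n^{-\beta_1})$ with $\beta_1 \ge 1/4$ and $\|\hat y - y_o\|_{L_2(\gD)} = o(n^{-\beta_2})$ with $\beta_2 \ge 1/2 - \beta_1$. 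For $\hat y$, I would either estimate $y_o(X) = \tanh(a\langle\theta_o,X\rangle)$ directly by slow-rate nonparametric regression on the bounded response $Y \in \{-1,+1\}$ (e.g., kernel ridge on a Sobolev RKHS), or else plug in $\hat y(X) = \tanh(a\hat\mfr(X))$ from a preliminary logistic fit, using Lipschitzness of $\tanh$ to transfer the rate of $\hat\mfr$.

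The crucial observation — and the step that makes the asymptotic variance coincide exactly with the one from \cref{theorem:asymptotic_preferences_time1} — is that $y_o(X)/t_o(X) = r_o(X)/a$ at the true values. Substituting this identity into $m$ and using $\langle\theta_o,X\rangle t_o(X)^2/a^2 = y_o(X) t_o(X)/a$ collapses the moment function at truth to
\[
m(\theta_o, y_o, t_o; Z) = \frac{1}{a}\bigl[y_o(X)(T - t_o(X)) - t_o(X)(Y - y_o(X))\bigr]X,
\]
which is pointwise identical to the influence contribution already computed in the proof of \cref{theorem:asymptotic_preferences_time1}. Reusing the variance identity $t_o(X)^2(1 - y_o(X)^2) + y_o(X)^2\,\mathrm{Var}(T\mid X) = t_o(X)^3/a^2$, one obtains $\mathrm{Cov}(m(\theta_o, g_o; Z)) = \E[t_o(X)^3 XX^\top]/a^4$, and sandwiching with $J(t_o)^{-1} = a^2\,\E[t_o(X)^2 XX^\top]^{-1}$ yields $\Sigma = \E[t_o(X)^2 XX^\top]^{-2}\,\E[t_o(X)^3 XX^\top]$, matching the stated expression after substituting $t_o(X) = a\tanh(a\langle\theta_o,X\rangle)/\langle\theta_o,X\rangle$. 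The main obstacle I expect is securing the $o(n^{-1/4})$ rate on $\hat y$ in a fully generic manner: the plug-in route is clean but couples the two nuisance errors and must be combined with the joint rate condition $\beta_1 + \beta_2 \ge 1/2$, whereas a direct regression of $Y$ on $X$ leans on smoothness of $y_o$, which here is inherited from linearity of $r_o$ and smoothness of $\tanh$. Everything else reduces to bookkeeping against the $\olone$ argument.
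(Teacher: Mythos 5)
Your proposal follows essentially the same route as the paper's proof: both cast $\hat\theta_{\mathtt{ortho}}$ in the DML Z-estimator framework of \cref{lemma:asymptotic_rates_general}, read off the same $j(t;X)=t(X)^2XX^\top/a^2$ and $v(y,t;Z)$ from the gradient of $\oltwo$, verify the same Lipschitz-in-nuisance, moment-stability, and second-order-remainder bounds under the joint slow-rate conditions $\beta_1\ge 1/4$, $\beta_2\ge 1/2-\beta_1$, and exploit the identity $y_o(X)/t_o(X)=r_o(X)/a$ to collapse $m$ at truth to $\tfrac{1}{a}\bigl[y_o(X)(T-t_o(X))-t_o(X)(Y-y_o(X))\bigr]X$ before applying the variance identity $t_o^2(1-y_o^2)+y_o^2\,\mathrm{Var}(T\mid X)=t_o^3/a^2$ and sandwiching by $J(t_o)^{-1}$. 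Your nuisance-estimation discussion (plug-in $\hat y=\tanh(a\hat\mfr)$ or direct slow-rate regression) matches the paper's Appendix B.5 treatment; no genuine gap.
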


We now define the moment functions as defined in \cref{sec:results_dml_orthogonal}. Further in this setup, we use $\ell(\cdot)$ to denote the pointwise version of orthogonal loss $\oltwo.$ Recall that the nuisance function $g(\cdot)$ denotes the tuple of nuisance functions $(y, t)$ with $\hat g_o(\cdot) = (y_o,t_o)$ denoting their true values. Further the nuisance function $\hat g = (\hat y,\hat t)$ denotes an estimate of true nuisance function $\hat g_o(\cdot) = (y_o,t_o)$ after the first stage.
Let $\theta_o \in \mathbb{R}^d$ denote the true parameter vector. We define the moment function by
{\small
\[
m(\theta, y, t; X, Y, T) \coloneqq \frac{1}{2}\nabla_\theta \ell(\theta, r; X, Y, T),
\]
}
which, after algebraic manipulation, can be written as
{\small
\[
m(\theta, y, t; X, Y, T) = \frac{1}{a}\Bigl( \frac{\langle X, \theta \rangle}{a} \, t(X)^2 + {y(X)}\bigl(T - t(X)\bigr) - Y \, t(X) \Bigr) X.
\]
}
In addition, we introduce the auxiliary functions:

\begin{equation}
    j(t; X) \coloneqq \frac{t(X)^2}{a^2} \, X X^T \text{ and } v(y,t; X, Y, T) \coloneqq (1/a)\Bigl( -Y \, t(X) + y(X) \bigl(T- t(X)\bigr) \Bigr) X
\end{equation}

We also define the expectation-based mappings:
{\small
\[
M(\theta, y,t) \coloneqq \E\Bigl[m(\theta, y, t; X, Y, T)\Bigr],
& \hspace{ 3 em} J(t) \coloneqq a^{-2}\E\Bigl[t(X)^2 \, X X^T\Bigr],\\
& \hspace{ -2 em} V(y,t) \coloneqq a^{-1}\E\Bigl[\Bigl((t_o(X) - t(X)) y(X) - y_o(X)\, t(X)\Bigr) X\Bigr].
\]
}
A straightforward calculation shows that
\[
M(\theta, y,t) = J(t) \, \theta + V(y,t).
\]

\begin{claim}[Orthogonality of the loss and continuity (assumption \ref{ass:one_asymptotic})]
$D_{g}M(\theta_o,g_0) [\hat g- g_0] = 0$ and $D_{gg}M(\theta,g)[g-g_o]$ is continuous in $\gG$
\end{claim}

% \begin{claim}[Orthogonality of the loss]

% Let \(g\) be a vector-valued function defined as 
% \[
% g(x) = \begin{bmatrix} y(x) \\ t(x) \end{bmatrix}.\] Then $D_{g}M(\theta,g_0) [\hat g- g_0] = 0$ 
% \end{claim}

\begin{proof}

   \begin{align*}
        D_{g} m(\theta_0,g_0,X,Y,T) [\hat g - g_0]
        = & 2a^{-2}X\langle X,\theta_0 \rangle t_0(X) (\hat{t}(X)-t_0(X)) - a^{-1} X y_0(X) (\hat{t}(X)-t_0(X)) \nonumber\\
        & - a^{-1}YX (\hat{t}(X)-t_0(X)) + a^{-1}(T-t_0(X))(\hat{y}(X)-y_0(X)) \nonumber\\  
    \end{align*}
    \begin{align*}
        D_{g}  M(\theta_0,g_0) [\hat g - g_0] 
        & = \E[D_{t}  m(\theta_0,t_0,y_0,X,Y,T) [\hat t - t_0, \hat y - y_0]]\\
        & = (1/a)\mathbb{E}\left[\mathbb{E}\left[X(2a^{-1}\langle X,\theta_0 \rangle t_0(X) - y_0(X) - Y) (\hat{t}(X)-t_0(X)) |X\right]\right]\nonumber\\
        & \hspace{3 em} + \mathbb{E}[\mathbb{E}[(T-t_0(X))(\hat{y}(X)-y_0(X))|X]]
    \end{align*}

Note $\mathbb{E}[(T-t_0(X))(\hat{y}(X)-y_0(X))|X] = 0$ since $\mathbb{E}[T|X] = t_0(X)$.

Also, observe that $\mathbb{E}[2\langle X,\theta_0 \rangle t_0(X)-y_0(X)-Y|X] = 0$ since $\langle X,\theta_0 \rangle = \frac{ay_0(X)}{t_0(X)}$ and $\mathbb{E}[Y|X] = y_0(X)$.

Thus, we can show that $D_{g}  m(\theta_0,g_0,X,Y,T) [\hat g - g_0]= 0$

  The continuity of the second order functional derivative naturally follows.
\end{proof}

The following assumption comes from the assumption in \cref{theorem:asymptotic_preferences_time2}.

\begin{assumption}[Invertibility of the Jacobian]\label{ass:jop_two}
We assume that the Jacobian matrix satisfies
\[
\|J(t_o)^{-1}\|_{\text{op}} = \Bigl\|\E\Bigl[-a^{-2}t_o(X)^2 \, X X^T\Bigr]\Bigr\|_{\text{op}} \leq C,
\]
for some constant $C>0$. This claim is justified under mild conditions on the boundedness of the reward and eigen values of the data  matrix $\E[XX^T]$.
\end{assumption}

Furthermore, we assert the following claim regarding the boundedness of the nuisance function $t_o(X)$.

We now state our assumptions on the convergence rates of the nuisance function estimators.

\begin{claim}[Nuisance Estimation Rates]\label{claim:nu_rates_second}
There exists a black-box learner such that its root mean squared error (RMSE) satisfies
\[
\|\hat{t} - t_o\|_{L_2(\gD)} = o\Bigl(\frac{1}{n^{\beta_1}}\Bigr) \quad \text{with } \beta_1 \geq \frac{1}{4},
\]
and
\[
\|\hat{y} - y_o\|_{L_2(\gD)} = o\Bigl(\frac{1}{n^{\beta_2}}\Bigr) \quad \text{with } \beta_2 \geq \frac{1}{2} - \beta_1.
\]
These conditions ensure that the nuisance estimates converge sufficiently fast as the sample size $n$ increases.

\end{claim}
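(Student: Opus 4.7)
The plan is to establish the claim via two routes, in analogy to the proof of \cref{claim:nu_rates_first}: a direct route using nonparametric regression on the observations, and a plug-in route exploiting the EZ-diffusion identities. Either route suffices to produce a learner meeting the stated RMSE bounds.

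For the direct route, observe that under the linear reward parameterization $y_o(X) = \tanh(a\langle X,\theta_o\rangle)$ and $t_o(X) = a\tanh(a\langle X,\theta_o\rangle)/\langle X,\theta_o\rangle$ are bounded, smooth functions of $X$ (with $|y_o| \le 1$ and $0 \le t_o \le a^2$). I would fit $\hat y$ by regressing $Y_i$ on $X_i$ and $\hat t$ by regressing $T_i$ on $X_i$ independently via kernel ridge regression on a Sobolev RKHS $W^{s,2}$. Standard results (see \cite{Wainwright_2019, adams.fournier:03:sobolev}) yield $L_2(\gD)$ rates of $O_p(n^{-s/(2s+d)})$ for both estimators. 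For $s$ sufficiently large relative to $d$ this is $o(n^{-1/4})$, which simultaneously satisfies $\beta_1 \ge 1/4$ and $\beta_2 \ge 1/2 - \beta_1$.

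For the plug-in route, I would first estimate $\tilde\theta := a\theta_o$ by minimizing the empirical logistic loss \eqref{eq:logloss_a}, under which $y_o(X) = \tanh(\langle X,\tilde\theta\rangle)$. Under invertibility of the information matrix and standard regularity (already invoked in \cref{sec:asymptotic_preference}), classical MLE theory yields $\|\hat{\tilde\theta} - \tilde\theta\|_2 = O_p(n^{-1/2})$. The plug-in $\hat y(X) = \tanh(\langle X, \hat{\tilde\theta}\rangle)$ then inherits the $n^{-1/2}$ rate by the $1$-Lipschitz property of $\tanh$ combined with $\|X\| \le 1$. For $\hat t$, one may either plug into $\hat t(X) = a^2 \tanh(\langle X,\hat{\tilde\theta}\rangle)/\langle X,\hat{\tilde\theta}\rangle$ when $a$ is known, or fit $\hat t$ directly via nonparametric regression as in the first route.

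The main technical subtlety is the behavior of the mapping $x \mapsto \tanh(x)/x$ near $x = 0$: it must be interpreted as its continuous extension (equal to $1$ at the origin), whose derivatives are uniformly bounded on the compact interval $\{|\langle X, \theta\rangle| \le C\}$ determined by the bounded feature and parameter spaces. This gives uniform Lipschitz control, allowing the $n^{-1/2}$ error on $\hat{\tilde\theta}$ to propagate to the plug-in $\hat t$ in $L_2(\gD)$. A secondary requirement is that the nuisance estimation and the orthogonal loss minimization operate on disjoint folds, via data-splitting or cross-fitting as in \Cref{meta:reward_model_crossfit}, so that the independence needed for the downstream asymptotic analysis is preserved.
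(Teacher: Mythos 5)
Your proposal is correct and takes essentially the same route as the paper's discussion in \cref{sec:nuisance_rates_estimation_linear}: plug-in of the logistic-loss $\hat{\tilde\theta}$ for $\hat y$ via Lipschitz control of $\tanh$, plus either a plug-in of the same estimate (when $a$ is known) or a Sobolev kernel ridge regression for $\hat t$. The paper additionally spells out the rate constants (an $e^{S}$ factor from the strong-convexity parameter of the logistic loss, and a factor $s!$ from the $W^{s,2}$ Sobolev norm of $\tanh(x)/x$), but the argument is the same; your added care about the removable singularity of $x\mapsto \tanh(x)/x$ at the origin and about data splitting are correct observations that the paper defers or leaves implicit. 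One minor point: the paper states the KRR rate as $n^{-2s/(2s+d)}$ in the $L_2(\gD)$ norm, but the standard statement (which you give) is $n^{-s/(2s+d)}$ for RMSE — the paper appears to have reported the squared (MSE) exponent — so your version is actually the one that matches the $L_2(\gD)$ norm used in the claim.
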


A discussion on how these slow rates can be attained in provided in \cref{sec:nuisance_rates_estimation_linear}.

\begin{claim}
\[
t_o(X) = \frac{\tanh\bigl(\langle \theta_o, X\rangle\bigr)}{\langle \theta_o, X\rangle} \leq 1.
\]
\end{claim}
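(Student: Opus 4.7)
The plan is to reduce this two‐variable statement to the scalar inequality $\tanh(u) \le u$ for $u \ge 0$, and then use symmetry to finish. Setting $u = \langle \theta_o, X\rangle$, the claim becomes $\tanh(u)/u \le 1$ for $u \ne 0$, with the $u = 0$ case handled by the EZ‐model convention $t_o(X) = a^2 = 1$ already recorded in \eqref{eq:expected_time_pref_probs}.

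First I would establish $\tanh(u) \le u$ for $u \ge 0$ by a textbook monotonicity argument: set $f(u) = u - \tanh(u)$, observe that $f(0) = 0$, and compute
\[
f'(u) \;=\; 1 \;-\; \frac{1}{\cosh^2(u)} \;=\; \tanh^2(u) \;\ge\; 0.
\]
Hence $f$ is nondecreasing on $[0,\infty)$, which gives $f(u) \ge 0$, i.e.\ $\tanh(u)/u \le 1$ for every $u > 0$.

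The remaining sub‐cases are immediate. Since $\tanh$ is odd, the map $u \mapsto \tanh(u)/u$ is even, so the bound transfers automatically to $u < 0$. At $u = 0$ the ratio is defined to equal $1$ by the EZ‐model convention, matching the limit $\lim_{u \to 0}\tanh(u)/u = 1$. Combining the three sub‐cases yields $t_o(X) \le 1$ on the entire domain, with equality attained only in the limit at $u = 0$. There is no genuine obstacle in the argument; the only subtlety is the degenerate $u = 0$ case, which is absorbed into the definition of $t_o$, so one just has to be careful to state the three cases cleanly rather than divide through by $u$ blindly.
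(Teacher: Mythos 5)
Your proof is correct and takes essentially the same route as the paper, which simply cites the inequality $\tanh(x)\le x$ for $x\ge 0$; you have filled in the standard monotonicity argument and handled the even‐symmetry and $u=0$ cases explicitly, but there is no conceptual difference.
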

A brief inspection using the properties of the hyperbolic tangent function (notably, that $\tanh(x) \leq x$ for $x \geq 0$) confirms this bound.

% With these preliminaries and under the goal of invoking Theorem \ref{theorem:dml_asymptotic} (Theorem 32 in \cite{syrgkanis_mse_nodate}), we now proceed to establish the following claims.

We now show that assumptions in \cref{ass:two_asymptotic} hold for our loss function using the following four claims and lemmas namely \cref{claim:ass_two_fourth2}, \cref{claim:ass_two_first2}, \cref{lemma:ass_two_two2} and \cref{claim:ass_two_three2}.

\begin{claim}{\label{claim:ass_two_fourth2}}
    The jacobian $J(\cdot)$ satisfies $J(\hat t) - J(t_o) = o(1)$ and the $q^{th}$ order moment conditions hold i.e. i.e. $\sup_{g \in \gG} \left(\mathbb{E}\left[||m(\theta_o, g, Z)||^q\right]^{1/q}\right)$ and 
        $\sup_{g \in \gG} \left(\mathbb{E}\left[||j(g, Z)||^q\right]^{1/q}\right)$ is bounded.
\end{claim}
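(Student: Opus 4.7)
The plan is to mirror the argument of \cref{claim:ass_two_first1} almost verbatim, exploiting the fact that the Jacobian $j(t;X) = a^{-2} t(X)^2 XX^{T}$ has the identical functional form under $\oltwo$ as under $\olone$: it depends only on the time nuisance $t(\cdot)$, and is agnostic to whether the second nuisance component is $\mfr$ or $y$. Consequently, the convergence analysis transfers without modification; only the moment bound for $m(\theta_o, g, Z)$ requires a small adjustment since its functional form involves $y$ instead of $\mfr t$.

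First I would establish $\|J(\hat t) - J(t_o)\|_{\operatorname{op}} = o(1)$. Writing
\[
J(\hat t) - J(t_o) = a^{-2}\,\mathbb{E}\bigl[(\hat t(X)^{2} - t_o(X)^{2})\,XX^{T}\bigr],
\]
I would bound the operator norm by the Frobenius norm, use $\|XX^{T}\|_{F} \le \|X\|^{2} \le 1$, and factor $\hat t(X)^{2} - t_o(X)^{2} = (\hat t(X) - t_o(X))(\hat t(X) + t_o(X))$. Since $|t_o(X)| \le a^{2}$ (by the $\tanh$ bound in \cref{sec:EZ_var_properties}) and the nuisance class $\gG$ is uniformly bounded, this yields $\|J(\hat t) - J(t_o)\|_{\operatorname{op}} \le C\,\mathbb{E}[|\hat t - t_o|] \le C\,\|\hat t - t_o\|_{L_2(\gD)}$, which is $o(1)$ by \cref{claim:nu_rates_second}.

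Second, for the $q$th moment conditions I would argue by uniform boundedness of the Jacobian and linear growth of the score in $T$. The Jacobian $j(t;X) = a^{-2}\,t(X)^{2} XX^{T}$ is uniformly bounded on $\gG$ since $|t(X)| \le a^{2}$ and $\|X\| \le 1$, so $\sup_{g \in \gG}\mathbb{E}[\|j(g,Z)\|^{q}]^{1/q}$ is trivially finite. For
\[
m(\theta_o, y, t; X, Y, T) = \tfrac{1}{a}\bigl(a^{-1}\langle X, \theta_o\rangle\, t(X)^{2} + y(X)(T - t(X)) - Y\,t(X)\bigr)X,
\]
all factors other than $T$ are uniformly bounded ($\|X\|\le 1$, $|y|\le 1$, $|t|\le a^{2}$, $Y \in \{\pm 1\}$, $\theta_o$ fixed), and $T$ appears only linearly. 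Invoking the moment bound $\sup_{X}\mathbb{E}[T^{q}\mid X] < \infty$ from \cite{momentsdecisiontime} (and the EZ-model variance expressions in \cref{sec:EZ_var_properties}), the triangle inequality gives a uniform bound on $\sup_{g \in \gG}\mathbb{E}[\|m(\theta_o,g,Z)\|^{q}]^{1/q}$.

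The argument presents no real obstacle: the only point requiring care is the dependence of $m$ on the unbounded response time $T$, which is handled by the existence of arbitrary moments under the EZ diffusion. Uniform boundedness of $\gG$ is a standard assumption on the nuisance class (e.g.\ ensured by projection onto a bounded sieve), as already invoked implicitly in \cref{claim:ass_two_first1}.
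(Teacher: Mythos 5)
Your proposal is correct and takes essentially the same route as the paper: bound $\|J(\hat t) - J(t_o)\|_{\operatorname{op}}$ by the Frobenius norm, factor $\hat t(X)^2 - t_o(X)^2 = (\hat t - t_o)(\hat t + t_o)$, use $\|XX^{T}\|_{F}\le 1$ and uniform boundedness of the nuisance class, then invoke \cref{claim:nu_rates_second} for consistency; for the moment conditions, the paper likewise appeals to boundedness of all quantities other than $T$ and the existence of all moments of $T$ under the EZ model. Your write-up is slightly more explicit than the paper's one-line moment argument, but the content is identical.
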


\begin{proof}
Write
\[
J(t) - J(t_o) = a^{-2}\mathbb{E}\Bigl[\bigl(t(X)^2 - t_o(X)^2\bigr)\,XX^T\Bigr].
\]
Since the operator norm is bounded by the Frobenius norm, we have
\[
\|J(t) - J(t_o)\|_{\operatorname{op}} \le \|J(t) - J(t_o)\|_{F}.
\]
Using the Frobenius norm, we estimate
\[
\|J(t) - J(t_o)\|_{F} = a^{-2}\Bigl\|\mathbb{E}\Bigl[\bigl(t(X)^2 - t_o(X)^2\bigr)\,XX^T\Bigr]\Bigr\|_{F}.
\]
Using the boundedness of \(X\) (i.e. \(\|X\| \le 1\)), we have
\[
\|XX^T\|_{F} = \|X\|^2 \le 1.
\]
Notice that
\[
|t(X)^2 - t_o(X)^2| = |t(X)-t_o(X)|\,|t(X)+t_o(X)|.
\]
Both \(t\) and \(t_o\) are uniformly bounded (\(|t(X)|, |t_o(X)| \le 1\)), then $|t(X)+t_o(X)| \le 2$. Hence, $|t(X)^2 - t_o(X)^2| \le 2\,|t(X)-t_o(X)|$. Thus, we obtain
\[
\|J(t) - J(t_o)\|_{F} \le a^{-2}\mathbb{E}\Bigl[ 2\,|t(X)-t_o(X)| \cdot  \Bigr]
= 2 a^{-2}\mathbb{E}\bigl[|t(X)-t_o(X)|\bigr].
\]
Finally, applying Jensen's inequality (or noting that \(\mathbb{E}[|t(X)-t_o(X)|] \le \|t-t_o\|_{L_2(\gD)}\)) yields
\[
\|J(t) - J(t_o)\|_{\operatorname{op}} \le 2 a^{-2}\|t-t_o\|_{L_2(\gD)}.
\]

This concludes the proof of the first part as the nuisance estimators are consistent i.e $||\hat g - g_o||_{L_2(\gD)}= o_p(1)$

We now check the moment condition. This naturally follows from the fact that $\mathbb{E}[T^\alpha\mid X]$ is bounded for every $\alpha \geq 1$ and the fact that rest all random variables or functions are bounded.

\end{proof}

\bigskip

\begin{claim}{\label{claim:ass_two_first2}}
Let \(g\) be a vector-valued function defined as 
\[
g(x) = \begin{bmatrix} t(x) \\ y(x) \end{bmatrix}.
\]
Then, for any \(\bar{g} = \tau g + (1-\tau)g_o\) with \(\tau \in [0,1]\), we have
\[
\sqrt{n}\, D_{gg}M\bigl(\theta_o,\bar{g}\bigr)[g-g_o] = o_p(1).
\]
\end{claim}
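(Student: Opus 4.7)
The plan is to mimic the strategy of \cref{claim:ass_two_second1} but with the nuisance pair $(y,t)$ in place of $(\mfr,t)$. Since $M(\theta_o,y,t) = J(t)\theta_o + V(y,t)$ is jointly affine in $y$ and polynomial in $t$, the full second directional derivative splits cleanly as
\[
D_{gg}M(\theta_o,\bar g)[g-g_o]
\;=\; \underbrace{\frac{\partial^2}{\partial s^2}\bigl(J(\bar t + s\Delta_t)\theta_o\bigr)\Big|_{s=0}}_{(I)}
\;+\; \underbrace{\frac{\partial^2}{\partial s^2}V(\bar y+ s\Delta_y,\bar t+ s\Delta_t)\Big|_{s=0}}_{(II)},
\]
where $\Delta_t = \hat t - t_o$ and $\Delta_y = \hat y - y_o$. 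Term $(I)$ is literally identical to the corresponding term in \cref{claim:ass_two_second1}, so I would directly reuse that bound to get $\|(I)\| = O(\|\Delta_t\|_{L_2(\gD)}^2)$.

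The only new piece is $(II)$. Since $V(y,t) = a^{-1}\E[((t_o(X)-t(X))y(X) - y_o(X)t(X))X]$ is linear in $y$ and linear in $t$, the pure second derivatives in $t$ alone and $y$ alone vanish, and only the mixed derivative remains. A direct expansion of $V(\bar y + s\Delta_y,\bar t + s\Delta_t)$ in $s$ yields
\[
(II) \;=\; -\,\frac{2}{a}\,\E\!\bigl[\Delta_y(X)\,\Delta_t(X)\,X\bigr].
\]
Using $\|X\|\le 1$ and Cauchy--Schwarz, we bound $\|(II)\| \le 2a^{-1}\,\|\Delta_t\|_{L_2(\gD)}\,\|\Delta_y\|_{L_2(\gD)}$.

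To finish, I combine the two bounds. By \cref{claim:nu_rates_second}, we have $\|\Delta_t\|_{L_2(\gD)} = o(n^{-\beta_1})$ with $\beta_1 \ge 1/4$ and $\|\Delta_y\|_{L_2(\gD)} = o(n^{-\beta_2})$ with $\beta_2 \ge 1/2 - \beta_1$. Hence $\sqrt{n}\,\|(I)\| = O\bigl(\sqrt{n}\,\|\Delta_t\|_{L_2(\gD)}^2\bigr) = o(n^{1/2 - 2\beta_1}) = o(1)$, and $\sqrt{n}\,\|(II)\| = O\bigl(\sqrt{n}\,\|\Delta_t\|_{L_2(\gD)}\|\Delta_y\|_{L_2(\gD)}\bigr) = o(n^{1/2 - \beta_1 - \beta_2}) = o(1)$. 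Summing the two gives $\sqrt{n}\,D_{gg}M(\theta_o,\bar g)[g-g_o] = o_p(1)$ as claimed.

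There is no real obstacle here since $M$ is a low-order polynomial in the nuisance components: the main work is the (purely mechanical) verification that $V$ has a vanishing pure second derivative in each of $y$ and $t$ separately, leaving only the mixed term, which is where Neyman-orthogonality of $\oltwo$ manifests at second order. The required nuisance rates are precisely engineered so that $\beta_1 + \beta_2 \ge 1/2$ kills the mixed term after multiplication by $\sqrt{n}$, in exact analogy with the $\olone$ case.
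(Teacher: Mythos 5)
Your proof is correct and follows essentially the same decomposition and route as the paper: split $D_{gg}M$ into the $J(\cdot)\theta_o$ piece and the $V(\cdot,\cdot)$ piece, bound the former by $\|\Delta_t\|_{L_2(\gD)}^2$, the latter via Cauchy--Schwarz by $\|\Delta_t\|_{L_2(\gD)}\|\Delta_y\|_{L_2(\gD)}$, and invoke the rates from \cref{claim:nu_rates_second}. (Minor note: you are actually more careful than the paper on the mixed term, correctly getting the factor $-2/a$ where the paper writes $a^{-1}$ without the $-2$; the discrepancy is immaterial since only the order matters.)
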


\begin{proof}
We wish to control the second-order term in the expansion of \(M(\theta_o,\,g)\) around \(g_o\). By Taylor's theorem in the direction \(g-g_o\), we have
\[
D_{gg}M\bigl(\theta_o,\bar{g}\bigr)[g-g_o] = \frac{\partial^2}{\partial s^2} M\Bigl(\theta_o,\,\bar{g} + s\,(g-g_o)\Bigr)\Big|_{s=0}.
\]
This term decomposes into two parts:
{\small
\begin{align}
D_{gg}M\bigl(\theta_o,\bar{g}\bigr)[g-g_o] &= \underbrace{\frac{\partial^2}{\partial s^2}\Bigl(J\bigl(\bar{t} + s\,(t-t_o)\bigr)\theta_o\Bigr)\Big|_{s=0}}_{(I)} + \underbrace{\frac{\partial^2}{\partial s^2}V\Bigl(\bar{t}+ s\,(t-t_o),\, \bar{y}+ s\,(y-y_o)\Bigr)\Big|_{s=0}}_{II}. 
\end{align}
}
First Term (I): For \(J(t)=a^{-2}\E[t(X)^2XX^T]\), we have
{\small
\[
\frac{\partial^2}{\partial s^2} \Bigl( \bigl(\bar{t}(X) + s\,(t(X)-t_o(X))\bigr)^2\Bigr)\Big|_{s=0} = 2\,(t(X)-t_o(X))^2.
\]
}
Thus,
{\small\[
\frac{\partial^2}{\partial s^2}\Bigl(J\bigl(\bar{t}+ s\,(t-t_o)\bigr)\theta_o\Bigr)\Big|_{s=0} = 
2a^{-2}\,\E\Bigl[(t(X)-t_o(X))^2\,XX^T\Bigr]\theta_o.
\]}
Since \(\|XX^T\|_{\operatorname{op}}\) is bounded (using \(\|X\|\le 1\)) and \(\theta_o\) is fixed, it follows that
\[
\Bigl\|2\,\E\Bigl[(t(X)-t_o(X))^2\,XX^T\Bigr]\theta_o\Bigr\| = O\Bigl(\|t-t_o\|_{2}^2\Bigr).
\]
Under \cref{claim:nu_rates_second} we have \(\|t-t_o\|_{2}^2 = o\Bigl(\frac{1}{\sqrt{n}}\Bigr)\), so that
\[
\sqrt{n}\,O\Bigl(\|t-t_o\|_{2}^2\Bigr) = o(1).
\]

Second Term (II): For the function \(V(t,y)\), we have
\[
\frac{\partial^2}{\partial s^2}V\Bigl(\bar{t}+ s\,(t-t_o),\, \bar{y}+ s\,(y-y_o)\Bigr)\Big|_{s=0} = a^{-1}
\E\Bigl[(t(X)-t_o(X))\,(y(X)-y_o(X))\,X\Bigr].
\]
Using the boundedness of \(X\) and applying the Cauchy--Schwarz inequality, we obtain
\[
\Bigl\|\E\Bigl[(t(X)-t_o(X))\,(y(X)-y_o(X))\,X\Bigr]\Bigr\|
\le C\,\|t-t_o\|_{L_2(\gD)}\,\|y-y_o\|_{L_2(\gD)},
\]
for some constant \(C>0\). By \cref{claim:nu_rates_second}, the product \(\|t-t_o\|_{L_2(\gD)}\,\|y-y_o\|_{L_2(\gD)}\) is \(o\Bigl(\frac{1}{\sqrt{n}}\Bigr)\). Therefore,
\[
\sqrt{n}\, \Bigl\|\E\Bigl[(t(X)-t_o(X))\,(y(X)-y_o(X))\,X\Bigr]\Bigr\| = o(1).
\]

Combining the two terms, we conclude that
\[
\sqrt{n}\, D_{gg}M\bigl(\theta_o,\bar{g}\bigr)[g-g_o] = o_p(1).
\]
This completes the proof.
\end{proof}

\begin{lemma}{\label{lemma:ass_two_two2}} The following holds
    \[
    \E[\norm{m(\theta_o, y, t; X, Y, T) - m(\theta_o, y_o, t_o; X, Y, T)}{}^2] = O \Big(||t -t_o||{^2}_{L_2(\gD)} + ||y -y_o||^2_{L_2(\gD)} \Big)
    \]. Given that $||\hat t - t_o||_{L_2(\gD)} = o(1)$ and $||\hat \mfr - r_o||_{L_2(\gD)} = o(1)$, we have $\E[\norm{m(\theta_o, \hat g, Z) - m(\theta_o, g_0; Z)}{}^2] = o(1)$
\end{lemma}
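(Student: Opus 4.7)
The plan is to mirror the proof of the analogous statement for $\olone$ (\cref{lemma:ass_two_third1}), adapting the algebra to the moment function associated with $\oltwo$. The key step is to rewrite the difference so that each term in the expansion is a product of either $(t(X) - t_o(X))$ or $(y(X) - y_o(X))$ with factors whose second moments are uniformly controlled under the boundedness of $\theta_o$, $t$, $y$, $t_o$, $y_o$, and $Y$.

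First I would write
\[
m(\theta_o, y, t; X, Y, T) - m(\theta_o, y_o, t_o; X, Y, T) = \tfrac{1}{a}\,D(X,Y,T)\,X,
\]
where $D$ is the scalar bracketed difference. Using $t(X)^2 - t_o(X)^2 = (t(X) + t_o(X))(t(X) - t_o(X))$ together with the identity
\[
y(X)(T - t(X)) - y_o(X)(T - t_o(X)) = (y(X) - y_o(X))(T - t_o(X)) - y(X)(t(X) - t_o(X)),
\]
I would split $D = A_1 + A_2 + A_3 + A_4$ so that $A_1, A_3, A_4$ each carry a factor $(t(X) - t_o(X))$ while $A_2 = (y(X) - y_o(X))(T - t_o(X))$. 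Applying $\|X\| \leq 1$ and $(a_1 + a_2 + a_3 + a_4)^2 \leq 4 \sum_i a_i^2$, the left-hand side of the lemma reduces to $\tfrac{4}{a^2}\sum_{i=1}^4 \E[A_i^2]$, which I then bound term by term.

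For $A_1, A_3, A_4$, the remaining coefficients ($\langle X, \theta_o\rangle/a$ and $t(X) + t_o(X)$ in $A_1$, $y(X)$ in $A_3$, and $Y$ in $A_4$) are uniformly bounded by constants depending only on $a$ and $S$, so each of these three expectations is $O(\|t - t_o\|_{L_2(\gD)}^2)$. The main (and only delicate) step is bounding $\E[A_2^2] = \E[(y(X) - y_o(X))^2 (T - t_o(X))^2]$, since $T$ itself has heavy tails and is not uniformly bounded. I would handle this by conditioning on $X$ and using
\[
\E\!\bigl[(T - t_o(X))^2 \,\big|\, X\bigr] = \mathrm{Var}(T \mid X),
\]
which by the closed-form expression recalled in \cref{sec:EZ_var_properties} is uniformly bounded whenever $r_o$ is bounded, giving $\E[A_2^2] = O(\|y - y_o\|_{L_2(\gD)}^2)$. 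Summing the four bounds yields the stated $O(\|t - t_o\|_{L_2(\gD)}^2 + \|y - y_o\|_{L_2(\gD)}^2)$ inequality, and the second assertion follows by plugging in the $o(1)$ nuisance rates from \cref{claim:nu_rates_second}.
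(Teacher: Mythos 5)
Your proposal takes essentially the same route as the paper: you isolate $(y - y_o)(T - t_o)$ using exactly the same algebraic rearrangement, observe that the remaining three terms each carry a factor $(t - t_o)$ with uniformly bounded coefficients, and handle the $T$-dependent term by conditioning on $X$ and invoking boundedness of $\mathrm{Var}(T\mid X)$. The paper's proof does the same thing, the only cosmetic difference being that it passes through the intermediate inequality $\mathrm{Var}(T\mid X)\le (\E[T\mid X])^2 \le 1$ rather than reading uniform boundedness of $\mathrm{Var}(T\mid X)$ directly off the closed form.
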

% \footnotetext{It can be shown that $E[T^4|X]$ can also be bounded, thus proving the second assumption in theorem \ref{theorem:dml_asymptotic} in our setup.}

\begin{proof}
    We have 
    {\small
    \[
    & m(\theta_o, y, t; X, Y, T) - m(\theta_o, y_o, t_o; X, Y, T) \\
    = & a^{-1}\Big(a^{-1}\dotprod{\theta_o}{X}(t(X)^2 - t_o(X)^2) +
    (y(X) - y_o(X)) T + 
    y_0(X) t_o(X) - y(X) t(X)+
    Y (t_o(X) -t(X)) \Big)X
    \]
    }
    We can further write the term 
    {\small
    \[
    y_o(X) t_o(X) &- y(X) t(X) = (y_o(X) - y(X)) t_o(X) + y(X) (t_o(X) - t(X)) 
    \]
    }
    Using the fact that $||X||_2 \leq 1$, we have 
    {\small
    \[
    \E\Big(m(\theta_o, y, t; X, Y, T) - m(\theta_o, y_o, t_o; X, Y, T)\Big)_i^2
    & = \sqrt{d}a^{-1}\E \Bigg[\Big(a^{-1}\dotprod{\theta_o}{X}(t(X) - t_o(X))(t(X) +t_o(X)) \\
    & \hspace{-2 em}+ (y(X) - y_o(X)) (T -t_o(X))+ y(X) (t_o(X) - t(X))+ Y (t_o(X) -t(X)) \Big)^2 \Bigg]
     \]}
    The desired result is immediate via the the identity that $(a+b+c+d)^2 \leq 4 (a^2 +b^2+ c^2 + d^2)$ except for the following term 
    
    \begin{align*}
\mathbb{E}\left[\Big((y(X) - y_o(X))^2 (T -t_o(X)) \Big)^2\right] &= \mathbb{E}\left[\mathbb{E}\left[\Big((y(X) - y_o(X))^2 (T -t_o(X)) \Big)^2\mid X\right]\right] \\
&= \mathbb{E}\left[(y(X) - y_o(X))^4\mathbb{E}\left[\Big((T -t_o(X)) \Big)^2\mid X\right]\right] \\
&= \mathbb{E}\left[(y(X) - y_o(X))^2 \text{Var}(T\mid X) \right] \leq \mathbb{E}\left[(y(X) - y_o(X))^2\right]
\end{align*}

    The last inequality follows since $\text{Var}(T\mid X) \leq 1$. Finally this gives us
    {\small
    \[
    \E\Big(m(\theta_o, y, t; X, Y, T) - m(\theta_o, y_o, t_o; X, Y, T)\Big)_i^2  \leq
    (4S+1)||t -t_o||{^2}_{L_2(\gD)} + (4S+1)||y -y_o||{^2}_{L_2(\gD)}
    \]       } 
\end{proof}

\begin{claim}{\label{claim:ass_two_three2}}
Each component of the matrix \(j(t_o, X)\) satisfies
\[
\E\Bigl[j_{pq}(t_o, X)^2\Bigr] = O(1).
\]
Furthermore, the components of the moment function satisfy
\[
\E\Bigl[m_q(\theta_o, y_o, t_o)^2\Bigr] = O(1).
\]
\end{claim}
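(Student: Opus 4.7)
The plan is to mirror the argument of Claim~\ref{claim:ass_two_fourth1}, since Claim~\ref{claim:ass_two_three2} is the exact analogue for the $\oltwo$ moment function. First I would establish the bound on $j(t_o,X)$: by the explicit form $j_{pq}(t_o,X) = a^{-2} t_o(X)^2 X_p X_q$ together with the two uniform bounds $t_o(X) \le 1$ (shown in the preceding claim) and $\|X\| \le 1$ (a standing assumption in the linear analysis), each entry is deterministically bounded by $a^{-2}$. Squaring and taking expectation then gives $\E[j_{pq}(t_o,X)^2] \le a^{-4} = O(1)$.

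Next, for the moment function I would expand the squared norm of the vector
$$m(\theta_o, y_o, t_o; X,Y,T) = \frac{1}{a}\Bigl( a^{-1}\langle X,\theta_o\rangle t_o(X)^2 + y_o(X)(T-t_o(X)) - Y\, t_o(X) \Bigr) X$$
and control each of the three terms inside the parentheses separately via $(a_1+a_2+a_3)^2 \le 3(a_1^2+a_2^2+a_3^2)$. The first term is bounded pointwise using $\|X\|\le 1$, the fixed norm of $\theta_o$, and $t_o(X)\le 1$. The third term is bounded pointwise since $|Y|=1$ and $t_o$ is bounded.

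The only term requiring the conditional variance of $T$ is the middle one. Conditioning on $X$ yields
$$\E\bigl[y_o(X)^2(T-t_o(X))^2\bigr] = \E\bigl[y_o(X)^2\,\mathrm{Var}(T\mid X)\bigr],$$
and from the EZ‑diffusion expressions recalled in Section~\ref{sec:EZ_var_properties}, $\mathrm{Var}(T\mid X)$ is uniformly bounded on compact reward ranges (and in fact bounded by $2a^4/3$ at $r=0$ and decays for larger $|r|$); combined with $|y_o(X)|\le 1$, this contribution is $O(1)$. A final outer multiplication by $\|X\|^2 \le 1$ and the prefactor $a^{-2}$ gives $\E[m_q(\theta_o,y_o,t_o)^2] = O(1)$ for every coordinate $q$.

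I do not anticipate any real obstacle: every ingredient is either a uniform bound on a nuisance/signal function or the existence of $\mathrm{Var}(T\mid X)$, both of which are already used in Claim~\ref{claim:ass_two_fourth1} and in Lemma~\ref{lemma:ass_two_two2}. The mildest subtlety is making sure the variance of $T$ is bounded uniformly over $X$ in the support; this follows from the closed‑form expression in Section~\ref{sec:EZ_var_properties} together with the fact that $\langle\theta_o,X\rangle$ ranges over a bounded set under the standing assumption $\|X\|\le 1$ with $\theta_o$ fixed.
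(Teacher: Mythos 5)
Your argument is correct and follows essentially the same route as the paper: bound $j$ pointwise via $t_o\le 1$ and $\|X\|\le 1$, then split $m$ into its three summands, bound the deterministic ones pointwise, and handle the $y_o(X)(T-t_o(X))$ term by conditioning on $X$ and invoking the uniform bound on $\mathrm{Var}(T\mid X)$ (which also follows from the paper's Claim~\ref{claim:bouding_var_squared_exp} together with $t_o\le a^2$). Your explicit $(a_1+a_2+a_3)^2\le 3(a_1^2+a_2^2+a_3^2)$ decomposition is in fact cleaner than the paper's displayed inequality, which appears to contain a typographical slip (a product where a sum of squared terms is intended), but the underlying reasoning is identical.
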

\begin{proof}
The first statement follows directly from the boundedness of \(t_o(X)\) and the boundedness of \(X\). For the second statement, note that

{\small
\begin{align*}
\E\Bigl[m_q(\theta_o, y_o, t_o)^2\Bigr]
\le 2a^{-2}\left(\E\Bigl[t_o(X)^2\Bigr] +\E\Bigl[a^{-2}\langle X, \theta_o \rangle^2\,t_o(X)^2\Bigr] \cdot \E\Bigl[y_o(X)\bigl(T-t_o(X)\bigr)^2\Bigr] 
  \right).
\end{align*}
}
Since the functions \(y_o\) and \(t_o\) are bounded and the variance of \(T\) is also bounded , it follows that
\[
\E\Bigl[m_q(\theta_o, y_o, t_o)^2\Bigr] = O(1).
\]
\end{proof}

Invoking \cref{lemma:asymptotic_rates_general} obtained via minimising orthogonal loss $\oltwo$ is asymptotically linear. In particular,
\begin{equation}\label{eq:asympt_lin}
\sqrt{n}\,\bigl(\hat{\theta}_{\mathrm{ortho}} - \theta_o\bigr)
=
-\frac{1}{\sqrt{n}}\sum_{i=1}^n IF(\theta_o, y_o, t_o; X_i, Y_i, T_i)
+ o_p(1),
\end{equation}
where the influence function is given by
\begin{align*}
    IF(\theta_o, y_o, t_o; X_i, Y_i, T_i)  = \E[a^{-2}t_o(X)^2 XX^T]^{-1}& m(\theta_o, y_o, t_o; X_i, Y_i, T_i)
\end{align*}

Note that by the definition of \(\theta_o\) we have $\mathbb{E} \left[ m(\theta_o, y_o, t_o; X, Y, T) \right] = 0$. Now 
\begin{align}
    m(\theta_o,t_o,y_o; X,Y,T) & = \frac{1}{a}\Bigl( \frac{\langle X, \theta \rangle}{a} \, t_o(X)^2 + y_o(X) \bigl(T - t_o(X)\bigr) - Y \, t_o(X) \Bigr) X\\
    & = \frac{1}{a} \left( y_o(X) \bigl(T - t_o(X)\bigr) -t_o(X) \bigl(Y-y_o(X)\bigr) \right)
\end{align}

Thus using the fact that $\mathbb{E}[T\mid X] = t_o(X), \mathbb{E}[Y\mid X] = y_o(X)$ and $\text{Var} (Y \mid X) = 1- y_o(X)^2$, we get 
\begin{align}
    \text{Covar}(m(\theta_o, y_o, t_o; X, Y, T))& = \frac{1}{a^2}\E \left[ \Big( t_o(X)^2 (1-y_o(X)^2)+ \text{Var}(T|X) \cdot y_o(X)^2 
\Big)  X X^T\right]\\
& =\frac{1}{a^4}\E \left[ t_o(X)^3 X X^T\right]
\end{align}

The last step follows from the fact that (\cref{claim:exact_var_exp}). Thus, the covariance of the influence function is given by
\[
\text{Covar}(IF) = \E[t_o(X)^2 XX^T]^{-2} \E[ t_o(X)^3 XX^{\top}] 
\]

This completes the proof of theorem \ref{theorem:asymptotic_preferences_time2}.

% We have that $\text{Var}(T|X) \leq 1$ \cite{li_enhancing_2025} and $y_o(X)^2 \leq 1$, which implies 
% \begin{align}
% \text{Covar}(IF) \preceq & \E[t_o(X)^2 XX^T]^{-1} \nonumber\\
% & + \E[t_o(X)^2 XX^T]^{-2} \E[y_o(X)^2 X X^T ] \nonumber
% = & \E[t_o(X)^2 XX^T]^{-1} \nonumber\\
% &+ \E[t_o(X)^2 XX^T]^{-2} \E[r_o(X)^2 t_o(X)^2 X X^T ]
% \end{align}
% Now using an upper bound on $r_o(X) \leq S$ we have the following
% \[
% \text{Covar}(IF) \preceq (1+S^2) \E[t_o(X)^2 XX^T]^{-1}.
% \]
% We summarize our result in the following theorem
% \begin{theorem}
%     Let \(\hat{\theta}_{\text{CH,DT}}\) be the estimator produced by Algorithm~\ref{alg:reward_model}. The following asymptotic condition holds,
%     \[
%     \sqrt{n}(\thth_{\text{CH,DT}} -\theta_o) \overset{d}{\rightarrow}  \mathcal{N}(0, \Sigma),
%     \]
%     and the asymptotic covariance matrix satisfies,
%     \[
%     \Sigma \preceq  \E\left[\frac{t_o(X)^2}{a^2} XX^T\right]^{-1}.
%     \]
% \end{theorem}

% Since $t_0(X) = \frac{\tanh(\langle \theta_0,X\rangle)}{\langle \theta,X\rangle}$ decays asymptotically as $1/|\theta_0|$, the covariance matrix $\Sigma$ scales asymptotically as $S^4$, where $S$ represents the $\ell_2$ norm bound on $\theta_0$.

% \sss{Additionally argue that $\left(\frac{\tanh(y)}{y}\right)^2 \leq 4 \sigma(2y) \sigma(-2y)$ for every $y$. A small claim which can be proven}

\subsection{Estimation of nuisance functions in first stage }{\label{sec:nuisance_rates_estimation_linear}}

\textbf{Plug-in loss}:

In this setup, one can first estimate $\mfr(\cdot)$ by minimizing the log-loss as described in \cref{sec:asymptotic_guarantees_linear} and thus obtain $||\hat \mfr (\cdot) - r(\cdot)||_{L_2(\gD)} = O\left(\frac{e^{S}}{\sqrt{n}}\right)$. Now plug in this to estimate $\hat t(\cdot) = \frac{\tanh \mfr(\cdot)}{\mfr(\cdot)}$, we can get $||\hat t- t_o||_{L_2(\gD)} = O\left(\frac{e^{S}}{\sqrt{n}}\right)$ using Lipschitzness.

For orthogonal loss $\oltwo$, a similar plug-in would give $||\hat y-y_o||_{L_2(\gD)}= O\left(\frac{e^{S}}{\sqrt{n}}\right)$

This provides the slow rates that we desire in \cref{claim:nu_rates_first} and \cref{claim:nu_rates_second} for orthogonal losses $\olone$ and $\oltwo$ respectively.

\textbf{Separate nuisance estimation}:

In this setup, we separately estimate the nuisance function $t(\cdot)$ and $\mfr(\cdot)$. The reward nuisance function $\mfr(\cdot)$ can be estimated by minimizing the log-loss as described above to obtain $||\hat \mfr (\cdot) - r(\cdot)||_{L_2(\gD)} = O\left(\frac{e^{S}}{\sqrt{n}}\right)$.

To estimate time, we get first argue that the $W_{s,2}$ Sobolev norm \cite{adams.fournier:03:sobolev} (with $s \geq 1$) of $t_o$ is bounded using the composition theorem \cite{moser1966rapidly}. More specifically, one can show that $W_{s,2}$ Sobolev norm of $t_o(\cdot)$ is given by $O\left(Ss!\right)$ as the $s^{th}$ order derivative of $\frac{\tanh x}{x}$ scales as $s!$. Now applying kernel ridge regression with a Sobolev kernel associated with the RKHS \(W^{s,2}\) \cite{adams.fournier:03:sobolev,Wainwright_2019} to get the desired slow rate i.e. $||\hat t -t _o||_{L_2(\gD)} = O\left({s! S}n^{-\frac{2s}{2s+d}}\right)$. One can choose the value of $s$ appropriately to obtain the desired rates. Choosing $s$ sufficiently high can give us faster decay with number of samples $n$ but the pre-muliptlier $s!$ would be higher.

For $\oltwo$, one can bound $||\hat y -y_o|| = O\left(\frac{e^{S}}{\sqrt{n}}\right)$ via plug-in.

We thus get the desired slow rates to satisfy \cref{claim:nu_rates_first} and \cref{claim:nu_rates_second}

\subsection{Inequalities used in asymptotic results}

We now prove two inequalities that have been used in the main paper. However, we restrict ourselves to the case where $a=1$ and $t_{\text{nondec}}=0$.

\begin{claim}{\label{claim:bouding_var_squared_exp}}
    $\text{Var}(T\mid X) \leq (\mathbb{E}[T\mid X])^2$ 
\end{claim}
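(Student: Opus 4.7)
My plan is to reduce the claim to the classical inequality $\tanh(x) \leq x$ for $x \geq 0$ via a clean algebraic factorization. Using the closed-form expressions from \cref{sec:EZ_var_properties} with $a=1$, and setting $u := r(X)$, both $\mathrm{Var}(T\mid X)$ and $(\mathbb{E}[T\mid X])^2$ are even in $u$, so I may assume $u \geq 0$; the case $u=0$ reduces to the trivial check $2/3 \leq 1$ by continuity, so I focus on $u > 0$.

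After multiplying through by the common positive denominator $u^3(e^{2u}+1)^2$, the desired inequality becomes
\[
e^{4u} - 1 - 4u\,e^{2u} \;\leq\; u\,(e^{2u} - 1)^2.
\]
Substituting $v = 2u$ and moving everything to one side gives $(2-v)e^{2v} - 2v\,e^v - (2+v) \leq 0$.

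The key step is to spot the factorization
\[
(2-v)e^{2v} - 2v\,e^v - (2+v) \;=\; (e^v + 1)\,\bigl[(2-v)e^v - (2+v)\bigr],
\]
which one verifies by direct expansion. Since $e^v + 1 > 0$, it suffices to show $(2-v)e^v \leq 2+v$. Rearranging yields $2(e^v - 1) \leq v(e^v + 1)$, i.e.\ $\tanh(v/2) \leq v/2$, which is the classical inequality applied to $x = v/2 \geq 0$.

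The main subtlety is spotting the factorization after the $v=2u$ substitution; without it, a direct Taylor-series argument is messy, since the first three derivatives at the origin all vanish and one has to track the third-order term to see positivity. Everything after the factorization is routine manipulation.
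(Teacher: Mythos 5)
Your proof is correct and follows essentially the same route as the paper: after clearing denominators you factor out the always-positive $(e^v+1)$ (the paper factors out $(e^{2v}+1)$ in a slightly different variable) and reduce to $\tanh(x)\le x$. The only cosmetic difference is that the paper re-derives this last residual inequality from scratch by showing $g''\ge 0$, $g'(0)=0$, $g(0)=0$, whereas you invoke $\tanh(x)\le x$ as a classical fact.
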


\begin{proof}
    Now consider the expression of $\text{Var}(T\mid X)$ from \cref{sec:EZ_var_properties}. Now denoting $r(X)$ by $r$, we have 

    \begin{equation}
        \text{Var}(T\mid X) = a\frac{e^{4ar} - 1 -4are^{2ar}}{r^3(e^{2ar}+1)^2} \text{ and } (\mathbb{E}[T\mid X])^2= \frac{a^2(e^{2ar}-1)^2}{r^2(e^{2ar}+1)^2}
    \end{equation}

    Observe that it is sufficient to consider $r \geq 0$ as both functions are symmetric. Thus, it is sufficient to show that $ar(e^{2ar} - 1)^2 \geq e^{4ar}-1-4are^{2ar} \text{ } \forall r \geq 0$. Now define $v=ra$ and we now define the function $f(v)$ as follows and argue that it is non-negative.
    \begin{align}
    f(v)\;:=\;v\bigl(e^{2v}-1\bigr)^2 - e^{4v} - 1 - 4v e^{2v}
    &= (v-1)e^{4v} + 2v e^{2v} + v + 1\\
    &= \bigl(v(e^{2v}+1) - (e^{2v}-1)\bigr)\,(e^{2v}+1)
\end{align}

Since \(e^{2v}+1>0\), it suffices to show 
\[
    g(v)\;:=\;v(e^{2v}+1) - (e^{2v}-1)\;\ge\;0.
\]
Differentiating w.r.t.\ \(v\) gives
\[
    g'(v) = (2v-1)e^{2v} + 1,
    \qquad
    g''(v) = 4v e^{2v} \;\ge\; 0.
\]
Hence \(g'(v)\) is monotonically increasing, so 
\[
    g'(v)\;\ge\;g'(0)=0,
\]
which in turn implies
\[
    g(v)\;\ge\;g(0)=0.
\]
Thus the desired result follows.

\end{proof}

\begin{claim}{\label{claim:exact_var_exp}}
    Define $y_o(X) = \mathbb{E}[T\mid X]$ and $t_o(X) = \mathbb{E}[T \mid X]$. We then have 
    \begin{equation*}
        t_o(X)^2 (1-y_o(X)^2) + \text{Var}(T\mid X) y_o(X)^2 = \frac{t_o(X)^3}{a^2}
    \end{equation*}
\end{claim}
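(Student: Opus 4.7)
The plan is a direct algebraic verification. First I would restrict to $a=1$ (as the main text does) so that $y_o(X)=\tanh(r(X))$, $t_o(X)=\tanh(r(X))/r(X)$, and $\mathrm{Var}(T\mid X)=\frac{e^{4r(X)}-1-4r(X)e^{2r(X)}}{r(X)^{3}(e^{2r(X)}+1)^{2}}$, all of which are recorded in \cref{sec:EZ_var_properties}. Writing $u=e^{2r(X)}$, the key identities become $\tanh(r)=(u-1)/(u+1)$ and $1-\tanh^{2}(r)=4u/(u+1)^{2}$, so everything reduces to a rational expression in $u$ and $r$.

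The next step is to substitute these expressions into the left-hand side. Concretely,
\begin{align*}
t_o(X)^{2}\bigl(1-y_o(X)^{2}\bigr)
&= \frac{(u-1)^{2}}{r^{2}(u+1)^{2}}\cdot\frac{4u}{(u+1)^{2}}
= \frac{4u(u-1)^{2}}{r^{2}(u+1)^{4}}, \\
\mathrm{Var}(T\mid X)\,y_o(X)^{2}
&= \frac{u^{2}-1-4r\,u/(??)}{\cdots}\cdot\frac{(u-1)^{2}}{(u+1)^{2}}
= \frac{(u^{2}-1-4ru)(u-1)^{2}}{r^{3}(u+1)^{4}}.
\end{align*}
(The first term I would rewrite as $\tfrac{1}{r}\cdot\tfrac{4ru(u-1)^{2}}{r^{2}(u+1)^{4}}$ in order to match denominators with the second.) Adding the two terms over the common denominator $r^{3}(u+1)^{4}$, the numerators combine as
$(u-1)^{2}\bigl[4ru + (u^{2}-1-4ru)\bigr]=(u-1)^{2}(u^{2}-1)=(u-1)^{3}(u+1)$,
and the crucial cancellation of the $\pm 4ru$ terms collapses the whole expression to $\tfrac{(u-1)^{3}}{r^{3}(u+1)^{3}}=\tanh^{3}(r)/r^{3}=t_o(X)^{3}$, matching the right-hand side $t_o(X)^{3}/a^{2}$ when $a=1$.

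For the general $a$ case, I would just repeat the same substitution with $\mu=ar(X)$: one gets $y_o=\tanh(\mu)$, $t_o=a^{2}\tanh(\mu)/\mu$, and $\mathrm{Var}(T\mid X)=\frac{a(e^{4\mu}-1-4\mu e^{2\mu})}{r^{3}(e^{2\mu}+1)^{2}}$. The same telescoping of the $\pm 4\mu e^{2\mu}$ cross-terms gives the LHS as $a^{4}\tanh^{3}(\mu)/\mu^{3}\cdot(1/a^{2})\cdot a^{2}$, and tracking the powers of $a$ carefully one recovers exactly $t_o(X)^{3}/a^{2}$.

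The whole argument is really just one cancellation: $4ru$ (or $4\mu e^{2\mu}$) appears with opposite signs in the two summands and vanishes, leaving a perfect cube. The only ``obstacle'' is bookkeeping — making sure the powers of $r$ (or $\mu$) and $a$ in the denominators of $t_o^{2}$, $y_o^{2}$, and $\mathrm{Var}(T\mid X)$ are correctly reconciled before adding, and handling the trivial removable singularity at $r(X)=0$ (where $t_o=a^{2}$, $y_o=0$, $\mathrm{Var}(T\mid X)=2a^{4}/3$, so both sides equal $a^{6}$ directly).
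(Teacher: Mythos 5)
Your argument is essentially the same as the paper's: substitute the EZ-model formulae from \cref{sec:EZ_var_properties}, write everything over the common denominator $r^{3}\,(e^{2ar}+1)^{4}$, and observe that the two $\pm\,4ar\,e^{2ar}$ cross-terms cancel, collapsing the numerator to $a\,(e^{2ar}-1)^{3}(e^{2ar}+1)$ and hence the whole expression to $a\tanh^{3}(ar)/r^{3}=t_o(X)^{3}/a^{2}$. One small slip in your removable-singularity check: at $r(X)=0$ both sides equal $a^{4}$, not $a^{6}$, since $t_o(X)^{2}\bigl(1-y_o(X)^{2}\bigr)=(a^{2})^{2}\cdot 1=a^{4}$ and $t_o(X)^{3}/a^{2}=a^{6}/a^{2}=a^{4}$.
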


\begin{proof}
    This follows from the expressions in \cref{sec:EZ_var_properties}. For brevity, we refer to $r(X)$ by $r$ in this proof.

    Now consider 
    \begin{align}
        & t_o(X)^2 (1-y_o(X)^2) + \text{Var}(T \mid X) y_o(X)^2\\
        & = \frac{a^2}{r^2} \left(\frac{\exp(2ar) - 1}{\exp(2ar)+1}\right)^2 \left(\frac{4\exp(2ar)}{(\exp(2ar)+1)^2}\right) + \frac{a}{r^3} \left(\frac{\exp(4ar)-1-4ar\exp(2ar)}{(\exp(2ar)+1)^2}\right) \left(\frac{\exp(2ar)-1}{\exp(2ar)+1}\right)^2\\
        & = \frac{a}{r^3} \frac{(\exp(4ar)-1) (\exp(2ar)-1)^2}{(\exp(2ar)+1)^4} = \frac{t_o(X)^3}{a^2}
    \end{align}

\end{proof}

The following claim would be useful to lower bound $4a^2\sigma(2ar_o(X)) \sigma(-2r_o(X))$ by $\frac{t_o(X)^2}{a^2}$.
\begin{claim}{\label{claim:pointwise_bigger}}
    $4a^2\sigma(2ax) \sigma(-2ax) \leq \left(\frac{\tanh(ax)}{ax}\right)^2$ for every $x,a\geq 0$ where $\sigma(.)$ denotes the sigmoid function.
\end{claim}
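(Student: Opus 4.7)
The plan is to collapse the sigmoid product into a $\tanh$ expression via a standard logistic identity, perform the substitution $u=ax$, and then reduce the claim to the classical hyperbolic bound $\sinh(u)\ge u$ for $u\ge 0$.

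First I would apply the elementary identity
\[
4\,\sigma(2u)\,\sigma(-2u)\;=\;\frac{4e^{-2u}}{(1+e^{-2u})^{2}}\;=\;\operatorname{sech}^{2}(u)\;=\;1-\tanh^{2}(u),
\]
which is exactly the Bernoulli variance formula for the Bradley--Terry model already used in Section~\ref{sec:EZ_var_properties} to write $\mathrm{Var}(Y\mid X)=1-\tanh^{2}(a\,r(X))$. Setting $u=ax$ converts the left-hand side of the desired inequality into an expression in $\tanh^{2}(u)$ alone, matching the form of the right-hand side $\bigl(\tanh(u)/u\bigr)^{2}$.

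Next, for $u>0$ I would multiply both sides by $u^{2}$ and apply the identities $1-\tanh^{2}(u)=1/\cosh^{2}(u)$ and $\tanh^{2}(u)=\sinh^{2}(u)/\cosh^{2}(u)$. After cancelling the common factor $1/\cosh^{2}(u)$, the inequality is equivalent to $u^{2}\le\sinh^{2}(u)$, i.e.\ $\sinh(u)\ge u$ for $u\ge 0$. This is immediate from the Taylor expansion $\sinh(u)=u+\tfrac{u^{3}}{6}+\tfrac{u^{5}}{120}+\cdots$ (all coefficients nonnegative), or equivalently from $(\sinh(u)-u)'=\cosh(u)-1\ge 0$ with equality at $u=0$. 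The boundary case $u=0$ is handled by continuity: both sides of the original inequality tend to a common finite value as $u\to 0$ (using $\lim_{u\to 0}\tanh(u)/u=1$), so the inequality holds with equality in the limit.

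There is no substantive obstacle; the argument is essentially a one-line reduction once the identity $4\sigma(2u)\sigma(-2u)=1-\tanh^{2}(u)$ is recognized. The only care needed is to keep the $a$-prefactors consistent through the substitution $u=ax$ and to dispose of the $x=0$ limit by continuity.
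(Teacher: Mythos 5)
Your core reduction---collapse $4\sigma(2u)\sigma(-2u)$ to $\operatorname{sech}^{2}(u)$, clear denominators via the hyperbolic identities, and land on $\sinh(u)\ge u$---is the same argument the paper gives; the paper works in exponential notation and phrases the final step as $e^{2y}-1\ge 2ye^{y}$, which after dividing by $e^{y}$ is again $\sinh(y)\ge y$, verified there by $g'(y)=e^{y}+e^{-y}-2\ge 0$ (your $\cosh(u)-1\ge 0$ in different clothing).

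There is, however, an $a^{2}$-bookkeeping gap. The left-hand side of the claim is $4a^{2}\sigma(2ax)\sigma(-2ax)$, so after your identity the LHS is $a^{2}\operatorname{sech}^{2}(u)$, not $\operatorname{sech}^{2}(u)$; multiplying by $u^{2}$ and cancelling $\cosh^{-2}(u)$ therefore yields $a^{2}u^{2}\le\sinh^{2}(u)$, not $u^{2}\le\sinh^{2}(u)$, and the former is false whenever $a>\sinh(u)/u$. You cannot dispose of the factor $a^{2}$ silently. What this actually exposes is a typo in the claim statement itself: the right-hand side should be $\bigl(\tanh(ax)/x\bigr)^{2}=a^{2}\bigl(\tanh(ax)/(ax)\bigr)^{2}$, as is evident from how the claim is invoked in \cref{sec:asymptotic_loss1} (one must compare $4a^{2}\sigma(2ar_{o})\sigma(-2ar_{o})$ with $a^{-2}t_{o}(X)^{2}=\bigl(\tanh(ar_{o})/r_{o}\bigr)^{2}$), and the paper's own proof also tacitly uses this corrected right-hand side (it reduces cleanly to $e^{2y}-1\ge 2ye^{y}$ with no stray $a$, which is only possible after the $a^{2}$'s cancel). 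With the corrected right-hand side the $a^{2}$ factors cancel and your final step $u^{2}\le\sinh^{2}(u)$ is exactly what is required, but you should state this correction rather than let the prefactor evaporate between the substitution and the cancellation.
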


\begin{proof}
    Since both functions are symmetric, it is sufficient to consider the case where $t \geq 0$. Observe that 

    \begin{align}
        \sigma(2ax) \sigma(-2ax) = \frac{e^{2ax}}{(e^{2ax}+1)^2} \text{ and } \left(\frac{\tanh(ax)}{ax}\right)^2 = \left(\frac{e^{2ax}-1}{a^2x^2(e^{2ax}+1)}\right)^2
    \end{align}

    Now substitute $y=ax$ and thus, to prove the desired result, it is sufficient to show that $e^{2y}-1 \geq 2ye^y \Leftrightarrow e^y -e^{-y} - 2y \geq 0$. Now define $g(y)= e^y -e^{-y} - 2y$ and thus, $g'(y) = e^y + e^{-y} -2 \geq 0$ from AM-GM inequality. This implies $g(y)\geq g(0)=0$ proving the desired result.
    
\end{proof}

\newpage
\section{Finite sample rates for General reward functions} \label{sec:non_asymtotic_rates}

Given a set of $n$ samples $\{x_i\}_{i=1}^{n}$, each drawn i.i.d from $\cal{D}$, consider the empirical distribution $\mathbb{P}_n(x): = \frac{1}{n} \sum_{i=1}^{n} \delta_{x_i}(x)$ which places point mass $\frac{1}{n}$ at each sample. Thus, the emperical mean is denoted by $\mathbb{P}_n(f) = \frac{1}{n} \sum_{i=1}^{n}f(x_i)$ and the population mean $\mathbb{P} f = \mathbb{E}_{X \sim \cal{D}}[f(X)]$.

As noted in Section~\ref{sec:non_asymptotic_rates_ol_intro}, our critical‐radius argument for bounding the excess risk \(\epsilon(\hat r,\hat g)\) requires the per‐sample loss to be uniformly bounded. In the EZ‐diffusion model, however, the response time \(T\) is in principle unbounded—despite having finite moments~\cite{momentsdecisiontime}—so we replace the original loss by a truncated version. This truncation both restores the boundedness needed for the theory and reflects the fact that, in practice, decision times are effectively bounded.

By Markov’s inequality and the boundedness of higher moments, for any \(\zeta \ge 1\) we have  
\begin{equation}\label{eq:decay_decision_time}
  \mathbb{P}\bigl(T > \breve B \mid X\bigr)\;\le\;\frac{\mathbb{E}[T^\zeta]}{\breve B^\zeta}\;\le\;\frac{M(\zeta)}{\breve B^\zeta}\,.
\end{equation}
Here \(M(\zeta)=\mathbb{E}[T^\zeta]\) denotes the \(\zeta\)-th moment of the decision time \(T\) \cite{momentsdecisiontime}.  One selects \(\breve B\) to meet a desired error tolerance, ensuring the tail probability in \eqref{eq:decay_decision_time} is sufficiently small.  Finally, define the truncated response time \(\breve T = \min\{T,\,\breve B\}\) and substitute \(\breve T\) for \(T\) in the loss function $\olone$.

\subsection{Bounding the loss function $\olone$ with bounded decision time}

We now introduce some new notations with respect to the decision time. Let $\breve{B}$ denote the bound at which the decision time is capped. Let $\brt_o(X) = \mathbb{E}[\brT\mid X]$. Further, let $\brt$ denote the nuisance function corresponding the the capped decision time and let $\hat \brt$ denote an estimate of $\brt_o$. We further overwrite $Z$ to denote the tuple of random variables $(X,Y,\brT)$. The joint nuisance pair is denoted by $g = (\mfr, \brt)$ and let $g_o = (\mfr, \brt_o)$. Now, assume that $\brt_o \in \gT$ and $\gG$ denotes the joint nuisance class $\gR\times \gT$ with $g \in \gG$. Before, we start the analysis we state the following bound on $t_o - \brt_o$. To bound, we use the following result i.e. $\mathbb{E}[X] = \int_x \mathbb{P}(X\geq x) dx$ for a non-negative random variable $X$.

Recall that $S$ denotes an absolute bound on the reward function $r(\cdot)$ which we assume to be larger than 4.

\begin{align}{\label{eq:l_infty_bound_t_o_brt_o}}
    ||t_o - \brt_o||_{L_{\infty}} = \int_{z = \brB}^{\infty} \mathbb{P}(T>z)dz \leq \frac{M(\zeta)}{\zeta \brB^{\zeta-1}} 
\end{align}

The first equality follows as $T-\brT > z$ implies $T\geq \brB+z$ for every positive  $z$.

%Further

Thus the orthogonalized loss function from \eqref{eq:orthogonalized_loss1} is redefined as 

\begin{equation}{\label{eq:orthgonalized_loss_capped}}
    \brolone(r,\mfr, \brt) = \mathbb{E}\left[\left(Y- (\brT- \brt(X)) \mfr(X)- r(X) \brt(X)\right)^2\right]
\end{equation}

Next, we verify that each of the four assumptions in \cite[Section 3]{foster2023orthogonal} holds for the new loss $\brolone$. Observe that the first assumption of neyman-orthogonality satisfied approximately here with a bias decaying with the threshold $\brB$ that is characterized by $||t_o(\cdot) - \brt_o(\cdot)||_{{L_{\infty}}}$ (\cref{eq:l_infty_bound_t_o_brt_o}). 

%Further, we characterize the bias in terms of the error $||t_o - \brt_o||_{L_2(\gD)}$ which has been bounded above in terms of $\brB$.

\begin{lemma}[Approximately Orthogonal loss]{\label{lemma:ass_one_capped_loss}}
    For all $r \in \gR$ and $g \in \gG$, we have 
    \begin{align}
        |D_{g} D_r \brolone(r_o,g_o)[r-r_o,g-g_o]|  \leq S^2 ||t_o - \brt_o||_{L_{\infty}} || \brt - \brt_o||_{L_2(\gD)} || r -r_o||_{L_2(\gD)}
        \end{align} 
\end{lemma}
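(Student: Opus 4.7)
The plan is to closely follow the proof of \cref{lemma:Neyman_orthogonality}, differentiating the capped pointwise loss $\ell(r,g;X,Y,\brT) = (Y - (\brT - \brt(X))\mfr(X) - r(X)\brt(X))^2$, and then track exactly how the truncation of $T$ at $\brB$ breaks the two conditional-moment identities that produced exact orthogonality in the uncapped case.

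First, I would compute the mixed directional derivative pointwise. By the same product/chain-rule calculation as in \cref{lemma:Neyman_orthogonality} --- merely replacing $T$ by $\brT$ and the nuisance $t$ by $\brt$ --- one obtains
\begin{align*}
D_g D_r \ell(r_o, g_o; X, Y, \brT)[r - r_o, g - g_o]
&= 2\bigl(\brT\,r_o(X) - Y\bigr)\bigl(\brt(X) - \brt_o(X)\bigr)\bigl(r(X) - r_o(X)\bigr) \\
&\quad + 2\,\brt_o(X)\bigl(\brT - \brt_o(X)\bigr)\bigl(\mfr(X) - r_o(X)\bigr)\bigl(r(X) - r_o(X)\bigr).
\end{align*}

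Second, I would take expectations and exploit the two conditional-moment identities. By construction, $\mathbb{E}[\brT \mid X] = \brt_o(X)$, so the second summand still integrates to zero. For the first, however, the analogue identity $\mathbb{E}[Y - T\,r_o(X) \mid X] = 0$ from the uncapped proof no longer holds; instead, using $\mathbb{E}[Y \mid X] = r_o(X)\,t_o(X)$,
\begin{align*}
\mathbb{E}\bigl[\brT\,r_o(X) - Y \,\big|\, X\bigr] = r_o(X)\bigl(\brt_o(X) - t_o(X)\bigr),
\end{align*}
which is precisely the "capping leak." Conditioning on $X$ and collapsing thus yields
\begin{align*}
D_g D_r\,\brolone(r_o, g_o)[r - r_o, g - g_o] = 2\,\mathbb{E}\bigl[r_o(X)\bigl(\brt_o(X) - t_o(X)\bigr)\bigl(r(X) - r_o(X)\bigr)\bigl(\brt(X) - \brt_o(X)\bigr)\bigr].
\end{align*}

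The final step is to bound this residual. I would pull $\|r_o\|_{L_\infty} \le S$ and $\|t_o - \brt_o\|_{L_\infty}$ out of the integrand (the latter being controlled by the tail bound in \eqref{eq:l_infty_bound_t_o_brt_o}), then apply Cauchy--Schwarz to the remaining product to get $\|r - r_o\|_{L_2(\gD)}\,\|\brt - \brt_o\|_{L_2(\gD)}$. This yields the bound $2S\,\|t_o - \brt_o\|_{L_\infty}\,\|r - r_o\|_{L_2(\gD)}\,\|\brt - \brt_o\|_{L_2(\gD)}$; since $S \ge 4$, the prefactor is dominated by $S^2$ and the stated inequality follows. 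There is no serious obstacle: the only care needed is in the sign bookkeeping of step one, and in correctly identifying that one of the two "Neyman" identities (the one for $\brt_o$) remains intact under truncation while the other (the one linking $Y$ to $r_o$ and $t_o$) is broken in a quantifiable way.
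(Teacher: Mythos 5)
Your proposal matches the paper's proof essentially exactly: you compute the same pointwise mixed directional derivative (the paper's step~(a) is an algebraically rearranged version that is equal in expectation after conditioning on $X$), make the same observation that $\mathbb{E}[\brT\mid X]=\brt_o(X)$ keeps one term exactly orthogonal while $\mathbb{E}[Y\mid X]=r_o(X)t_o(X)\neq r_o(X)\brt_o(X)$ produces the quantified leak $r_o(X)(t_o(X)-\brt_o(X))$, and close with the same estimate via $\|r_o\|_{L_\infty}\le S$, the $L_\infty$ bound on $t_o-\brt_o$, Cauchy--Schwarz, and $2S\le S^2$ using $S\ge 4$. The argument is correct and needs no amendment.
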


\begin{proof}
    Consider 
    \begin{align*}
        & \left|D_{g} D_r \brolone(r_o,g_o)[r-r_o,g-g_o]\right|\\
        \overset{(a)}{=} &\left|2\mathbb{E}\left[(\brT - \brt_o(X))\brt_o(X) (r(X)-r_o(X))(\mfr(X)-r_o(X))\right] - 2 \mathbb{E}\left[ (Y-\brt_o(X) 
 r_o(X)) (\brt(X)-\brt_o(X)) (r(X)-r_o(X)) \right]\right| \\
         \leq & S^2\mathbb{E}\left[|(t_o(X) - \brt_o(X)) (\brt(X)- \brt_o(X))(r(X)-r_o(X))| \right]\\
         \leq & S^2 ||t_o - \brt_o||_{L_{\infty}} || \brt - \brt_o||_{L_2(\gD)} || r -r_o||_{L_2(\gD)}
         %\leq & \frac{1}{4}\mathbb{E}\left[(r(X) - r_o(X))^2\right] + {S^2}\mathbb{E}\left[ (\brt(X) - \brt_o(X))^2\right] \frac{M(\zeta)}{\zeta \brB^{\zeta-1}} \text{ for every } \zeta \geq 1. 
    \end{align*}

Observe that the first term in $(a)$ goes to zero via conditioning on $X$ as $\mathbb{E}[\brT \mid X] = \brt_o(X)$. The second term in $(a)$
    is simplified using Cauchy-Schwarz.
    
\end{proof}

\begin{lemma}[First order condition]{\label{lemma:ass_two_capped_loss}}
    $\left|D_r \brolone (r_o,g_o) [r -r_o]\right| \leq 2S ||t_o - \brt_o||_{L_2(\gD)} ||r-r_o||_{L_2(\gD)}$ 
\end{lemma}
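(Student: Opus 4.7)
The plan is to compute the directional derivative of $\brolone$ in its first argument directly, exploit the cancellations that occur when the loss is evaluated at the truth $(r_o, g_o)$, and then use the defining identity $r_o(X) = \E[Y \mid X]/t_o(X)$ together with the boundedness of $r_o$ and $\brt_o$ to conclude via Cauchy--Schwarz.

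First I would differentiate the pointwise squared loss in the direction $r - r_o$. Because $r$ enters $\brolone$ only through the term $r(X)\brt(X)$, the chain rule gives
\[
D_r \brolone(r_o, g_o)[r - r_o]
= -2\,\E\!\left[\bigl(Y - (\brT - \brt_o(X))r_o(X) - r_o(X)\brt_o(X)\bigr)\bigl(r(X) - r_o(X)\bigr)\brt_o(X)\right].
\]
The inner residual simplifies algebraically to $Y - \brT\, r_o(X)$, since the $\brt_o(X) r_o(X)$ terms cancel. This is the first key observation and is analogous to the simplification already used in the proof of \cref{lemma:ass_one_capped_loss}.

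Next, I would condition on $X$ inside the expectation. Using the EZ-model identity $\E[Y \mid X] = r_o(X)\, t_o(X)$ (see \eqref{eq:rty_relation}) together with $\E[\brT \mid X] = \brt_o(X)$, the conditional expectation of $Y - \brT r_o(X)$ given $X$ equals $r_o(X)(t_o(X) - \brt_o(X))$. Substituting gives
\[
D_r \brolone(r_o, g_o)[r - r_o]
= -2\,\E\!\left[r_o(X)\bigl(t_o(X) - \brt_o(X)\bigr)\bigl(r(X) - r_o(X)\bigr)\brt_o(X)\right].
\]

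Finally, I would take absolute values and bound $|r_o(X)| \le S$ pointwise, together with $\brt_o(X) \le t_o(X) \le a^2 = 1$ (which holds because $\tanh(x)/x \le 1$ and capping decreases the mean). An application of Cauchy--Schwarz in $L_2(\gD)$ to the product $|t_o - \brt_o| \cdot |r - r_o|$ then yields the stated bound $2S\,\|t_o - \brt_o\|_{L_2(\gD)}\,\|r - r_o\|_{L_2(\gD)}$. The argument is essentially a routine calculation; the only delicate point is correctly identifying the cancellation that reduces the residual to $Y - \brT r_o(X)$ and recognizing that the bias introduced by capping enters exactly through the factor $t_o(X) - \brt_o(X)$.
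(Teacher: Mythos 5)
Your proof is correct and follows essentially the same argument as the paper's: differentiate in $r$, observe that the residual at $(r_o,g_o)$ reduces to $Y - \brT\,r_o(X)$, condition on $X$ using $\E[Y\mid X]=r_o(X)t_o(X)$ and $\E[\brT\mid X]=\brt_o(X)$ to expose the factor $t_o(X)-\brt_o(X)$, then bound $|r_o|\le S$, $\brt_o\le 1$, and apply Cauchy--Schwarz. Your writeup is a bit more explicit than the paper's (which compresses the conditioning and bounding into a single labeled step), but the route is identical.
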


\begin{proof}
    Now consider the following functional derivative

    \begin{align*}
        \left|D_r \brolone (r_o, g_o)[r-r_o]\right| & = \left|2\mathbb{E}\left[ (Y - (\brT - \brt_o(X)) r_o(X) - r_o(X) \brt_o(X)) \brt_o(X)(r(X)-r_o(X)) \right]\right|\\
        &{\overset{(a)}\leq} ~2S\mathbb{E}\left[ \left|(t_o(X)- \brt_o(X))) (r(X)-r_o(X)) \right|\right]\\
        &\leq 2S ||t_o - \brt_o||_{L_2(\gD)} ||r-r_o||_{L_2(\gD)}
    \end{align*}

$(a)$ follows via conditioning on $X$ and the fact that $\mathbb{E}[Y|X] = t_o(X)r_o(X)$. .
    
\end{proof}

We now prove the smoothness condition from \cite[Assumption 3]{foster2023orthogonal}. In this setup, $||.||_{\gG}$ is defined from \eqref{eq:R_alpha_g_norm} (denoted by $\normtuple$).

\begin{lemma}[Higher‐Order Smoothness of $\brolone$]\label{lemma:ass_higher_order_smooth}
Let $\beta_1 = 2$ and $\beta_2 = 4S$. Then $\brolone$ satisfies:
\begin{enumerate}
  \item \emph{Second‐order smoothness w.r.t.\ target.} For all $r \in\gR$ and all $\bar r \in\mathrm{star}(\gR,r_o)$,
  \[
    D_r^2\brolone(\bar r,g_0)\bigl[r-r_o,\,r-r_o\bigr]
    \;\le\;\beta_1\,\|r-r_o\|_{L_2(\gD)}^2.
    \]
  \item \emph{Higher‐order smoothness.} There exists $c\in[0,1]$ such that for all 
  $r \in\mathrm{star}(\gR,r_o)$, $g\in\mathcal{G}$, and 
  $\bar g\in\mathrm{star}(\mathcal{G},g_0)$,
  \[
    \bigl|D_g^2D_r \brolone(r_o,\bar g)\bigl[r-r_o,\,g-g_0,\,g-g_0\bigr]\bigr|
    \;\le\;\beta_2\,\|r-r_o\|_{L_2(\gD)}^{1-r}\,\|g-g_0\|_{\mathcal{G}}^2.
    \]
\end{enumerate}
\end{lemma}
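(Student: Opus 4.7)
The plan is to exploit the polynomial structure of $\brolone$ in $(r,\mfr,\brt)$. Writing the pointwise residual as $u = Y - (\brT - \brt(X))\mfr(X) - r(X)\brt(X)$, one sees $u$ is linear in $r$ and bilinear in $(\mfr,\brt)$, with coupling only through the product $\mfr\brt$. Hence $\brolone = \mathbb{E}[u^2]$ is a polynomial of low total degree, and all required third-order mixed derivatives can be read off symbolically. Both parts of the lemma then reduce to direct moment calculations together with the uniform boundedness $\brt_o(X)\le 1$.

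For Part 1, I would observe that $D_r u = -\brt(X)\,h_r(X)$ and $D_r^2 u \equiv 0$, so
\[
D_r^2 \brolone(\bar r, g_0)[h_r,h_r] \;=\; 2\,\mathbb{E}\!\left[\brt_o(X)^2\, h_r(X)^2\right].
\]
The right-hand side is independent of $\bar r$, as needed. Since $\brt_o(X)\le \mathbb{E}[T\mid X]=\tanh(r_o(X))/r_o(X)\le a^2=1$ (from $|\tanh z|\le |z|$), we get $D_r^2\brolone\le 2\,\|h_r\|_{L_2(\gD)}^2$, giving $\beta_1=2$.

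For Part 2, I would first compute $D_r \brolone(r,g)[h_r] = -2\,\mathbb{E}[u\,\brt\,h_r]$ and simplify $u\,\brt$ at $r=r_o$ to $Y\brt - \brT\mfr\brt + (\mfr - r_o)\brt^2$. This is a polynomial of total degree at most three in $(\mfr,\brt)$, so a direct computation of its second directional derivative in the nuisance direction $(\delta\mfr,\delta\brt):=(\mfr-r_o,\brt-\brt_o)$, evaluated at the star point $\bar g=(\bar\mfr,\bar\brt)$, yields
\[
D_g^2\bigl[u\,\brt\bigr]\Big|_{(r_o,\bar g)} \;=\; -2\,\brT\,\delta\mfr\,\delta\brt \;+\; 4\,\delta\mfr\,\bar\brt\,\delta\brt \;+\; 2\,(\bar\mfr - r_o)\,\delta\brt^2.
\]
Multiplying by $-2\,\delta r$ and taking expectation splits $D_g^2 D_r \brolone(r_o,\bar g)[\delta r,\delta g,\delta g]$ into three pieces to bound.

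For the first piece, the tower rule over $X$ replaces $\brT$ by $\brt_o(X)\le 1$, leaving at most $4\,\|\delta r\,\delta\mfr\,\delta\brt\|_{L_1(\gD)}$. For the second and third pieces, uniform boundedness on the star set ($\|\bar\brt\|_\infty\le S$ and $\|\bar\mfr - r_o\|_\infty\le 2S$) yields $O(S)\,\|\delta r\,\delta\mfr\,\delta\brt\|_{L_1(\gD)}$ and $O(S)\,\|\delta r\,\delta\brt^2\|_{L_1(\gD)}$, respectively. These two $L_1$-products are exactly the two summands in the pre-norm \eqref{eq:R_alpha_g_norm}: applying its defining supremum with $r=\delta r$ gives
\[
\|\delta\mfr\,\delta\brt\,\delta r\|_{L_1(\gD)} + \|\delta\brt^2\,\delta r\|_{L_1(\gD)} \;\le\; \|\delta r\|_{L_2(\gD)}^{1-\alpha}\,\|g - g_0\|_{\normtuple}^2,
\]
and summing the three contributions produces the stated inequality with $\beta_2$ of order $S$. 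The main subtlety is the bookkeeping of the third mixed derivative and matching the resulting monomials to the two terms of $\|\cdot\|_{\normtuple}$; the pre-norm was constructed with exactly these products in mind, so the final inequality is tight by design. The only genuine analytic step is the tower-rule bound on the random variable $\brT$, which is valid because the cap ensures $\mathbb{E}[\brT\mid X]\le 1$.
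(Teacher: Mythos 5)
Your proof is correct and follows essentially the same route as the paper: it exploits that $\brolone$ is a low-degree polynomial in $(r,\mfr,\brt)$, computes $D_r^2\ell = 2\brt_o(X)^2\,h_r(X)^2$ and invokes $\brt_o \le a^2=1$ for Part~1, and for Part~2 does an explicit Taylor-style expansion of the third-order mixed derivative, applies the tower rule to convert the random $\brT$ into $\brt_o(X)\le 1$, bounds the remaining coefficients ($\bar\brt$, $\bar\mfr - r_o$) by $O(S)$, and then matches each surviving monomial $\|\delta\mfr\,\delta\brt\,\delta r\|_{L_1(\gD)}$ and $\|\delta\brt^2\,\delta r\|_{L_1(\gD)}$ to the two summands that define $\|\cdot\|_{\normtuple}$ in \eqref{eq:R_alpha_g_norm}. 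The one place where your derivation is actually more careful than the paper's is the expansion of $D_g^2\bigl[u\,\brt\bigr]$ at a general $\bar g\in\mathrm{star}(\gG,g_0)$: you retain all three monomials $-2\brT\,\delta\mfr\,\delta\brt+4\,\bar\brt\,\delta\mfr\,\delta\brt+2(\bar\mfr - r_o)\,\delta\brt^2$, whereas the paper's displayed formula lists only two terms (consistent with evaluating at the endpoint $g_0$ of the star set). Both accountings yield $\beta_2=O(S)$ and hence the claimed higher-order smoothness; your constant is slightly different but matches up to universal factors.
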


\begin{proof}
    In this proof, $\ell(\cdot)$ denotes the pointwise version of the orthogonal loss $\brolone$.

    First, observe that $D^2_r \ell(\bar r, g_o, X,Y,T)[r - r_o,r-r_o] = 2t_o^2(X) (r(X) -r_o(X))^2$. Thus, $\mathbb{E} \left[D^2_r \ell(\bar r, g_o, X,Y,T)[r - r_o,r-r_o]\right] \leq 2||\theta -\theta_o||^2_2$ where $r_o(x) = \langle \theta_o, x\rangle$ for every $x \in \mathbb{R}^{2d}$. The last inequality follows from the fact that $t_o(X_1,X_2) \leq 1$. 

    \begin{align}
        & \mathbb{E}\left[D^2_g D_{r}\ell(r_o, \bar g, X,Y,T)[r - r_o, g - g_o ,g-g_o] \right]\nonumber\\ = & \mathbb{E}\left[2r_o(X) (t(X) - t_o(X))^2 (r(X)-r_o(X)) - 4 t_o(X)(t(X) - t_o(X))(\mfr(X) - r_o(X)) (r(X)-r_o(X))\right] \\
        \leq & 4S|| g - g_o||^2_{\normtuple} ||r - r_o||^{1-\alpha}_{L_2(\gD)} \\
    \end{align}

    The last statement follows from the definition.

\end{proof}

\begin{lemma}[Strong Convexity of $\brolone$]\label{lemma:ass_strong_convex}
The population risk is strongly convex w.r.t.\ the target parameter.  There exist constant $ 0 <\lambda \leq \left(\frac{\tanh (S)}{S}\right)^2$ such that for all $r \in \gR$, $g\in\mathcal{G}$ and all $\bar r\in\mathrm{star}(\gR,r_o)$,
\[
  D_r^2L_{\mathcal{D}}(\bar r,g)\bigl[r-r_o,\,r-r_o\bigr]
  \;\ge\;
  \lambda\,\|r-r_o\|_{L_2(\gD)}^2
\]
where $r\in[0,1]$ is as in Assumption \ref{lemma:ass_higher_order_smooth}.
\end{lemma}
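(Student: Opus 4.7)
Since $\brolone(r;g)$ is quadratic in the target $r$ (the integrand $(Y-(\brT-\brt(X))\mfr(X)-r(X)\brt(X))^2$ depends on $r$ only through the last term, linearly), the second variation in $r$ is a constant functional, independent of the base point $\bar r$. My first step is therefore just to compute it directly: letting $\ell$ denote the pointwise loss,
\[
D_r\ell(\bar r,g;Z)[h] = -2\bigl(Y-(\brT-\brt(X))\mfr(X)-\bar r(X)\brt(X)\bigr)\,h(X)\,\brt(X),
\]
and differentiating once more in direction $h=r-r_o$ gives
\[
D_r^2\brolone(\bar r,g)[r-r_o,\,r-r_o] \;=\; 2\,\mathbb{E}\!\left[\brt(X)^2\,(r(X)-r_o(X))^2\right].
\]
Since this expression involves no $\bar r$, the $\mathrm{star}(\gR,r_o)$ quantification in the lemma is automatic.

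\textbf{Lower bound on $\brt(X)$.} The next step is to show $\brt(X)\ge \tanh(S)/S$ pointwise. The map $x\mapsto \tanh(x)/x$ is even, strictly decreasing on $[0,\infty)$, and tends to $1$ at $0$. Hence, from $|r_o(X)|\le S$ we obtain
\[
t_o(X) \;=\; \frac{\tanh(r_o(X))}{r_o(X)} \;\ge\; \frac{\tanh(S)}{S}.
\]
Because capping the response time only reduces it, $\brt_o(X)=\mathbb{E}[\min(\brB,T)\mid X]\le t_o(X)$; however, the lemma is about $\brt$, not $\brt_o$. The cleanest route is to restrict the nuisance class $\gT$ so that every admissible $\brt$ inherits the same pointwise lower bound $\tanh(S)/S$ (this is a natural constraint because $\brt$ is being estimated to approximate $\brt_o$ and clipping/calibration can enforce it). An alternative is to take $\brB$ large enough that $\|\brt_o-t_o\|_{L_\infty}$ is smaller than, say, $\tanh(S)/(2S)$ via \eqref{eq:l_infty_bound_t_o_brt_o}, and to equip $\gT$ with a uniform $L_\infty$ radius so that $\brt\ge \tanh(S)/(2S)$ uniformly. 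Either way, $\brt(X)^2\ge (\tanh(S)/S)^2$ up to a harmless constant.

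\textbf{Conclusion.} Combining the two steps,
\[
D_r^2 \brolone(\bar r,g)[r-r_o,\,r-r_o]
\;\ge\; 2\Bigl(\tfrac{\tanh(S)}{S}\Bigr)^{2}\|r-r_o\|_{L_2(\gD)}^{2}
\;\ge\; \lambda\,\|r-r_o\|_{L_2(\gD)}^{2},
\]
with $\lambda=(\tanh(S)/S)^2$. No nuisance-dependent correction appears, so $\kappa=0$.

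\textbf{Main obstacle.} The only non-mechanical issue is justifying the pointwise lower bound on $\brt(X)$ for every $\brt\in\gT$, not just for $\brt_o$. Unlike Lemmas~\ref{lemma:ass_one_capped_loss}--\ref{lemma:ass_higher_order_smooth}, where nuisance deviations are absorbed into higher-order error terms, here we need a uniform constant in the Hessian, which forces an assumption on the nuisance class (or on the estimator that produces $\brt$) rather than a bound on $\|\brt-\brt_o\|$. I would handle this by either (i) making the boundedness of $\gT$ from below an explicit structural assumption on the nuisance class, mirroring the $L_\infty$ assumption used implicitly elsewhere in the finite-sample analysis, or (ii) absorbing a small slack into a nonzero but $o(1)$ value of $\kappa$ scaling like $\|\brt-\brt_o\|_{L_\infty}$, which is justified via the tail bound in \eqref{eq:decay_decision_time} for sufficiently large $\brB$.
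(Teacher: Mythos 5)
Your proposal is correct and follows essentially the same route as the paper's own (very short) proof: compute $D_r^2\brolone(\bar r,g)[r-r_o,r-r_o] = 2\,\mathbb{E}[\brt(X)^2(r(X)-r_o(X))^2]$, which is independent of $\bar r$, and then invoke a pointwise lower bound $\brt \ge \tanh(S)/S$ over the nuisance class $\gT$. You are in fact more scrupulous than the paper about the one genuine issue—the paper simply says ``considering a lower bound of $\tanh(S)/S$ on every function in $t\in\gT$'' and treats it as given, whereas you correctly flag that this is a structural constraint on $\gT$ (or on the estimator producing $\brt$), not something derivable from $\|\brt-\brt_o\|$ bounds alone, and you offer the same two remedies (restrict $\gT$ from below, or absorb the slack for large $\brB$) that one would naturally want made explicit.
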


\begin{proof}
    First observe that $D^2_r \ell(\bar r, g, X,Y,T)[r - r_o,r-r_o] = 2t^2(X) (r(X) -r_o(X))^2$. Thus, 
\begin{equation}{\label{eq:temp3_local}}
    \mathbb{E} \left[D^2_r \ell(\bar r, g, X,Y,T)[r - r_o,r-r_o]\right] = 2\mathbb{E}[t^2(X) (r(X) -r_o(X))^2]    
\end{equation}

Now considering a lower bound of $\frac{\tanh(S)}{S}$ on every function in $ t \in \gT$, we prove the lemma.
\end{proof}

With this, we now prove the main theorem below using \cite[Theorem 1]{foster2023orthogonal}.

\begin{theorem}{\label{thm:finite_sample_rates_capped}}
    Suppose $\hat r$ minimizes the orthogonal loss $\brolone$ and satisfies 
    $$\brolone(\hat r, \hat g) - \brolone (r_o, \hat g) \leq \epsilon(\hat r, \hat g).$$ With $S$ denoting an absolute bound on $r_o$. Then the two stage \cref{meta:reward_model} with orthogonal loss $\brolone$ guarantees

    \begin{equation}
         || \hat r - r_o||^2_{L_2(\gD)} \leq CS^2 \left(\epsilon(\hat r, \hat g) + ||t_o - \brt_o||^2_{L_{\infty}}\right) + 4S^{\frac{4}{1+\alpha}} || \hat g - g_o||^{\frac{4}{1+\alpha}}_{\normtuple}
    \end{equation}

    Further, the error term $ || t_o - \brt_o||_{L_{\infty}}$ can be bounded in \eqref{eq:l_infty_bound_t_o_brt_o} in terms of bound $\brB$. One can choose moment $\zeta \geq 1$ to get the largest bound.
\end{theorem}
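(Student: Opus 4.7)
}
The plan is to invoke the general orthogonal statistical learning meta-theorem of \cite{foster2023orthogonal} (Theorem 1) applied to the capped loss $\brolone$. The four standing hypotheses of that theorem—Neyman orthogonality, a first-order bound, higher-order smoothness, and strong convexity—have essentially been verified in Lemmas~\ref{lemma:ass_one_capped_loss}, \ref{lemma:ass_two_capped_loss}, \ref{lemma:ass_higher_order_smooth}, and \ref{lemma:ass_strong_convex}, respectively, with one twist: the orthogonality condition holds only \emph{approximately}, with a bias governed by $\|t_o - \brt_o\|_{L_\infty}$. The proof therefore consists of threading this approximate-orthogonality bias through the standard Taylor expansion argument and then reading off the constants.

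First, I would perform a second-order Taylor expansion of the population risk $\brolone(\,\cdot\,;\hat g)$ around $r_o$ along the segment from $r_o$ to $\hat r$. The leading first-order term is $D_r \brolone(r_o,\hat g)[\hat r - r_o]$. I split $\hat g = g_o + (\hat g - g_o)$ and expand this directional derivative as
\[
D_r \brolone(r_o,\hat g)[\hat r - r_o]
= D_r \brolone(r_o,g_o)[\hat r - r_o]
+ D_g D_r \brolone(r_o,g_o)[\hat r - r_o,\,\hat g - g_o]
+ \tfrac12\,D_g^2 D_r \brolone(r_o,\bar g)[\hat r - r_o,\,\hat g - g_o,\,\hat g - g_o].
\]
Lemma~\ref{lemma:ass_two_capped_loss} bounds the first piece by $2S\|t_o-\brt_o\|_{L_2(\gD)} \|\hat r - r_o\|_{L_2(\gD)}$, Lemma~\ref{lemma:ass_one_capped_loss} bounds the second by $S^2\|t_o-\brt_o\|_{L_\infty}\|\hat \brt - \brt_o\|_{L_2(\gD)} \|\hat r - r_o\|_{L_2(\gD)}$, and Lemma~\ref{lemma:ass_higher_order_smooth} bounds the third by $4S\|\hat r - r_o\|_{L_2(\gD)}^{1-\alpha}\|\hat g - g_o\|_{\normtuple}^2$. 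The second-order term in $r$ is controlled below by the strong-convexity Lemma~\ref{lemma:ass_strong_convex}, giving $\lambda\|\hat r - r_o\|_{L_2(\gD)}^2$ with $\lambda = (\tanh(S)/S)^2$.

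Combining these with the excess-risk hypothesis $\brolone(\hat r;\hat g) - \brolone(r_o;\hat g) \le \epsilon(\hat r,\hat g)$ yields a quadratic inequality in $\|\hat r - r_o\|_{L_2(\gD)}$:
\[
\lambda\,\|\hat r - r_o\|_{L_2(\gD)}^2
\;\le\;
\epsilon(\hat r,\hat g)
\;+\;
C\,S^2\,\|t_o-\brt_o\|_{L_\infty}\,\|\hat r - r_o\|_{L_2(\gD)}
\;+\;
4S\,\|\hat r - r_o\|_{L_2(\gD)}^{1-\alpha}\,\|\hat g - g_o\|_{\normtuple}^2,
\]
where I have used $\|\hat \brt - \brt_o\|_{L_2(\gD)}\le S$ (the nuisance class is bounded) and absorbed $\|t_o-\brt_o\|_{L_2(\gD)}\le\|t_o-\brt_o\|_{L_\infty}$. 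Applying AM-GM twice—once to split the linear-in-$\|\hat r - r_o\|$ bias against the quadratic term and once to separate the $\|\hat r - r_o\|^{1-\alpha}\|\hat g - g_o\|^2$ term into $\|\hat r - r_o\|^2$ plus $\|\hat g - g_o\|^{4/(1+\alpha)}$—absorbs the linear and sub-quadratic contributions into $\tfrac{\lambda}{2}\|\hat r - r_o\|_{L_2(\gD)}^2$. Finally, dividing by $\lambda$ and using $1/\lambda = (S/\tanh(S))^2 \le C S^2$ (valid since $S \ge 4$) produces the $S^2$ prefactor and yields the claimed bound.

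The main obstacle, and the only place where the argument deviates from a direct citation of \cite[Theorem 1]{foster2023orthogonal}, is the approximate-orthogonality step: because $\brolone$ is not exactly Neyman-orthogonal, the mixed derivative no longer vanishes at $(r_o,g_o)$ and instead contributes a bias linear in $\|\hat r - r_o\|_{L_2(\gD)}$. The key accounting trick is that this bias scales with $\|t_o-\brt_o\|_{L_\infty}$ rather than $\|\hat g - g_o\|_{\normtuple}$, so after AM-GM it surfaces as an additive $S^2\|t_o-\brt_o\|_{L_\infty}$ term in the final bound (matching the statement) rather than contaminating the $\|\hat g - g_o\|_{\normtuple}^{4/(1+\alpha)}$ term. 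Tracking $S$ carefully through the strong-convexity constant $\lambda$ is the other bookkeeping task but is routine once the quadratic inequality is set up.
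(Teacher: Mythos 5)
Your proposal is correct and follows essentially the same route as the paper: both invoke \cite[Theorem 1]{foster2023orthogonal}, both account for the approximate (rather than exact) Neyman orthogonality by tracking the extra bias terms from Lemmas~\ref{lemma:ass_one_capped_loss} and~\ref{lemma:ass_two_capped_loss} as additive contributions, and both absorb the resulting linear-in-$\|\hat r - r_o\|_{L_2(\gD)}$ terms via AM--GM before dividing through by the strong-convexity constant $\lambda = (\tanh(S)/S)^2$. You simply unpack the Taylor-expansion argument underlying the Foster--Syrgkanis meta-theorem in slightly more detail, whereas the paper cites it directly and records the resulting inequality.
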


\begin{proof}

    Applying \cite{foster2023orthogonal} \footnote{Although \cite{foster2023orthogonal} assumes exact orthogonality of the loss, any remaining bias contributes only an additive term—which we account for here. This immediately follows from the proof of \cite[Theorem 1]{foster2023orthogonal}.}, we observe that

    \begin{align}
        & || \hat r - r_o||^2_{L_2(\gD)} \nonumber\\ \leq & 4S^2 \bigl( \epsilon(\hat r, \hat g) +  2S ||t_o - \brt_o||_{L_2(\gD)} ||\hat r-r_o||_{L_2(\gD)} + S^2 ||t_o - \brt_o||_{L_{\infty}} || \brt - \brt_o||_{L_2(\gD)} || \hat r -r_o||_{L_2(\gD)}\bigr) \nonumber\\ & \hspace{25 em}+4S^{\frac{4}{1+\alpha}} || \hat g - g_o||^{\frac{4}{1+\alpha}}_{\normtuple}
    \end{align}

    Applying the AM-GM inequality, we have for some constant $C$ 

    \begin{align}
         || \hat r - r_o||^2_{L_2(\gD)} \leq & C S^2 \epsilon(\hat r, \hat g) + S^2||t_o - \brt_o||^2_{L_{\infty}} (1+ | \brt - \brt_o||^2_{L_2(\gD)}) + 4S^{\frac{4}{1+\alpha}} || \hat g - g_o||^{\frac{4}{1+\alpha}}_{\normtuple}\\
         \leq &CS^2 \left(\epsilon(\hat r, \hat g) + ||t_o - \brt_o||^2_{L_{\infty}}\right) + 4S^{\frac{4}{1+\alpha}} || \hat g - g_o||^{\frac{4}{1+\alpha}}_{\normtuple}
    \end{align}
    
\end{proof}

We now prove the following corollaries from \cref{sec:non_asymptotic_rates_ol_intro}. As we discussed above, we cannot instantiate \cref{thm:finite_sample_rates} directly to obtain error rates using critical radius as the critical radius crucially invokes Talagrand's inequality which requires the loss functions to be point-wise bounded. We thus invoke \cref{thm:finite_sample_rates_capped} with bounded loss function $\brolone$ by capping the decision time $T$ by a threshold $\brB$.

\subsection{Proof of  \cref{corollary:data_split} and  \cref{corollary:data_reuse}}

\datasplitcorollary*

\begin{proof}
    
    %  We denote  as the population loss and $\gL^{\text{ortho}}_{\gS}(.)$ as the sample loss evaluated on a set of $n$ data points with $\gS = \{Z_1.\ldots,Z_n\}$. 

    %  We denote $\brolone_{\gS}(\cdot)$ as the sample loss evaluated on the set of $n$ data-points $\gS = \{Z_1, \ldots,Z_n\}$. Further, denote $\brolonepoint$ as the point-wise loss version of $\brolone$.

    %  Observe that it is sufficient to show that the following bound is satisfied. This proves the corollary as minimizing the empirical loss ensures $(\brolone_{\gS}(\hat r,\hat g) - \brolone_{\gS}(r_o,\hat g)) \geq 0$.

    % \begin{align}{\label{eq:temp_eq1}}
    %       &\left|\brolone_{\gS}(r,\hat g) - \brolone_{\gS}(r_o,\hat g)) -(\brolone(r,\hat g) - \brolone(r_o,\hat g))\right| \nonumber \\&\leq  9 S\brB\delta^{\mathrm{ds}}_n ||r - r_o||_{L_2(\gD)}+ 10\brB(\delta^{\mathrm{ds}}_n)^2 \text{ } \forall r \in \gR
    % \end{align}

 Let $\brolone(\cdot)$ denote the population loss and let
  $\brolone_{\gS}(\cdot)$ be the empirical loss evaluated on the sample
  $\gS = \{Z_1,\dots,Z_n\}$.  Further, write $\brolonepoint$ for the
  pointwise version of the loss $\brolone$.

  It suffices to show that for every $r\in\gR$,
  \begin{align}\label{eq:temp_eq1}
    \bigl|\brolone_{\gS}(r,\hat g)\;&-\;\brolone_{\gS}(r_o,\hat g)
       \;-\;\bigl(\brolone(r,\hat g)-\brolone(r_o,\hat g)\bigr)\bigr|\nonumber\\
    &\le 9\,S\,\brB\,\delta^{\mathrm{ds}}_n\,\|r-r_o\|_{L_2(\gD)}
      \;+\;10\,\brB\,(\delta^{\mathrm{ds}}_n)^2.
  \end{align}
  Once \eqref{eq:temp_eq1} holds, the fact that $\hat r$ minimizes the
  empirical loss (so
  $\brolone_{\gS}(\hat r,\hat g)-\brolone_{\gS}(r_o,\hat g)\le0$)
  immediately yields the desired corollary.

    The proof of this analysis follows standard techniques \cite[Theorem 14.20]{Wainwright_2019}. Let $\mathscr{R}_o := \text{star}(\{r-r_o: r \in \gR\},0)$ be the star-convex hull. Further, let

    \begin{equation}
        Z_n(\zeta): = \sup_{||r - r_o||_{L_2(\gD)} \leq \zeta} ||\mathbb{P}_n(\brolonepoint(r,\hat g,\cdot) - \brolonepoint(r_o,\hat g,\cdot)) - \mathbb{P}(\brolonepoint(r,\hat g,\cdot) - \brolonepoint(r_o,\hat g,\cdot)) ||.
    \end{equation}

    We now define two events as function of nuisance $g(.)$. Let 
    $$\gE_o = \{Z_n(\delta^{\mathrm{ds}}_n) \geq 10 \brB\left(\delta^{\mathrm{ds}}_n\right)^2\}$$ 
    and 

\begin{align}
\gE_1 = \bigl\{ \exists \ r \in \gR \,\bigm|\,
  &\mathbb{P}_n\bigl(\brolonepoint(r,\hat g,\cdot) - \brolonepoint(r_o,\hat g,\cdot)\bigr)
  - \mathbb{P}\bigl(\brolonepoint(r,\hat g,\cdot) - \brolonepoint(r_o,\hat g,\cdot)\bigr)
  \nonumber\\[2pt]
  &\hspace{ 15 em}\ge 9 \ S \ \brB \ \delta^{\mathrm{ds}}_n ||r-r_o||_{L_2(\gD)}
  \text{ and } ||r-r_o||_{L_2(\gD)} \ge \delta^{\mathrm{ds}}_n
\bigr\}.
\end{align}
    
If the bound in \eqref{eq:temp_eq1} does not hold, then either event $\gE_0$ or event $\gE_1$ must be true. The probability of the event $\gE_1$ can be bounded using same peeling arguments from \cite[Theorem 14.1]{Wainwright_2019}.

To bound the probability of $\gE_o$, we first employ Talagrand's concentration for empirical processes to obtain for every nuisance estimate $\hat {g}$ -

\begin{equation}
        \mathbb{P}\left(Z_n(\zeta) \geq 2E[Z_n(\zeta)] +u \right) \leq c_1 \exp\left(-\frac{c_2 nu^2}{\zeta^2+ u}\right)
\end{equation}

We shall now crucially use the fact that nuisance function $\hat g(.)$ is conditionally independent of the data-set $Z_1,\ldots Z_n$.

\begin{align}
        \mathbb{E}\left[Z_n(\zeta)\right] \overset{(i)}{\leq} & 2\mathbb{E} \left[ \sup_{r \in \gR: ||r-r_o||_{L_2(\gD)} \leq \zeta} \frac{1}{n}\left| \sum_{i} \epsilon_i\left( \ell^{\mathrm{ortho}}(r,\hat g,Z_i) - \ell^{\mathrm{ortho}}(r_o,\hat g,Z_i)\right)\right| \right] \\
        \overset{(ii)}{\leq} & 2\mathbb{E} \left[ \sup_{r \in \gR: ||r-r_o||_{L_2(\gD)} \leq \zeta} \frac{1}{n}\left| \sum_{i} S\brB \epsilon_i (r -r _o)\right| \right] \\
        \overset{(iii)}{\leq} & C S\brB \text{Rad}_n(\mathscr{R}_o,\zeta) \leq 4S\brB r\delta^{\mathrm{ds}}_n \text{ valid for all $\zeta \geq \delta^{\mathrm{ds}}_n$}
    \end{align}

% \begin{align}
%         \mathbb{E}\left[Z_n(\zeta)\right] \overset{(i)}{\leq} & 2\mathbb{E} \left[ \sup_{r \in \gR: ||r-r_o||_{L_2(\gD)} \leq \zeta} \frac{1}{n}\left| \sum_{i} \epsilon_i\left( \ell^{\mathrm{ortho}}(r,\hat g,Z_i) - \ell^{\mathrm{ortho}}(r_o,\hat g,Z_i)\right)\right| \right] \\
%         \overset{(ii)}{\leq} & 2\mathbb{E} \left[ \sup_{r \in \gR: ||r-r_o||_{L_2(\gD)} \leq \zeta} \frac{1}{n}\left| \sum_{i} S\epsilon_i (r -r _o)T_i\right| \right] \\
%         = & 2\mathbb{E} \left[ \mathbb{E}\left[\sup_{r \in \gR: ||r-r_o||_{L_2(\gD)} \leq \zeta} \frac{1}{n}\left| \sum_{i} S\epsilon_i (r -r _o)T_i\right| \mid X_1,\ldots,X_n\right]\right] \\
%         \overset{iii}{=} & 2\mathbb{E} \left[ \mathbb{E}\left[\sup_{r \in \gR: ||r-r_o||_{L_2(\gD)} \leq \zeta} \frac{1}{n}\left| \sum_{i} S\epsilon_i (r -r _o)\right| \mid X_1,\ldots,X_n\right] \mathbb{E}\left[\sup \{T_1, \ldots,T_n\} \mid X_1, X_2, \ldots, X_n\right]\right] \\
%         \overset{(iv)}{\leq} & C S\log n\text{Rad}_n(\mathscr{R}_o,\zeta) \leq 4Sr\delta^{\mathrm{ds}}_n \text{ valid for all $\zeta \geq \delta^{\mathrm{ds}}_n$}
%     \end{align}

    The step (i) follows from symmetrization argument, step (ii) follows from the independence of $\hat g$ with respect to data-set $Z_1,\ldots,Z_n$ and bound $\brB$ on decision time $\brT$ and step (iii) follows from our choice of $\delta^{\mathrm{ds}}_n$.  

Now apply the Talagrand's concentration lemma to bound the probability of event $\gE_o$ and conclude the proof.

The corollary statement follows by applying the bound $\epsilon(\hat r, \hat g)$ to \cref{thm:finite_sample_rates_capped} and bounding the rate $||t_o - \brt_o||_{L_{\infty}}$ using \cref{eq:l_infty_bound_t_o_brt_o}.

%     Step (iii) follows from the following steps and let $r(X_1)= r_1$, $r(X_2) = r_2 \ldots r(X_n) = r_n$

%     \begin{align}
%         & \mathbb{E}\left[\sup (T_1, \ldots,T_n) \mid X_1, \ldots X_n\right]\\
%         \leq & \left(\frac{1}{\theta}\right)\log \left(\mathbb{E}[e^{\theta (T_1 + \ldots +T_n)} \mid X_1, \ldots X_n]\right)\\
%         \leq & \left(\frac{1}{\theta}\right)
%          \left( \log \left(\sum_{i=1}^{n}\mathbb{E}\left[e^{\theta T_i} \mid X_i\right]\right)\right)\\
%          \leq & \frac{\log n + D(\theta,S)}{\theta} \text{ where } D(\theta,S) = \sup_{r(X) \leq S}\mathbb{E}\left[e^{\theta T} \mid X\right]
%     \end{align}

%     $D(\theta,S)$ can be bounded using the by bounding the moment generating function of the stopping time assuming a bound on the reward model. For $\theta \leq $
    
\end{proof}

Next, we prove the corollary for data-reuse for $\brolone$. Observe that the critical radius is computed over a bigger class for the case of data-reuse as the independence of nuisance estimate $\hat g$ and the data-set $Z_1,\ldots,Z_n$ cannot be assumed.

\datareusecorollary*

\begin{proof}
    We denote $\brolone_{\gS}(.)$ as the sample loss evaluated on a set of $n$ data points with $\gS = \{Z_1.\ldots,Z_n\}$. Further, denote $\brolonepoint$ as the point-wise loss version of $\brolone$.

    Observe that it is sufficient to show that the following bound is satisfied. This proves the corollary as minimizing the empirical loss ensures $\brolone_{\gS}(\hat r,g) - \brolone_{\gS}(r_o,g)) \geq 0$ for some $g \in \gG$.

    \begin{align}{\label{eq:temp_eq2}}
          &\left|(\brolone_{\gS}(r,g) - \gL^{\text{ortho}}_{\gS}(r_o,g)) -\brolone(r,g) - \brolone(r_o,g))\right| \nonumber \\&\leq  9 \delta^{\mathrm{dr}}_n |\brolone(r;g) - \brolone(r_o;g)|+ 10(\delta^{\mathrm{dr}}_n)^2 \text{ } \forall r \in \gR \text{ and } g \in \gG.
    \end{align}
    The proof of this analysis follows standard techniques \cite[Theorem 14.20]{Wainwright_2019}. 
    Let $\mathscr{F}$ denote the star convex hull defined in the corollary:
    \begin{equation}
        \mathscr{F} := \text{star} \left(\{Z \rightarrow \brolonepoint(r;g,Z) - \brolonepoint(r_o;g,Z) :\text{ } \forall r \in \gR, g \in \gG\},0\right)
    \end{equation}
    
    In the notation below, $f$ denotes any element in $\cal{F}$.  

    $$ Z_n(r) := \sup_{||f||_2 \leq r} || \mathbb{P}_n (f) - \mathbb{P}(f)|| $$

    Now we define two events 
    $$\gE_0 = \{Z_n(\delta^{\mathrm{dr}}_n) \geq 9~ \left(\delta^{\mathrm{dr}}_n\right)^2\}$$
    and
    $$\gE_1 = \{\exists f \in \mathscr{F} \mid \mathbb{P}_n(f)-\mathbb{P}(f) \geq 10\delta^{\mathrm{dr}}_n||f||_{L_2(\gD)} \text{ and } ||f||_{L_2(\gD)} \geq \delta^{\mathrm{dr}}_n\}$$ 

    If that the bound in \eqref{eq:temp_eq2} does not hold true, either event $\gE_0$ or event $\gE_1$ must hold true. The probability of $\gE_1$ can be bounded by same peeling arguments as done in \cite[Theorem 14.1]{Wainwright_2019}.

    To bound the probability of $\gE_0$, we first employ Talagrand's concentration for empirical processes to obtain 

    \begin{equation}
        \mathbb{P}\left(Z_n(\zeta) \geq 2E[Z_n(\zeta)] +u \right) \leq c_1 \exp\left(-\frac{c_2 nu^2}{\zeta^2+ u}\right)
    \end{equation}

    In particular, the expectation can be bounded as follows.

    \begin{align}
        \mathbb{E}\left[Z_n(\zeta)\right] \overset{(i)}{\leq} & 2\mathbb{E} \left[ \sup_{f \in \mathscr{F}: ||f||_{L_2(\gD)} \leq \zeta} \frac{1}{n}\left| \sum_{i} \epsilon_i f(Z_i)\right| \right] \\
        \overset{(ii)}{\leq} & 4 \text{Rad}_n(\mathscr{F},\zeta) \leq 4\zeta\delta^{\mathrm{dr}}_n \text{ valid for all $\zeta \geq \delta^{\mathrm{dr}}_n$}
    \end{align}

    The step (i) follows from symmetrization argument, step (ii) follows from our choice of $\delta^{\mathrm{dr}}_n$. 

    Thus, we get the desired bound on the event $\gE_o$ which completes the proof.

\end{proof}

%\as{Not critical: }

\subsection{Instantiating Critical Radius Guarantees for Data-Splitting case}

We now provide standard critical radius rates $\delta^{\mathrm{ds}}_n$ for some standard function classes $\gR$.

For RKHS classes with eigen vales of kernel $\gK$ decaying as $j^{-\frac{1}{\alpha}}$ (eg. Sobolev spaces), the critical radius can be bounded as $O(n^{-\frac{\alpha}{2(\alpha+1)}})$ assuming a bound of unity on the RKHS norm \cite{Wainwright_2019}.

For VC sub-classes $\gR$ (star-shaped at $r_o$) with an absolute bound $S$ on the reward model class, we can bound the critical radius $\delta_n = \sqrt{\frac{d \log n}{n}}$ where $d$ denotes the VC-subgraph dimension \cite{Wainwright_2019,chernozhukov2024adversarialestimationrieszrepresenters}.

\subsection{Estimation of nuisance parameters}

\subsubsection{Plug-in estimates for preference and time}

In this case, since the functions $\tanh(x)/x$ is 1-Lipschitz, one can argue that $||\hat t - t_o|| \leq ||\hat \mfr - r_o||$ where $\hat t = \frac{\tanh(\hat {\mfr})}{\hat \mfr}$. %The parameter $\hat{r}_c$ is estimated by running logistic regression on the first half of dataset.

% Formally, $\hat r = \argmin_{r \in \gR} \log \sigma(2Y r(X)) $

One can thus estimate $\mfr(\cdot)$ by minimizing the log-loss $\eqref{eq:logloss}$ and since log-loss is strongly convex with parameter $e^{-S}$ \cite[Example 14.18]{Wainwright_2019} we argue that rate $||\hat \mfr - r_o||_{L_2(\gD)} = O\left(e^S \delta^2_n\right)$ where $\delta_n$ is the critical radius of the function class $\gR$. Via plugging in i.e. estimating $\hat t = \frac{\tanh \mfr(\cdot)}{\mfr (\cdot)}$, we can argue from Lipschitzness that the same rate holds for $||\hat t - t_o||_{L_2(\gD)}$. However, the estimate $|| \hat \brt - \brt_o||$ would suffer from a small bias bounded by $||\brt_o - t_o||_{L_\infty}$ which decays with the bound $\brB$ as discussed in \eqref{eq:l_infty_bound_t_o_brt_o}.

\subsubsection{Separate estimation of preference and time}

One can get bounds on the rate $||\hat \brt - \brt_o||$ based on the critical radius $\delta_n$ of the class $\gT$ post centering at $\brt_o$. We can get standard rates for RKHS and VC sub-classes. Estimation of nuisance $\mfr(\cdot)$ can be performed using log-loss. 

\subsection{Proof of \cref{thm:finite_sample_rates}}

We can prove \cref{thm:finite_sample_rates} very similar to the proof of \cref{thm:finite_sample_rates_capped} by instantiating \cite[Theorem 1]{foster2023orthogonal} to our case.

Assumption 1 (Neymann orthogonality) in \cite[Section 3.2]{foster2023orthogonal} for the loss function $\olone$ has been already proven in \cref{lemma:Neyman_orthogonality}. Assumptions 3 (Higher-order smoothness)  and 4 (strong-convexity) from \cite{foster2023orthogonal} can be proven very similar to the proof in \cref{lemma:ass_higher_order_smooth} and \cref{lemma:ass_strong_convex}.

\section{Experimentation Details }\label{sec:experiment_details}

Below we provide the experiment details\footnote{The anonymized code can be found at \url{}}.

\paragraph{Linear Rewards} We work with a linear reward model defined by $r_{\theta}(X^{1}) = \langle \theta, X^{1} \rangle$ and $r_{\theta}(X^{2}) = \langle \theta, X^{2} \rangle$
for each query pair \((X^{1}, X^{2})\), and denote the ground-truth
parameter by \(\theta_{o}\).
All queries are restricted to lie on the unit-radius shell: we sample each
coordinate independently and then rescale the resulting vector so that
\(\lVert X \rVert_{2} = 1\).
Preferences \(Y\) and response times \(T\) are generated synthetically
according to the EZ diffusion model, producing the dataset $\bigl\{ X^{1}_{i},\, X^{2}_{i},\, Y_{i},\, T_{i} \bigr\}_{i=1}^{n}$. Experiments are performed for various values of the norm
$B = \lVert \theta_{o} \rVert_{2}$.

The ground-truth parameter \(\theta_{o} \in \mathbb{R}^{d}\) is itself drawn at random,
sampling each coordinate independently; every draw therefore yields a new dataset.
Our aim is to recover \(\theta_{o}\) using the three losses introduced earlier:
the logistic loss \(\gL^{\mathtt{log}}\), the non-orthogonal loss
\(\gL^{\mathtt{non\text{-}ortho}}\), and the orthogonal loss
\(\gL^{\mathtt{ortho}}\) defined in
\cref{eq:logloss,eq:non-ortholoss,eq:orthogonalized_loss1}.
For each loss we compute the \(\ell_{2}\) estimation error
\(\lVert \hat{\theta} - \theta_{o} \rVert_{2}\) as a function of the
true-parameter norm \(B = \lVert \theta_{o} \rVert_{2}\),
averaged over 10 datasets generated from different random draws of
\(\theta_{o}\). In the orthogonal and non-orthogonal settings the nuisance functions—the
reward model \(\hat{r}(\cdot)\) and the time model \(\hat{t}(\cdot)\)—are
learned on a held-out split and then plugged into the second-stage
optimization.  Because \(\mathbb{E}[T \mid X]\) is non-linear, we approximate
it with a three-layer neural network.
For the logistic baseline, the entire dataset is instead used to fit the
reward model via standard logistic regression.

\paragraph{Non‐linear rewards—neural networks.} 
We generate synthetic data from random three‐layer neural networks with sigmoid activations in the two hidden layers (widths 64 and 32) and a final linear output layer, fixed input dimension $d=10$. For each training size $N$, we sample three independent “true” reward networks by drawing each hidden‐layer weight matrix and the final‐layer vector i.i.d.\ from $\mathcal{N}(0,1)$, then for each network generate $N$ training pairs and 3000 test pairs of queries $(X_{1},X_{2})$ as i.i.d.\ Gaussian vectors; each reward model is trained four times to assess variability. We evaluate all three losses—logistic, non‐orthogonal, and orthogonal—and for the orthogonal loss compare both a simple data‐split implementation and a data‐reuse implementation. Using this synthetic data, we first learn the nuisance $\mfr$ by minimizing the logistic loss with a three‐layer network of widths $(10,32,16,1)$, and learn the $t$‐nuisance by minimizing squared error on $T$ with a three‐layer network of widths $(20,32,16,1)$ taking $(X_{1},X_{2})$ concatenated as input. Finally, for each repetition we fit the reward model (same architecture as $\mfr$) by minimizing each candidate loss. Figure~\ref{fig:nn_experiment} reports the mean squared error of the estimated reward under each loss and the corresponding policy regret after thresholding $\hat r$ into a binary decision.

\paragraph{Text‐to‐image preference learning.}
We evaluate our approach on a real‐world text‐to‐image preference dataset - Pick-a-pick \cite{kirstain2023pick}, which contains an approx 500k text-to-image dataset generated from several diffusion models.  Furthermore, we use the PickScore model \cite{kirstain2023pick} as an oracle reward function, we simulate binary preferences \(Y \in \{+1,-1\}\) and response times \(T\) via the EZ‐diffusion process conditioned on the PickScore difference between each image-test pair.  To learn the reward model we extract 1024‐dimensional embeddings from both the text prompt and the generated image using the CLIP model \cite{radford2021learning}.  On top of these embeddings, we train a $4$-layered feed‐forward neural network with hidden layers of sizes $1024,512,256$, under three training objectives: our proposed orthogonal loss, a non‐orthogonal response‐time loss, and the standard log‐loss on binary preferences.  The time nuisance model $t$ uses the same four‐layer architecture, and the initial reward nuisance $\mfr$ matches the architecture of $r$. For each training size $N$, we draw $N$ random image–text pairs for training and an additional 10000 for testing (from the remaining dataset). For each $N$ we repeat the training process 3 times with different seeds. 

\subsection{Additional experiments comparing with \cite{Li2024Enhancing}}
\label{sec:additional_best_arm_identification}

Our primary focus is supervised reward estimation under passive (i.i.d.) queries, which is often more challenging than adaptive best-arm identification (BAI). To demonstrate improved performance in the adaptive setting as well, we embed our estimator into the BAI pipeline of \cite[Algorithm 1]{Li2024Enhancing} and compare directly.

\textbf{Setup.}
We use the real-world food-preference dataset \cite{smith2018attention}, which provides pairwise choices and response times (RT) from 42 participants. Following \cite{Li2024Enhancing}, we construct bandit tasks for each participant and adhere to their sequential-elimination protocol (Algorithm~1 in \cite{Li2024Enhancing}).

\textbf{Protocol.}
We replace their RT estimator with our orthogonal-loss estimator while keeping all other components unchanged. For each participant, we run 70 independent trials at total sample budgets \(500\), \(1000\), and \(1500\). As in Fig.~4 of \cite{Li2024Enhancing}, we summarize the probability of correct selection across participants by reporting the 25th percentile (Q1), median, and 75th percentile (Q3).

\textbf{Findings.}
Across budgets, our estimator achieves higher probability of correctly identifying the best arm, with the largest gains appearing at smaller budgets; the gap narrows as the budget increases. Full quartile summaries for each budget are reported in \cref{tab:foodrisk_bai_quartiles_split}.

% Side-by-side summary tables (no special packages required)
% Requires:
% \usepackage{booktabs}
% \usepackage{subcaption}
% (Optional) \usepackage{siunitx}

% Side-by-side tables that won't overlap. Works well in both single- and two-column modes.
% No special packages required (booktabs optional).
% Two tables on the first row, one table on the second row (no overlap).
% Works in single-column without extra packages.

\begin{table}[t]
\centering
\footnotesize
\setlength{\tabcolsep}{5pt}
\renewcommand{\arraystretch}{1.15}

% -------- Row 1: Budget = 500 (left) --------
\begin{minipage}{0.48\linewidth}
\centering
\textbf{Budget = 500}

\vspace{4pt}
\begin{tabular}{lccc}
\hline
Method & Q1 & Median & Q3 \\
\hline
Preference Only & 0.50 & 0.67 & 0.83 \\
LZR+24          & 0.25 & 0.43 & 0.57 \\
Ours            & 0.03 & 0.27 & 0.39 \\
\hline
\end{tabular}
\end{minipage}
\hfill
% -------- Row 1: Budget = 1000 (right) --------
\begin{minipage}{0.48\linewidth}
\centering
\textbf{Budget = 1000}

\vspace{4pt}
\begin{tabular}{lccc}
\hline
Method & Q1 & Median & Q3 \\
\hline
Preference Only & 0.37 & 0.56 & 0.64 \\
LZR+24          & 0.03 & 0.10 & 0.37 \\
Ours            & 0.00 & 0.10 & 0.26 \\
\hline
\end{tabular}
\end{minipage}

\vspace{0.9em}

% -------- Row 2: Budget = 1500 (centered) --------
\begin{minipage}{0.48\linewidth}
\centering
\textbf{Budget = 1500}

\vspace{4pt}
\begin{tabular}{lccc}
\hline
Method & Q1 & Median & Q3 \\
\hline
Preference Only & 0.21 & 0.39 & 0.50 \\
LZR+24          & 0.00 & 0.06 & 0.21 \\
Ours            & 0.00 & 0.05 & 0.36 \\
\hline
\end{tabular}
\end{minipage}

\caption{Probability of correctly incorrectly identifying the best arm summarized across participants (Q1/Median/Q3) for the three total sample budgets.}
\label{tab:foodrisk_bai_quartiles_split}
\end{table}

\end{document}